\newtheorem{thm}{Theorem}
\newtheorem{lma}[thm]{Lemma}
\newtheorem{assump}[thm]{Assumption}
\newtheorem{proof}{Proof}
\title{Multi-level Cross-modal Alignment for Image Clustering}
\author {
    % Authors
    Liping Qiu\equalcontrib,
    Qin Zhang\equalcontrib,
    Xiaojun Chen\footnote{Corresponding author.},
    Shaotian Cai
}
\begin{document}
\maketitle
\begin{abstract}

Recently, the cross-modal pretraining model has been employed to produce meaningful pseudo-labels to supervise the training of an image clustering model. However, numerous erroneous alignments in a cross-modal pretraining model could produce poor-quality pseudo labels and degrade clustering performance. To solve the aforementioned issue, we propose a novel \textbf{Multi-level Cross-modal Alignment} method to improve the alignments in a cross-modal pretraining model for downstream tasks, by building a smaller but better semantic space and aligning the images and texts in three levels, i.e., instance-level, prototype-level, and semantic-level. Theoretical results show that our proposed method converges, and suggests effective means to reduce the expected clustering risk of our method. Experimental results on five benchmark datasets clearly show the superiority of our new method.

\end{abstract}
\section{Introduction}
\label{sec:intro}

Image clustering which groups images into different clusters without labels is an essential task in unsupervised learning. Many methods are proposed to utilize the large-scale pre-training models such as Resnet~\cite{he2016deep} or ViT~\cite{dosovitskiy2020image} to extract high-quality representations for image clustering~\cite{ji2019invariant,li2021contrastive,zhong2021graph,wu2019deep,gansbeke2020scan,dang2021nearest}. Then, to unsupervised classification models, multiple indirect loss functions are used  (\textit{e.g.} sample relations~\cite{chang2017deep}, invariant information ~\cite{ji2019invariant,li2021contrastive}, mutual information~\cite{wu2019deep} and entropy~\cite{huang2020deep,gansbeke2020scan,li2021contrastive}. However, as pointed out in~\cite{cai2022semantic}, the aforementioned techniques have difficulties in handling examples that are semantically different but visually comparable by focusing only on images.

 \begin{figure}[!htb]
     \centering
     \includegraphics[width=0.9\columnwidth]{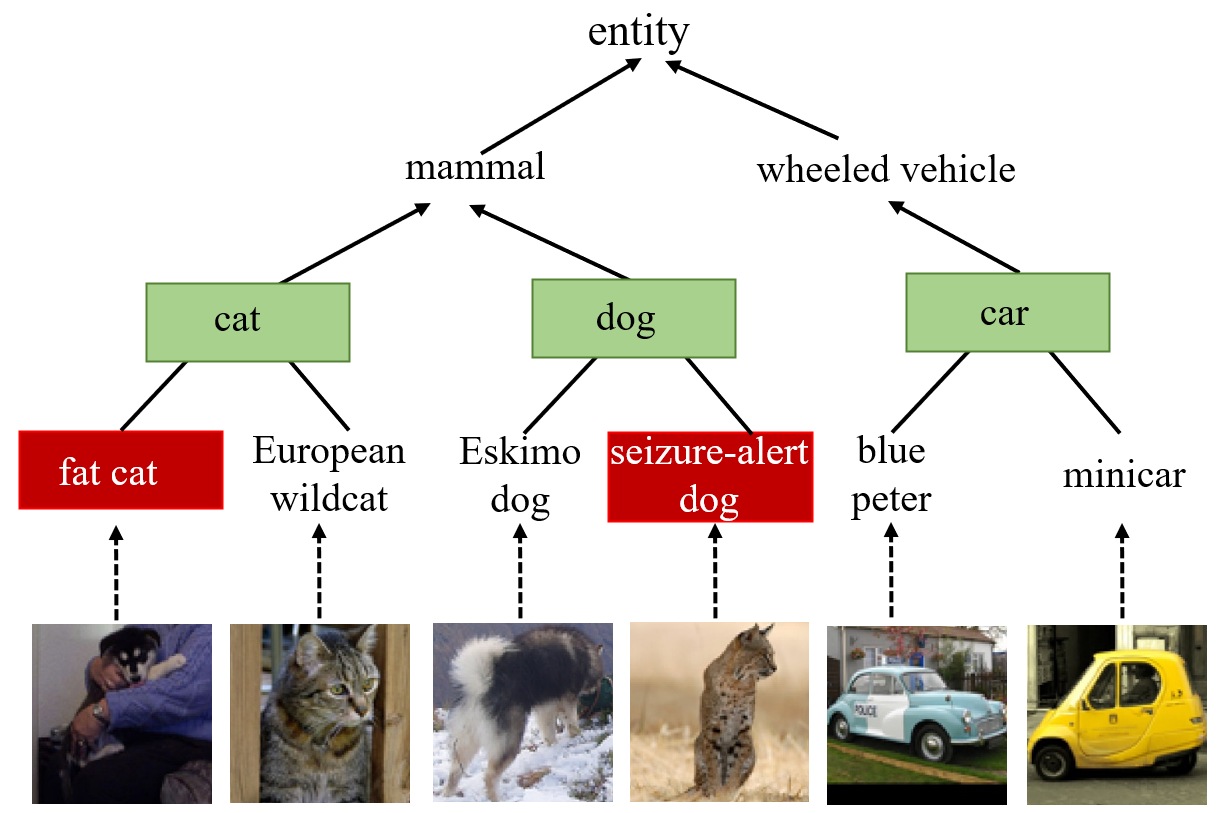}
     \caption{The nearest noun (selected from WorldNet~\cite{miller1995wordnet}) for the images in STL10, where the image and text embeddings are obtained via CLIP~\cite{zhou2021learning}. The green words correspond to the correct alignments, while the red words indicate incorrect alignments. }    
     \label{fig:hierarchical_structure}
 \end{figure}

Recently, many vision-language pre-training (VLP) models have been developed to align images and texts into a unified semantic space~\cite{li2019visualbert,chen2020uniter,ramesh2021zero,li2020unimo,radford2021learning,jia2021scaling}. To utilize the VLP models for image clustering, Cai et al.~\cite{cai2022semantic} proposed to use CLIP~\cite{radford2021learning} to produce meaningful pseudo-labels and achieved significant improvements on a wide range of datasets in comparison with conventional image clustering methods. Li et al.~\cite{li2022masked} also used CLIP for a zero-shot image classification task. The success of these methods suggests a promising direction for image clustering. However, as shown in Figure~\ref{fig:hierarchical_structure}, there are instances where the alignments between images and texts in CLIP may be incorrect for downstream tasks, resulting in substandard pseudo-labels and poor clustering performance. SIC~\cite{cai2022semantic} simply uses CLIP to obtain the embeddings of images and texts and cannot deal with incorrect alignments. Although MUST~\cite{li2022masked} strives to optimize the image encoder in CLIP, its efficiency is hampered by using the pretraining task to update the image encoder, resulting in a slow process.

To address the above problem, we propose a novel method, namely \textbf{Multi-level Cross-modal Alignment (MCA)}, an efficient way to improve the alignments between images and texts in CLIP for clustering tasks. In general, our main contributions are as follows:
\begin{itemize}
    \item We propose to use the hierarchical structure in WordNet (see Figure~\ref{fig:hierarchical_structure}) to filter irrelevant words and construct a smaller but better semantic space, thus reducing the affection of unrelated nouns for clustering. Our experimental findings demonstrate that it can reduce the number of words by up to 60\% and significantly enhance clustering performance compared to SIC.
    \item  We propose to optimize both image and text embeddings for downstream tasks, by aligning the images and texts at three levels, i.e., instance-level, prototype-level, and semantic-level. Our proposed method can better fix the incorrect alignments in CLIP for downstream tasks compared to SIC and MUST.
    \item Theoretical findings demonstrate that our proposed method converges at a sublinear rate and offers effective strategies for lowering the expected clustering risk of our method. These findings will provide valuable guidance for the design of new image clustering methods.
    \item  Experimental results on five benchmark datasets clearly show the superiority of our new method, especially when dealing with complex clusters. 
\end{itemize}

\section{Related Work}
\label{sec:related}

Early deep clustering methods simply combine representation learning and shallow clustering~\cite{xie2016unsupervised,Yang2017Towards,Tian2017Deep,shaham2018spectral}. With the rapid development of the pre-training paradigm, many methods employ large-scale pre-training models such as Resnet~\cite{he2016deep} or ViT~\cite{dosovitskiy2020image} to extract high-quality representations and train a classification model, by maximizing the consistency between each image and its augmentations/neighbors~\cite{ji2019invariant,li2021contrastive,zhong2021graph,wu2019deep,gansbeke2020scan,zhong2021graph,dang2021nearest}, or generating pseudo-labels~\cite {wu2019deep,gansbeke2020scan}. However, as pointed out in~\cite{cai2022semantic}, it is challenging for the aforementioned techniques to handle examples that are semantically different but visually comparable by only accessing visual information in images.

Cross-modal clustering has made significant progress in recent years, which usually learns a shared subspace such that the mutual agreement between multiple modalities is maximized, by Canonical Correlation
Analysis (CCA)~\cite{gao2020cross} or mutual information optimization~\cite{mao2021deep}. However, these methods require image-text pairs as input, which may be cost-intensive to collect in real applications. 

Recently, vision-language pre-training (VLP) models that align multi-modal data in common feature space by different pre-training tasks have been proposed. For example, VisualBert~\cite{li2019visualbert}, UNITER~\cite{chen2020uniter} and DALL-E~\cite{ramesh2021zero} use language-based training strategies, including mask LM (Language Modeling) such as Masked Language/Region Modeling, or autoregressive LM such as image caption and text-grounded image generation. CLIP~\cite{radford2021learning} and ALIGN~\cite{jia2021scaling} utilize cross-modal contrastive learning to align the visual and textual information into a unified semantic space.

Since VLP captures the relationships among images and texts (low-level semantics), it is natural to utilize VLP models to compensate for the semantic information for better image clustering. Cai et al.~\cite{cai2022semantic} proposed to use CLIP~\cite{radford2021learning} to generate meaningful pseudo-labels for image clustering. Li et al.~\cite{li2022masked} also proposed to use CLIP for zero-shot image classification tasks.

\section{Notation and Problem Definition}
\label{sec:notation}

Suppose we have an image dataset $\mathcal{X}= \{x_1, x_2, \dots, x_n\}$ with $n$ instances sampled i.i.d. from input space $\mathcal{D} $, we can obtain the embeddings of these images as $\mathcal{U} = \{\mathbf{u}_1, \mathbf{u}_2, \dots, \mathbf{u}_n\}$ where $\mathbf{u}_i=e_I(x_i)\in\mathbb{R}^{d\times 1}$ is obtained via the image encoder $e_I(.)$ of CLIP, where $d$ is the embedding dimension. To capture the semantic meaning of these images, we first introduce a noun vocabulary $\mathcal{T} = \{t_1, t_2, \dots, t_{m}\}$ that includes $m$ noun phrases sampled from WordNet~\cite{miller1995wordnet}. Then we can obtain the embeddings of these $m$ words as $\mathcal{V} = \{\mathbf{v}_1, \mathbf{v}_2, \dots, \mathbf{v}_{m}\}$ where $\mathbf{v}_i=e_{T}(s_i)\in\mathbb{R}^{d\times 1}$, 
$s_i$ is a sentence like ``$\texttt{A photo of a } \{t_i\}$'' and $e_{T}(.)$ is the text encoder of CLIP. Let $c$ be the number of categories, our goal is to group the images in $\mathcal{X}$ into $c$ clusters with the help of CLIP. Let $f_{I}(e_I(\mathcal{X});\phi)$ denotes the image classification network with parameters $\phi$ that maps an image $x_i$ into a soft cluster assignment probability vector $\mathbf{q}_{i}\in\mathbb{R}^{c\times 1}$, and $f_{S}(e_{T}(\mathcal{T});\theta)$ denotes the text classification network with parameters $\theta$ that maps a word $t_i$ into a soft cluster assignment probability vector $\mathbf{p}_{i}\in\mathbb{R}^{c\times 1}$. Notably, $e_{I}$ and $e_{T}$ in CLIP are kept frozen during the training process. 

\section{Method}

\begin{figure*}[t]
        \centering
        \includegraphics[width=0.8\linewidth]{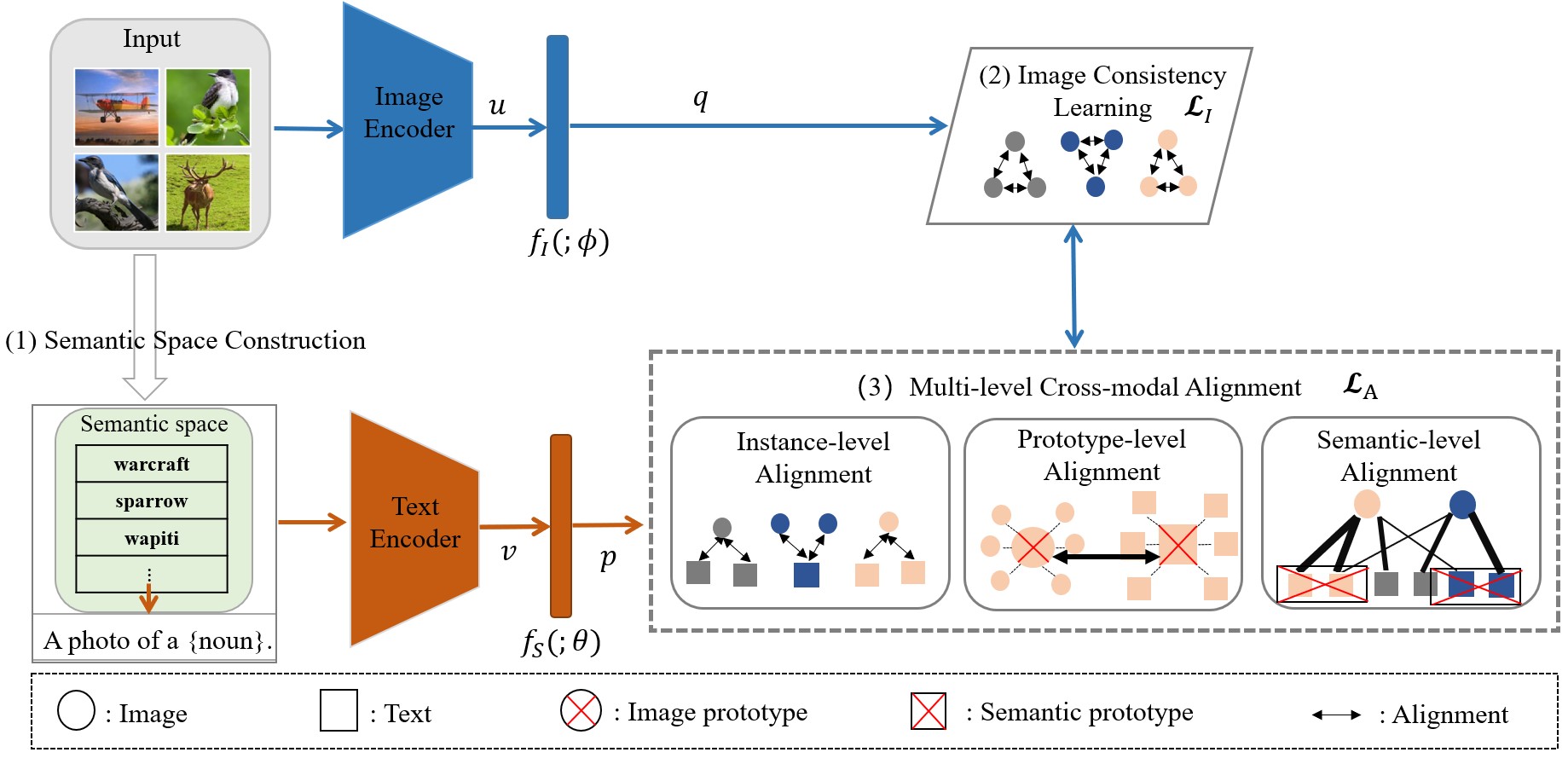}
        \caption{The framework of MCA consists of three parts: (1) Semantic space construction. (2) Image consistency learning (3) Multi-level cross-modal alignment. The thickness of lines in adaptive instance-level alignment reflects the magnitude of attention scores.}
        \label{fig:framework}
\end{figure*}

In this paper, we propose our new method which is shown in Figure~\ref{fig:framework}. This new method mainly consists of three components: 1) \textbf{Semantic space construction} builds a proper semantic space $\mathcal{T}$, 2) \textbf{Image consistency learning} performs the consistency learning in image space, and 3) \textbf{Multi-level cross-modal alignment} aligns images and texts at three distinct levels: instance-level, prototype-level, and semantic-level. This comprehensive alignment approach significantly improves the clustering performance on downstream tasks.

\subsection{Semantic Space Construction}

Constructing a proper semantic space $\mathcal{T}$ from WordNet~\cite{miller1995wordnet} such that the images can be well represented by the words in $\mathcal{T}$ is very important for image clustering, because too small $\mathcal{T}$ may lose important relevant words but too large $\mathcal{T}$ may contain too many irrelevant noisy words. In this step, we first build a candidate semantic space $\mathcal{W}$ as 82,000 nouns in the WordNet dataset~\cite{miller1995wordnet}. Considering an image dataset usually makes up a small part of semantics, we propose a two-step filtering strategy to construct a proper semantic space for an image dataset: 1) \textbf{Uniqueness-based filtering} selects $\gamma_r$ nearest words for each of $c$ image cluster centers obtained by $k$-means of the most unique nouns 
whose uniqueness scores~\cite{cai2022semantic} are greater than a given hyperparameter $\rho_u$. 2) \textbf{Hierarchy-based filtering} employs the hierarchical structure in WordNet to further filter $\mathcal{W}_c$ to form the final semantic space $\mathcal{T}$. Let $\mathcal{T}=\emptyset$. Given an image $x_i$, we find its nearest noun $w_i\in\mathcal{W}_c$ and search its hierarchical structure from WordNet~\cite{miller1995wordnet} to form a hierarchical semantic tree. In general, the words in the lower layers provide more fine-grained information for distinguishing the images, while the words in the higher layers may be useless for clustering. Figure~\ref{fig:hierarchical_structure} shows, for example, ``mammal'' is the common parent of ``dog'' and ``cat'' and cannot distinguish the images in ``dog'' or ``cat''. Therefore, we propose a hierarchy-based filtering strategy that filters out the top $\gamma_{h}$ levels (excluding root node) and add each of the remaining words into $\mathcal{T}$ if it is also in $\mathcal{W}_c$.

\subsection{Image Consistency Learning}
Intuitively, an image and its nearest images may have similar soft cluster assignments. Therefore, we propose the following loss function for image consistency learning:

\begin{equation}
\label{loss_ic}
\begin{aligned}
\mathcal{L}_{I}(f_{I}(e_{I}(\mathcal{X});\phi)) =& -\frac{1}{n} \sum_{i=1}^{n} \sum_{j=rn(\mathcal{N}_{k_{I}}^{I}(x_i))} \log \mathbf{q}_{i}^{T} \mathbf{q}_{j}\\
&-\eta\sum_{l=1}^{c}\bar{q}_{l}\log \bar{q}_{l}
\end{aligned}
\end{equation}
where $\mathcal{N}_{k_{I}}^{I}(x_i)$ contains $k_{I}$ nearest images of $x_i$ and $rn(\mathcal{N}_{k_{I}}^{I}(x_i))$ randomly selects a sample from $\mathcal{N}_{k_{I}}^{I}(x_i)$. The second item is the popular negative entropy loss for preventing trivial solutions that most samples belong to a small proportion of clusters, where
$\bar{q}_{l}=\frac{\sum_{i=1}^{n} q_{il}}{n}$ is the average cluster assignment. $\eta$ is a trade-off parameter.

% \subsection{Semantic Pseudo Label Generation}
\subsection{Multi-level Cross-modal Alignment}
\label{sec:align}

When using a cross-modal pretraining model for image clustering, the main challenge is to rectify incorrect alignments between images and words in image data. In this paper, we propose a novel \textbf{Multi-level Cross-modal Alignment} method for this task, which is shown in Figure~\ref{fig:framework}. Specifically, our method employs a three-level alignment approach. Firstly, at the \textbf{Instance-level Alignment}, each image is aligned with its neighboring texts. Secondly, the \textbf{Prototype-level Alignment} aligns each image prototype with its nearest text prototype. Lastly, at the \textbf{Semantic-level Alignment}, each image is aligned with its neighboring texts in the semantic space. The detailed descriptions of these three alignment processes are provided below.

\noindent\textbf{Instance-level alignment}: Given an image $x_i$ and its neighboring texts $\mathcal{N}_{k_{S}}^{S}(x_i)$, we propose the following contrastive loss function to facilitate the alignment:
\begin{equation}
\begin{split}
\footnotesize
\label{loss_align_si}
    \mathcal{L}_{ia}=-\frac{1}{n} \sum_{i=1}^{n} \sum_{j=rn(\mathcal{N}_{k_{S}}^{S}(x_i))}    \log\frac{exp({\mathbf{q}_{i}}^T\mathbf{p}_{j}/\tau_{ia})}{\sum_{l=1,l\neq j}^{m}  exp({\mathbf{q}_{i}}^{T}\mathbf{p}_{l}/\tau_{ia})}
\end{split}
\end{equation}
where $\tau_{ia}$ is a tempreature parameter.

\noindent\textbf{Prototype-level alignment}: Instancel-level alignment may be affected by noisy neighborhood relationships, so we further propose to align images and texts at prototype level which is more robust to noisy texts. We first compute an image prototype set $\mathcal{H}^{I}$, where $\mathbf{h}_{l}^{I}\in \mathcal{H}^{I}$ is computed as $\mathbf{h}_{l}^{I}=\frac{1}{\left\|\mathbf{q}_{l}\right\|_{1}} \sum_{i=1}^{n} q_{il} \mathbf{u}_{i}$. Then, for each image prototype $\mathbf{h}_{l}^{I}\in \mathcal{H}^{I}$, we can identify the word in $\mathcal{T}$ that is closest to $\mathbf{h}_{l}^{I}$ and finally construct a prototype set $\mathcal{H}^{S}$. To further improve the prototypes in $\mathcal{H}^{S}$, we find $k_{p}$ nearest neighborhoods for each $\mathbf{h}_{l}^{S}\in\mathcal{H}^{S}$ to compute the prototype of these neighborhoods and replace $\mathbf{h}^{S}$ to update $\mathcal{H}^{S}$.

Finally, to align images and texts at the prototype level, we propose the following loss function:
\begin{equation}
\begin{split}
\footnotesize
\label{loss_align2}
    \mathcal{L}_{pa}=-\frac{1}{c}\sum_{j=1}^{c}\log\frac{exp(f_{I}(\mathbf{h}_{j}^{I},\phi)^T f_{S}(\mathbf{h}_{j}^{S},\theta)/\tau_{pa})}{\sum_{l=1,l\neq j}^{c}exp(f_{I}(\mathbf{h}_{j}^{I},\phi)^T f_{S}(\mathbf{h}_{l}^{S},\theta)/\tau_{pa})}
\end{split}
\end{equation}
where $\tau_{pa}$ is a tempreature parameter.

\noindent\textbf{Semantic-level alignment}: Given an image $x_i$ and its neighbouring texts $\mathcal{N}_{k_{S}}^{S}(x_i)$, we let $\mathbf{v}_j=e_{T}(s_j)$ be the the embeddings of $t_j\in\mathcal{N}_{k_{S}}^{S}(x_i)$ and $s_j$ is a sentence like ``$\texttt{A photo of a } \{t_j\}$'', and $\mathbf{p}_j=f_{S}(\mathbf{v}_j,\theta)$. It makes sense that the neighboring texts of an image can help determine the cluster assignment of this image. Note that the alignment relationships between images and texts may vary in different downstream tasks, we propose to use the attention mechanism~\cite{vaswani2017attention} to quantify the correlations between an image and its neighboring texts. Specifically, we compute $\mathbf{p}^\prime_{i}$ as the weighted combination of the neighboring texts' assignments as:

\begin{equation}
\label{def_attention}
\begin{aligned}
\mathbf{p}^\prime_{i}=&f_{A}(\mathbf{u}_i,\mathbf{V}^{S},\mathbf{P}^{S};\mathbf{W}^{I},\mathbf{W}^{S})\\
&=\sum_{j\in \mathcal{N}_{k_{S}}^{S}(x_i)}softmax((\mathbf{W}^{I}\mathbf{u}_i)^{T}\mathbf{W}^{S}\mathbf{v}_{j})\mathbf{p}_{j}
\end{aligned}
\end{equation}
where $f_{A}(\cdots;\mathbf{W}^{I},\mathbf{W}^{S})$ is the attention network to quantify the correlations between an image and its neighboring texts and $\mathbf{W}^{I},\mathbf{W}^{S}\in\mathbb{R}^{d\times d}$ are two parameter matrices. 
Then we use the argmax operation to generate one-hot pseudo-label for $x_i$ as:
\begin{equation}\label{def_pseudo_labels}
    \mathbf{q}^\prime_{i} = \operatorname{one-hot}\left(c,\operatorname{argmax}_{l} p^\prime_{il} \right)
\text{}\end{equation}
where $\operatorname{one-hot}(c,l)$ generates a $c$-bit one-hot vector with the $l$-th element as $1$.

Here, $\mathbf{q}^\prime_{i}$ can be considered as the semantic cluster assignment of $x_i$. Therefore, we perform alignment for each image $x_i$ by aligning the semantic cluster assignment $\mathbf{q}^\prime_{i}$ to the image cluster assignment $\mathbf{q}_i$ with the following loss function:
\begin{equation}
\label{loss_alignment_ai}
\mathcal{L}_{sa} = \frac{1}{n}\sum_{i=1}^{n} CE(\mathbf{q}_i,\mathbf{q}_i^{\prime})    
\end{equation}
where $CE(.)$ is the cross entropy function.

\noindent\textbf{The overall alignment loss function} is: 
\begin{equation}
\label{loss_align}
\begin{aligned}
& \mathcal{L}_{A}(f_{I}(e_I(\mathcal{X});\phi), f_{S}(e_{T}(\mathcal{T});\theta), f_{A}(\cdots;\mathbf{W}^{I},\mathbf{W}^{S})) \\
& =\mathcal{L}_{ia}+\lambda_{pa}\mathcal{L}_{pa}+\lambda_{sa}\mathcal{L}_{sa}
\end{aligned}
\end{equation}
where $\lambda_{pa}$ and $\lambda_{sa}$ are two trade-off parameters.

\subsection{The Overall Objective}

We denote $\mathcal{L}_{A}(f_{I}(e_I(\mathcal{X});\phi), f_{S}(e_{T}(\mathcal{T});\theta), f_{A}(\cdots;\mathbf{W}^{I},\mathbf{W}^{S}))$ as $\mathcal{L}(g(\mathcal{S};\varphi))$ for simplicity, where $\mathcal{S}=(\mathcal{X},\mathcal{T})$, $\varphi=(\phi,\theta,\mathbf{W}^{I},\mathbf{W}^{S})$ and $g$ consists of $f_{I}$, $f_{S}$ and $f_{A}$.
Finally, the overall objective can be formulated as
\begin{equation}
\label{loss_overall}
\small
\begin{split}
    \mathcal{L}(g(\mathcal{S};\varphi))    =& \mathcal{L}_{I}(g(\mathcal{S};\varphi))
    +\lambda_{a}\mathcal{L}_{A}(g(\mathcal{S};\varphi))
\end{split}
\end{equation}
where $\lambda_{a}$ is a trade-off parameter.

\subsection{Theoretical Analysis}
In this part, we first analyze the convergence of our proposed method and then analyze its expected clustering risk. We first introduce the following assumptions:
\begin{assump}\label{as:knn_I}
\textbf{Image Neighborhood Consistency Bound:}
    $\forall x_i \in \mathcal{X}$, $x_j \in \mathcal{N}_{k_{I}}^{I}(x_i)$, $\mathbf{q}_i^{T}\mathbf{q}_j \in[\mu_{I},1].$ 
\end{assump}
\begin{assump}\label{as:cross_knn}
\textbf{Cross-modal Neighborhood Consistency Bound:}
    $\forall x_i \in \mathcal{X}$, $t_j \in \mathcal{N}_{k_S}^{S}(x_i)$, $\mathbf{q}_i^{T}\mathbf{p}_j \in[\mu_{C},1].$ 
\end{assump}
\begin{assump}\label{as:pc}
\textbf{Image Prediction Confidence Bound:}
    $\forall x_i \in \mathcal{X}$, $\|\mathbf{q}_i\|_{\infty} \leq \mu_p.$
\end{assump}
\begin{assump}\label{as:num_neighbor}
\textbf{Image Neighborhood Imbalance Bound:}
    $\forall x_i \in \mathcal{X}$, $x_i$ is in at most $k_{I}'$ samples' (in $\mathcal{X}$) nearest neighborhoods.
\end{assump}

We first give the following theorem demonstrating that the optimization algorithm theoretically converges to the local optima in a sublinear speed.

\begin{thm}
\label{thm:conv}
   Suppose that $g(\mathcal{S};\varphi)$ is twice differential with bounded gradients and Hessians, and $\mathcal{L}(g(\mathcal{S};\varphi))$ has $L$-Lipschitz continuous gradient. Suppose that the learning rate $\eta_{\varphi}$ satisfies $\eta_{\varphi}=\min \{1, \frac{C}{\sqrt{T}}\}$ for some $C>0$, such that $\frac{C}{\sqrt{T}} \geq L$. Then our proposed method can achieve $\min _{0 \leq t \leq T} \mathbb{E}\left[\left\|\nabla \mathcal{L}(g(\mathcal{S});\varphi^{(t)})\right\|_{2}^{2}\right] \leq \epsilon$ in $\mathcal{O}\left(1 / \epsilon^{2}\right)$ steps, where $\epsilon$ is a very small positive real number.
\end{thm}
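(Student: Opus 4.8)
The plan is to treat Theorem~\ref{thm:conv} as a standard non-convex stochastic first-order convergence result and to show that the objective $\mathcal{L}(g(\mathcal{S};\varphi))$ satisfies the hypotheses needed for the classical SGD analysis of smooth non-convex functions. First I would set up the descent machinery: since $\mathcal{L}$ has an $L$-Lipschitz continuous gradient, for any iterates $\varphi^{(t)}$ and $\varphi^{(t+1)} = \varphi^{(t)} - \eta_\varphi \, \hat g^{(t)}$ (with $\hat g^{(t)}$ the stochastic gradient estimate from the sampled minibatch / random neighbor selection $rn(\cdot)$), the descent lemma gives
\begin{equation}
\mathcal{L}(g(\mathcal{S};\varphi^{(t+1)})) \le \mathcal{L}(g(\mathcal{S};\varphi^{(t)})) - \eta_\varphi \langle \nabla\mathcal{L}^{(t)}, \hat g^{(t)}\rangle + \tfrac{L\eta_\varphi^2}{2}\|\hat g^{(t)}\|_2^2 . \nonumber
\end{equation}
Taking conditional expectation and using unbiasedness of the stochastic gradient (the random neighbor $j = rn(\mathcal{N}(x_i))$ is sampled uniformly, so in expectation it reproduces the full average over neighbors, hence $\mathbb{E}[\hat g^{(t)} \mid \varphi^{(t)}] = \nabla\mathcal{L}^{(t)}$) collapses the cross term to $-\eta_\varphi \|\nabla\mathcal{L}^{(t)}\|_2^2$.

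Next I would bound the second-moment term $\mathbb{E}\|\hat g^{(t)}\|_2^2 \le \sigma^2$ for a finite constant $\sigma^2$. This is where the boundedness assumptions enter: since $g$ is twice differentiable with bounded gradients and Hessians, and the per-sample losses $\mathcal{L}_I$, $\mathcal{L}_{ia}$, $\mathcal{L}_{pa}$, $\mathcal{L}_{sa}$ are all built from softmax/cross-entropy compositions of these bounded maps, each stochastic gradient summand is bounded in norm; Assumptions~\ref{as:knn_I}--\ref{as:num_neighbor} control the combinatorial factors (number of neighbor terms, how many times a sample appears as someone's neighbor, and the confidence level $\mu_p$) so that no sum blows up. I would then rearrange the descent inequality to
\begin{equation}
\eta_\varphi \|\nabla\mathcal{L}^{(t)}\|_2^2 \le \mathcal{L}(g(\mathcal{S};\varphi^{(t)})) - \mathbb{E}\big[\mathcal{L}(g(\mathcal{S};\varphi^{(t+1)}))\big] + \tfrac{L\eta_\varphi^2 \sigma^2}{2}, \nonumber
\end{equation}
sum over $t = 0,\dots,T-1$, telescope the function-value differences (bounded below, say by $\mathcal{L}^\ast$), and divide by $T\eta_\varphi$ to get
\begin{equation}
\min_{0\le t\le T}\mathbb{E}\|\nabla\mathcal{L}(g(\mathcal{S};\varphi^{(t)}))\|_2^2 \;\le\; \frac{1}{T}\sum_{t=0}^{T-1}\mathbb{E}\|\nabla\mathcal{L}^{(t)}\|_2^2 \;\le\; \frac{\mathcal{L}(g(\mathcal{S};\varphi^{(0)})) - \mathcal{L}^\ast}{T\eta_\varphi} + \frac{L\eta_\varphi \sigma^2}{2}. \nonumber
\end{equation}
Finally I would substitute $\eta_\varphi = \min\{1, C/\sqrt{T}\}$; for $T$ large enough both terms are $\mathcal{O}(1/\sqrt{T})$, so the RHS is $\mathcal{O}(1/\sqrt{T})$, and setting this $\le \epsilon$ yields $T = \mathcal{O}(1/\epsilon^2)$, which is the claimed sublinear rate. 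The condition $C/\sqrt{T} \ge L$ is exactly what makes $\eta_\varphi$ small enough relative to the smoothness constant for the $\tfrac{L\eta_\varphi^2}{2}$ term not to dominate.

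The main obstacle I expect is the careful verification of the bounded-variance / bounded-gradient claim for the specific composite loss: one has to check that differentiating through the attention softmax in $f_A$, the prototype construction $\mathbf{h}^I_l = \tfrac{1}{\|\mathbf{q}_l\|_1}\sum_i q_{il}\mathbf{u}_i$ (which involves a normalization that could be singular if a cluster empties out — Assumption~\ref{as:pc} together with the entropy regularizer is what keeps $\|\mathbf{q}_l\|_1$ bounded away from $0$ in the relevant regime), and the argmax-based pseudo-label $\mathbf{q}'_i$ (which is piecewise constant, hence its "gradient" contribution is treated as a stop-gradient target, so $\mathcal{L}_{sa}$ is differentiated only through $\mathbf{q}_i$) all produce uniformly bounded stochastic gradients. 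Assembling these per-term bounds into a single $\sigma^2$, while tracking the constants supplied by Assumptions~\ref{as:knn_I}--\ref{as:num_neighbor}, is the technically delicate part; the rest is the textbook non-convex SGD telescoping argument.
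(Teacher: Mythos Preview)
Your proposal is correct and follows essentially the same route as the paper: descent lemma from $L$-smoothness, unbiasedness of the minibatch gradient, a bounded-variance assumption, telescoping over $t$, and then substituting $\eta_\varphi=\min\{1,C/\sqrt{T}\}$ to obtain the $\mathcal{O}(1/\sqrt{T})$ rate. The only notable difference is that you over-engineer the variance bound: the paper does \emph{not} invoke Assumptions~\ref{as:knn_I}--\ref{as:num_neighbor} here at all (those are reserved for Theorem~\ref{thm:risk}); it simply writes the stochastic gradient as $\nabla\mathcal{L}(\varphi^{(t)})+\zeta^{(t)}$ with $\mathbb{E}[\zeta^{(t)}]=0$ and posits $\|\zeta^{(t)}\|^2\le\sigma^2$ directly from the ``bounded gradients'' hypothesis in the theorem statement, so your detailed worries about the attention softmax, the prototype normalization, and the stop-gradient on $\mathbf{q}'_i$ are unnecessary for this particular result.
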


Next, we analyze the ability of our method to achieve cluster performance on unseen data. Let $\hat{\mathcal{L}}_{n}(g)$ be the empirical clustering risk of MCA and its expectation can be denoted as $\mathcal{L}(g)$. The family of $g$ is defined as $\mathcal{G}$. Then we can obtain the following theorem by analyzing the generalization bound of our proposed method.

\begin{thm}
\label{thm:risk}
Suppose $f_{I}(.;\phi)$ is Lipschitz smooth with constant $L_{I}$ and $\|\mathbf{u}\|_{\infty}\leq M_{u}$. Suppose $\beta(\frac{1}{n}\sum_{i=1}^{n}q_{il}\mathbf{u}_{i})=f_{I}(\mathbf{h}^{I},\phi)^Tf_{S}(\mathbf{h}^{S},\theta)$ is $L_{IS}$-Lipschitz continious, where $\mathbf{h}_{l}^{I}$ and $\mathbf{h}^{S}$ are computed according to the method in Section~\ref{sec:align}. For any $0 < \delta < 1$, we can guarantee that with a probability of at least $1-\delta$ for any $g\in\mathcal{G}$, the following inequality holds.

\begin{equation*}
    \mathcal{L}(g) \leq \widehat{\mathcal{L}}_{n}(g) + \frac{\tilde{c}_1}{\sqrt{n}} + \tilde{c}_2\sqrt{\frac{1}{2n}\log \delta^{-1}}+\frac{2dL_{IS}M_{u}}{n\tau_{pa}}.
\end{equation*}
\textit{where $\tilde{c}_1=2\mu_{I}^{-1}+2\eta C+2\lambda_{a}m/\tau_{ia}+
2\lambda_{a}\lambda_{pa}dL_{IS}M_{u}/\tau_{pa}+2\lambda_{a}\lambda_{sa}c\log\mu_p^{-1}$ and $\tilde{c}_2= (2+2k_{I}')\log\mu_{I}^{-1} + \eta C +\frac{2\lambda_{a}(1-\mu_{C})}{\tau_{ia}}+\lambda_{a}\lambda_{pa}\frac{dcL_{I}M_{u}^2}{\tau_{pa}}+2\lambda_{a}\lambda_{sa}c\log\mu_p^{-1}$ are constants dependent on $\{n, m, \mu_{I}, \mu_{C}, \mu_{p}, k_{I}', c, L_{IS}, L_{I}, M_{u}, d, C\}$. $C$ is a constant.} 
\end{thm}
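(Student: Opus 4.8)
\textbf{Proof proposal for Theorem~\ref{thm:risk}.}

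The plan is to bound the generalization gap $\mathcal{L}(g) - \widehat{\mathcal{L}}_n(g)$ via a standard Rademacher complexity argument applied to the total loss $\mathcal{L}(g(\mathcal{S};\varphi)) = \mathcal{L}_I + \lambda_a(\mathcal{L}_{ia} + \lambda_{pa}\mathcal{L}_{pa} + \lambda_{sa}\mathcal{L}_{sa})$, decomposing the work term by term. First I would invoke McDiarmid's inequality: I need to show that each summand of the empirical risk, viewed as a function of a single data point $x_i$ (or pair $(x_i,t_i)$), has bounded differences. This is where Assumptions~\ref{as:knn_I}--\ref{as:num_neighbor} enter. For $\mathcal{L}_I$, changing one image $x_i$ perturbs the $\log \mathbf{q}_i^T\mathbf{q}_j$ terms in which $x_i$ appears either as the anchor or as a neighbor; Assumption~\ref{as:knn_I} gives $\mathbf{q}_i^T\mathbf{q}_j \geq \mu_I$ so each log term is bounded in $[\log\mu_I, 0]$, and Assumption~\ref{as:num_neighbor} bounds the number of anchors having $x_i$ as a neighbor by $k_I'$, yielding a bounded-difference constant on the order of $(2 + 2k_I')\log\mu_I^{-1}$ plus the $\eta C$ contribution from the negative-entropy term (bounded since each $\bar q_l \in [0,1]$). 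Analogous reasoning handles $\mathcal{L}_{ia}$ using Assumption~\ref{as:cross_knn} (the contrastive log-ratio is controlled because the numerator inner product is at least $\mu_C$, giving the $(1-\mu_C)/\tau_{ia}$ scaling) and $\mathcal{L}_{sa}$ using Assumption~\ref{as:pc} (the cross-entropy $CE(\mathbf{q}_i,\mathbf{q}_i')$ against a one-hot target is at most $\log\mu_p^{-1}$ since $\|\mathbf{q}_i\|_\infty \leq \mu_p$). Collecting these bounded-difference constants produces the coefficient $\tilde c_2$ multiplying $\sqrt{\frac{1}{2n}\log\delta^{-1}}$.

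Next I would bound the expected supremum $\mathbb{E}\,\sup_{g\in\mathcal{G}}(\mathcal{L}(g) - \widehat{\mathcal{L}}_n(g))$ by (twice) the Rademacher complexity of the loss class, then peel off the Lipschitz loss functions via Talagrand's contraction lemma to reduce to the Rademacher complexity of the underlying score classes $\{(\mathbf{q}_i, \mathbf{p}_i)\}$. Each of the four loss pieces contributes a term scaling like its Lipschitz constant divided by its temperature times $1/\sqrt{n}$: $2\mu_I^{-1}$ from the image-consistency term (the $\log$ is $\mu_I^{-1}$-Lipschitz on $[\mu_I,1]$), $2\eta C$ from the entropy regularizer, $2\lambda_a m/\tau_{ia}$ from the $m$-way contrastive denominator in $\mathcal{L}_{ia}$, $2\lambda_a\lambda_{sa}c\log\mu_p^{-1}$ from the $c$-dimensional cross-entropy, and a prototype term $2\lambda_a\lambda_{pa}dL_{IS}M_u/\tau_{pa}$. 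These assemble into $\tilde c_1/\sqrt{n}$.

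The genuinely nonstandard piece — and the main obstacle — is the prototype-level term $\mathcal{L}_{pa}$, because the prototypes $\mathbf{h}_l^I = \frac{1}{\|\mathbf{q}_l\|_1}\sum_i q_{il}\mathbf{u}_i$ are themselves empirical averages over the sample, so $\mathcal{L}_{pa}$ is not a simple sum of i.i.d. per-example terms and the usual decomposition does not directly apply. Here I would use the hypotheses that $f_I(\cdot;\phi)$ is $L_I$-Lipschitz smooth with $\|\mathbf{u}\|_\infty \leq M_u$ and that $\beta(\cdot) = f_I(\mathbf{h}^I,\phi)^T f_S(\mathbf{h}^S,\theta)$ is $L_{IS}$-Lipschitz in the prototype argument $\frac{1}{n}\sum_i q_{il}\mathbf{u}_i$: perturbing one $x_i$ changes this argument by at most $O(M_u/n)$ in each of $d$ coordinates, so $\beta$ changes by at most $O(dL_{IS}M_u/n)$, and the softmax/log on top (Lipschitz with constant $1/\tau_{pa}$) gives a total perturbation $O(dL_{IS}M_u/(n\tau_{pa}))$ — this is precisely the deterministic additive term $\frac{2dL_{IS}M_u}{n\tau_{pa}}$ appearing in the bound, absorbed separately rather than through the $\sqrt n$ concentration term. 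I would treat $\mathcal{L}_{pa}$ as a near-constant (up to this $O(1/n)$ fluctuation) for the concentration step, while its contribution to the complexity term goes through the smoothness bound $dcL_IM_u^2/\tau_{pa}$ inside $\tilde c_2$. Finally, combining the McDiarmid tail, the Rademacher bound, and the prototype perturbation term, and relabeling the accumulated constants as $\tilde c_1, \tilde c_2$, yields the stated inequality with probability at least $1-\delta$; a union bound / symmetrization over $\mathcal{G}$ completes the uniform statement.
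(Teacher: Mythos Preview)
Your proposal is correct and follows essentially the same route as the paper: decompose the total loss into its constituent pieces, for each piece bound the single-sample replacement sensitivity (using Assumptions~\ref{as:knn_I}--\ref{as:num_neighbor} exactly as you describe) to obtain the McDiarmid tail coefficient $\tilde c_2$, bound the expected supremum by symmetrization to obtain $\tilde c_1/\sqrt{n}$, and handle the prototype loss $\mathcal{L}_{pa}$ separately via the $L_{IS}$-Lipschitz assumption on $\beta$ to extract the standalone $O(1/n)$ term. The only cosmetic difference is that the paper bounds the symmetrized averages directly with the Khintchine--Kahane inequality rather than invoking Talagrand contraction, but the structure and the resulting constants are the same.
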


Theorem~\ref{thm:risk} shows that our proposed method, with high probability $1-\delta$, is with a bounded expected clustering risk on the unseen data. To summarize, the expected clustering risk of MCA is theoretically guaranteed in clustering tasks. Note that the margin $\mathcal{L}(g) - \widehat{\mathcal{L}}_{n}(g)$ is inversely proportional to $\mu_{I}$ and $\mu_{C}$ which reflect the neighborhood consistency in both image domain and cross-domain, and $\mu_{p}$ which reflects the prediction confidence, indicating that improving the neighborhood consistency in both image domain and cross-domain and prediction confidence reduces the expected risk of MCA. Meanwhile, the margin $\mathcal{L}(g) - \widehat{\mathcal{L}}_{n}(g)$ is proportional to $k_{I}'$ which reflects the neighborhood overlapping in the image domain, indicating that reducing the neighborhood imbalance (e.g., by setting a smaller number of neighbors $k_{I}$ or filtering neighborhoods to reduce neighborhood imbalance) also reduces the expected risk of MCA.

\begin{table*}[h]
\centering

\resizebox{\linewidth}{!}{
\begin{tabular}{@{}|l|ccc|ccc|ccc|ccc|ccc|ccc|@{}}
\toprule
Dataset &
  \multicolumn{3}{c|}{\textbf{STL10}} &
  \multicolumn{3}{c|}{\textbf{Cifar10}} & 
  \multicolumn{3}{c|}{\textbf{Cifar100-20}} & 
  \multicolumn{3}{c|}{\textbf{ImageNet-Dogs}} & 
  \multicolumn{3}{c|}{\textbf{Tiny-ImageNet}} \\ 
\midrule
Metrics &
  \multicolumn{1}{c}{ACC} &
  \multicolumn{1}{c}{NMI} &
  \multicolumn{1}{c|}{ARI} &
  \multicolumn{1}{c}{ACC} &
  \multicolumn{1}{c}{NMI} &
  \multicolumn{1}{c|}{ARI} &
  \multicolumn{1}{c}{ACC} &
  \multicolumn{1}{c}{NMI} &
  \multicolumn{1}{c|}{ARI} &
  \multicolumn{1}{c}{ACC} &
  \multicolumn{1}{c}{NMI} &
  \multicolumn{1}{c|}{ARI} &
  \multicolumn{1}{c}{ACC} &
  \multicolumn{1}{c}{NMI} &
  \multicolumn{1}{c|}{ARI} \\ 
\midrule

$k$-means~\cite{MacQueen-1967}      & 19.2 & 12.5 & 6.1  & 22.9 & 8.7   & 4.9  & 13.0  & 8.4  & 2.8  & 10.5 & 5.5 & 2.0 & 2.5 & 6.5 & 0.5\\
SC~\cite{zelnik2005Self}           & 15.9 & 9.8  & 4.8  & 24.7 & 10.3  & 8.5  & 13.6  & 9.0  & 2.2  & 11.1 & 3.8   & 1.3  & 2.2   & 6.3  & 0.4 \\
NMF~\cite{cai2009Locality}          & 18.0 & 9.6  & 4.6  & 19.0 & 8.1   & 3.4  & 11.8  & 7.9  & 2.6  & 11.8 & 4.4   & 1.6  & 2.9   & 7.2  & 0.5\\
               
% AC           & 33.2 & 23.9 & 14.0 & 22.8 & 10.5  & 6.5  & 13.8  & 9.8  & 3.4  & 13.9 & 3.7   & 2.1  & 2.7   & 6.9  & 0.5 \\
               
JULE~\cite{yang2016Joint}         & 27.7 & 18.2 & 16.4 & 27.2 & 19.2  & 13.8 & 13.7  & 10.3 & 3.3  & 13.8 & 5.4   & 2.8  & 3.3   & 10.2 & 0.6 \\
              
SAE~\cite{ng2011sparse}          & 32.0 & 25.2 & 16.1 & 29.7 & 24.7  & 15.6 & 15.7  & 10.9 & 4.4  & --   & --    & --   & --    & --   & --   \\
               
DAE~\cite{vincent2010stacked}          & 30.2 & 22.4 & 15.2 & 29.7 & 25.1  & 16.3 & 15.1  & 11.1 & 4.6  & 19.0 & 10.4  & 7.8  & 3.9   & 12.7 & 0.7 \\
             
% SWWAE        & 27.0 & 19.6 & 13.6 & 28.4 & 23.3  & 16.4 & 14.7  & 10.3 & 3.9 & --   & --    & --   & --    & --   & --\\
             
AE~\cite{bengio2007greedy}           & 30.3 & 25.0 & 16.1 & 31.4 & 23.4  & 16.9 & 16.5  & 10.0 & 4.7 & 18.5 & 10.4  & 7.3  & 4.1   & 13.1 & 0.7 \\
             
% DCGAN~\cite{radford2016unsupervised}        & 29.8 & 21.0 & 13.9 & 31.5 & 26.5  & 17.6 & 15.1  & 12.0 & 4.5 & 17.4 & 12.1  & 7.8  & 4.1   & 13.5 & 0.7  \\
              
% DeCNN        & 29.9 & 22.7 & 16.2 & 28.2 & 24.0  & 17.4 & 13.3  & 9.2  & 3.8  & 17.5 & 9.8   & 7.3  & 3.5   & 11.1 & 0.6  \\
              
VAE~\cite{kingma2014autoencoding}          & 28.2 & 20.0 & 14.6 & 29.1 & 24.5  & 16.7 & 15.2  & 10.8 & 4.0 & 17.9 & 10.7  & 7.9  & 3.6   & 11.3 & 0.6  \\
              
DEC~\cite{xie2016unsupervised}          & 35.9 & 27.6 & 18.6 & 30.1 & 25.7  & 16.1 & 18.5  & 13.6 & 5.0 & 19.5 & 12.2  & 7.9  & 3.7   & 11.5 & 0.7 \\
              
ADC~\cite{haeusser2018associative}          & 53.0 & --   & --  & 32.5 & --    & --   & 16.0  & --   & --  & --   & --    & --   & --    & --   & --   \\
             
DeepCluster~\cite{caron2018Deep}  & 33.4 & --   & -- & 37.4 & --    & --   & 18.9  & --   & -- & --   & --    & --   & --    & --   & --   \\
             
DAC~\cite{chang2017deep}          & 47.0 & 36.6 & 25.6 & 52.2 & 40.0  & 30.1 & 23.8  & 18.5 & 8.8 & 27.5 & 21.9  & 11.1 & 6.6   & 19.0 & 1.7 \\
             
DDC~\cite{chang2019deep}          & 48.9 & 37.1 & 26.7 & 52.4 & 42.4  & 32.9 & -- & --    & --  & --   & --    & --   & --     & --    & --   \\  
              
DCCM~\cite{wu2019deep}         & 48.2 & 37.6 & 26.2 & 62.3 & 49.6  & 40.8 & 32.7  & 28.5 & 17.3 & 38.3 & 32.1  & 18.2 & 10.8  & 22.4 & 3.8\\
              
IIC~\cite{ji2019invariant}          & 59.6 & 49.6 & 39.7 & 61.7 & 51.1  & 41.1 & 25.7  & 22.5 & 11.7 & --   & --    & --   & --    & --   & --\\
             
PICA~\cite{huang2020deep}         & 71.3 & 61.1 & 53.1 & 69.6 & 59.1  & 51.2 & 33.7  & 31.0 & 17.1 & 35.2 & 35.2  & 20.1 & 9.8   & 27.7 & 4.0\\

GCC~\cite{zhong2021graph}      & 78.8 & 68.4 & 63.1 & 85.6 & 76.4 & 72.8 & 47.2 & 47.2 & 30.5 & 52.6 & 49.0 & 36.2 & 13.8 & 34.7 & 7.5\\

CC~\cite{li2021contrastive}           & 85.0 & 76.4 & 72.6 & 79.0 & 70.5  & 63.7 & 42.9  & 43.1 & 26.6 & 42.9 & 44.5  & 27.4 & 14.0  & 34.0 & 7.1\\ 

TCL~\cite{li2022twin}           & 86.8 & 79.9 & 75.7 & 88.7 & 81.9  & 78.0 & 53.1  & 52.9 & 35.7 & 64.4 & 62.3  & 51.6 & --  & -- & -- \\ 

\midrule
SCAN$^*$ (Avg{$\pm$Std})  & 75.5{$\pm$2.0} & 65.4{$\pm$1.2} & 59.0{$\pm$1.6} & 81.8{$\pm$0.3} & 71.2{$\pm$0.4} & 66.5{$\pm$0.4} & 42.2{$\pm$3.0} & 44.1{$\pm$1.0} & 26.7{$\pm$1.3} & 55.6{$\pm$1.5} & 58.7{$\pm$1.3} & 42.8{$\pm$1.3} & 41.1{$\pm$0.5} & 69.4{$\pm$0.3} & 32.7{$\pm$0.4}\\

SCAN$^\dagger$ (Avg{$\pm$Std}) & 76.7{$\pm$1.9} & 68.0{$\pm$1.2} & 61.6{$\pm$1.8}   & 87.6{$\pm$0.4} & 78.7{$\pm$0.5} & 75.8{$\pm$0.7} & 45.9{$\pm$2.7} & 46.8{$\pm$1.3} & 30.1{$\pm$2.1} & 59.2{$\pm$0.2} & 60.8{$\pm$0.4} & 45.3{$\pm$0.4} & \textbf{--}    & \textbf{--}   & \textbf{--}\\
SCAN$^\dagger$ (Best)~\cite{gansbeke2020scan} & 80.9 & 69.8 & 64.6  & 88.3 &79.7 & 77.2 & 50.7 & 48.6 & 33.3 & 59.3 & 61.2 & 45.7 & 42.0 & 69.8 & 33.2\\

NNM~\cite{dang2021nearest}          & 76.8{$\pm$1.2} & 66.3{$\pm$1.3} & 59.6{$\pm$1.5} & 83.7{$\pm$0.3} & 73.7{$\pm$0.5} & 69.4{$\pm$0.6} & 45.9{$\pm$0.2} & 48.0{$\pm$0.4} & 30.2{$\pm$0.4} & 58.6{$\pm$1.5} & 60.4{$\pm$0.5} & 44.9{$\pm$0.2} & 37.8{$\pm$0.1} & 66.3{$\pm$0.1} & 27.1{$\pm$0.1} \\

SIC (direct) (Avg{$\pm$Std}) & 95.5{$\pm$0.1} & 92.7{$\pm$0.2} & 91.1{$\pm$0.2} & 78.3{$\pm$0.1} & 74.3{$\pm$0.1} &  66.9{$\pm$0.1} & 51.3{$\pm$0.1} & 53.9{$\pm$0.1} & 36.8{$\pm$0.1} & 59.0{$\pm$0.2} &  57.7{$\pm$1.8} & 41.1{$\pm$3.2}  & 55.7{$\pm$0.8} &  77.4{$\pm$0.1} & 44.9{$\pm$0.6}  \\

SIC (center-based) (Avg{$\pm$Std}) & 96.7{$\pm$0.1} & 93.7{$\pm$0.1} & 93.2{$\pm$0.1} & 91.8{$\pm$0.1} & 83.4{$\pm$0.1} &  83.1{$\pm$0.1} & 54.0{$\pm$0.1} & 54.4{$\pm$0.4} & 38.6{$\pm$0.4} & 61.8{$\pm$1.1} & 63.9{$\pm$1.9} & 49.8{$\pm$1.4} & 61.0{$\pm$0.2} & 80.4{$\pm$0.1} & 51.2{$\pm$0.2} \\

SIC (adjusted center-based) (Avg{$\pm$Std}) & 98.1{$\pm$0.1} & 95.3{$\pm$0.1} & 95.9{$\pm$0.1} & 92.6{$\pm$0.1} & 84.7{$\pm$0.1} &  84.4{$\pm$0.1} & 58.3{$\pm$0.1} & 59.03{$\pm$0.1} & 43.9{$\pm$0.1} & 69.7{$\pm$1.1} & 69.0{$\pm$1.6} & 55.8{$\pm$1.5} & 60.2{$\pm$0.3} & 79.4{$\pm$0.1} & 49.4{$\pm$0.2} \\

\textbf{SIC (Best)}~\cite{cai2022semantic} & 
98.1 & 95.4 & 95.9 & 92.67 & 84.8 & 84.6 & 58.4 & 59.3 & 44.0 & 71.3 & 71.8 & 58.6 & 61.2 & 80.5 & 51.4 \\

Our method (Avg{$\pm$Std}) & 98.1{$\pm$0.1} & 95.5{$\pm$0.1} & 96.0{$\pm$0.1} & 92.7{$\pm$0.2} & 84.9{$\pm$0.2} &  84.6{$\pm$0.2} & 59.7{$\pm$0.9} & 59.8{$\pm$0.5} & 44.0{$\pm$0.9} & {74.9$\pm$2.5} & {73.3$\pm$1.5} & {61.6$\pm$2.5} & {61.2$\pm$0.5} & {79.7$\pm$0.7} & {51.9$\pm$0.8} \\

\textbf{Our method (Best)} & 
\textbf{98.2} & \textbf{95.5} & \textbf{96.0} & \textbf{92.8} & \textbf{85.0} & \textbf{84.9} & \textbf{61.2} & \textbf{60.6} & \textbf{45.5} & \textbf{77.9} & \textbf{75.1} & \textbf{64.3} & \textbf{61.9} & \textbf{81.1} & \textbf{52.3} \\

\bottomrule
\end{tabular}}
\caption{Clustring results on five benchmark datasets. The best results are highlighted in bold.}
\label{tab_result}
\end{table*}

\section{Experiments and Analysis}
\label{sec:experiments}
In this section, experiments are conducted on five image benchmark datasets to validate the effectiveness of our proposed method.

\subsection{Experimental Setup}
\noindent\textbf{Benchmarks and implementation details.} We used the followifive benchmark datasets in our experiment: STL10~\cite{adam2011an}, Cifar10~\cite{krizhevsky2009learning}, Cifar100-20~\cite{krizhevsky2009learning}, ImageNet-Dogs~\cite{chang2017dog} and Tiny-ImageNet~\cite{ya2015tiny}. 

\noindent\textbf{Evaluation Metrics.}
We used three evaluation metrics to evaluate clustering results, including clustering Accuracy (ACC), Normalized Mutual Information (NMI)~\cite{mcdaid2011normalized}, and Adjusted Rand Index (ARI)~\cite{hubert1985comparing}. For these metrics, a higher value means better performance.

\subsection{Comparisons with State-of-the-arts}
\noindent\textbf{Setup.} We took the entire list of nouns in the WordNet dataset~\cite{miller1995wordnet} to form an initial semantic dataset for filtering which contains more than 82, 000 nouns. To evaluate the effectiveness of our proposed method, we compare it with 27 state-of-the-art clustering methods on the five datasets. For SC~\cite{zelnik2005Self}, NMF~\cite{cai2009Locality}, AE~\cite{bengio2007greedy}, DAE~\cite{vincent2010stacked}, DCGAN~\cite{radford2016unsupervised} and VAE~\cite{kingma2014autoencoding}, the clustering results were obtained by $k$-means. In addition, to better represent the generalization properties of our methods for novel unseen examples, we used the same train and validation splits as those in SCAN~\cite{gansbeke2020scan} and SIC~\cite{cai2022semantic}. We repeated the training five times independently on each dataset and reported their mean and standard deviation values. The nearest neighbors were searched through Faiss Library ~\cite{jeff2021billion}.
% The best hyper-parameters on five benchmark datasets are shown in Appendix.

\noindent\textbf{Results.}
The comparison results with the state-of-the-art methods in terms of ACC, NMI, and ARI are presented in Tabel~\ref{tab_result}. From this table, we can observe that our method outperforms all other methods on five benchmark datasets. Especially, MCA improves ACC, NMI, and ARI by 2.8\%, 1.3\%, and 1.5\% on Cifar100-20, and 6.6\%, 3.3\%, and 5.7\% on ImageNet-Dogs, demonstrating that MCA better amend the incorrect alignments and thus achieve significant performance improvement. Especially for fine-grained images such as Imagenet-dogs whose categories are difficult to distinguish, our method can provide good clustering assistance information for them through the alignment strategy. Our method can give useful clustering support information for images through the three alignment strategy, especially for fine-grained images like ImageNet-Dogs whose categories are difficult to distinguish.

\subsection{Ablation Studies}

\noindent\textbf{Semantic space construction.}

\begin{table}[!h]
  \begin{center}
   {
  \begin{tabular}{c|cc}\toprule
   Steps & Cifar10 & ImageNet-Dogs \\
  \hline
  \textbf{UF} & 91.4 & 65.3 \\
  \textbf{UF}+\textbf{HF} & 91.9 & 71.8
  \\
  \bottomrule
  \end{tabular}}
  \end{center}
  \caption{Ablation studies of semantic space construction (we did not report the clustering results without filtering because the number of initial words is too large. \textbf{UF: Uniqueness-based filtering}, \textbf{HF: Hierarchy-based filtering}.}
  \label{tab:filter_strategy_ablation_loss}
\end{table}

We first conduct an experiment on Cifar10 and ImageNet-Dogs to verify the effectiveness of our semantic space construction method and show the results in Table ~\ref{tab:filter_strategy_ablation_loss}. The results demonstrate that \textbf{hierarchy-based filtering} can significantly enhance the semantic space compared to \textbf{uniqueness-based filtering}, especially when dealing with complex clusters that are challenging to distinguish, such as those in the ImageNet-Dogs dataset.

\begin{table}[!h]
	\centering
        \small
		\begin{tabular}{cccc|lll}
		\toprule[1pt]
		\multicolumn{4}{c|}{\textbf{Loss Components}} & \multicolumn{3}{c}{\textbf{Result}} \\
		 $\mathcal{L}_{I}$  & $\mathcal{L}_{ia}$ & $\mathcal{L}_{pa}$ &   $\mathcal{L}_{sa}$ & ACC  & NMI  & ARI \\ 	\toprule[1pt]
		$\checkmark$ &       &         &         &        46.7{$\pm$0.5}     & 48.9 {$\pm$0.7}     &     32.6  {$\pm$0.5} \\          
         $\checkmark$   &               &  $\checkmark$ &   $\checkmark$  &      66.0  {$\pm$3.4} &          67.7 {$\pm$2.5}			&   50.7 {$\pm$4.1}   \\

          $\checkmark$   &      $\checkmark$           &  &   $\checkmark$ &       63.5 {$\pm$2.5}  &          69.7 {$\pm$2.1}   &   52.7 {$\pm$3.5}   \\

         $\checkmark$   &      $\checkmark$           & $\checkmark$  &    &       53.1 {$\pm$3.8}  &        55.4 {$\pm$2.8}   &    37.4{$\pm$2.4}   \\
         
		$\checkmark$    &      $\checkmark$       & $\checkmark$&     $\checkmark$   &    \textbf{ 74.8 {$\pm$2.7}}  &       \textbf{  72.7	{$\pm$2.0}}	&  \textbf{ 61.9  {$\pm$2.5}}    \\ 	\toprule[1pt]
	\end{tabular}
	\centering
	\caption{Ablation studies on ImageNet-Dogs.}
	\label{experiment:loss_ablation}
\end{table}

\noindent\textbf{Loss components effectiveness.}
 We perform an ablation analysis on ImageNet-Dogs to measure the importance of four loss components in our model, i.e., image consistency loss $\mathcal{L}_{I}$, instance-level alignment loss $\mathcal{L}_{ia}$, prototype-level alignment loss $\mathcal{L}_{pa}$ and semantic-level alignment $\mathcal{L}_{sa}$. The results are shown in Tabel~\ref{experiment:loss_ablation}, indicating that each of the four components plays an important role. The integration of three cross-modal alignment strategies significantly enhances the clustering performance, even obtaining 28.1\%, 23.8\%, and 29.3\% performance gains when all three strategies are simultaneously used. Among the three cross-modal alignment strategies, introducing semantic-level alignment yields the most significant improvement in clustering performance, indicating that operating at the semantic level can effectively address incorrect alignments. These results confirm the effectiveness of our proposed cross-modal alignment methods.

\begin{figure}[!htb]
% \begin{figure}[h]
\centering
    \subfloat[Cifar100-20.]{
           \centering 
           \label{fig:Cifar100-20_pseudo}  
           \includegraphics[width = .4\columnwidth]{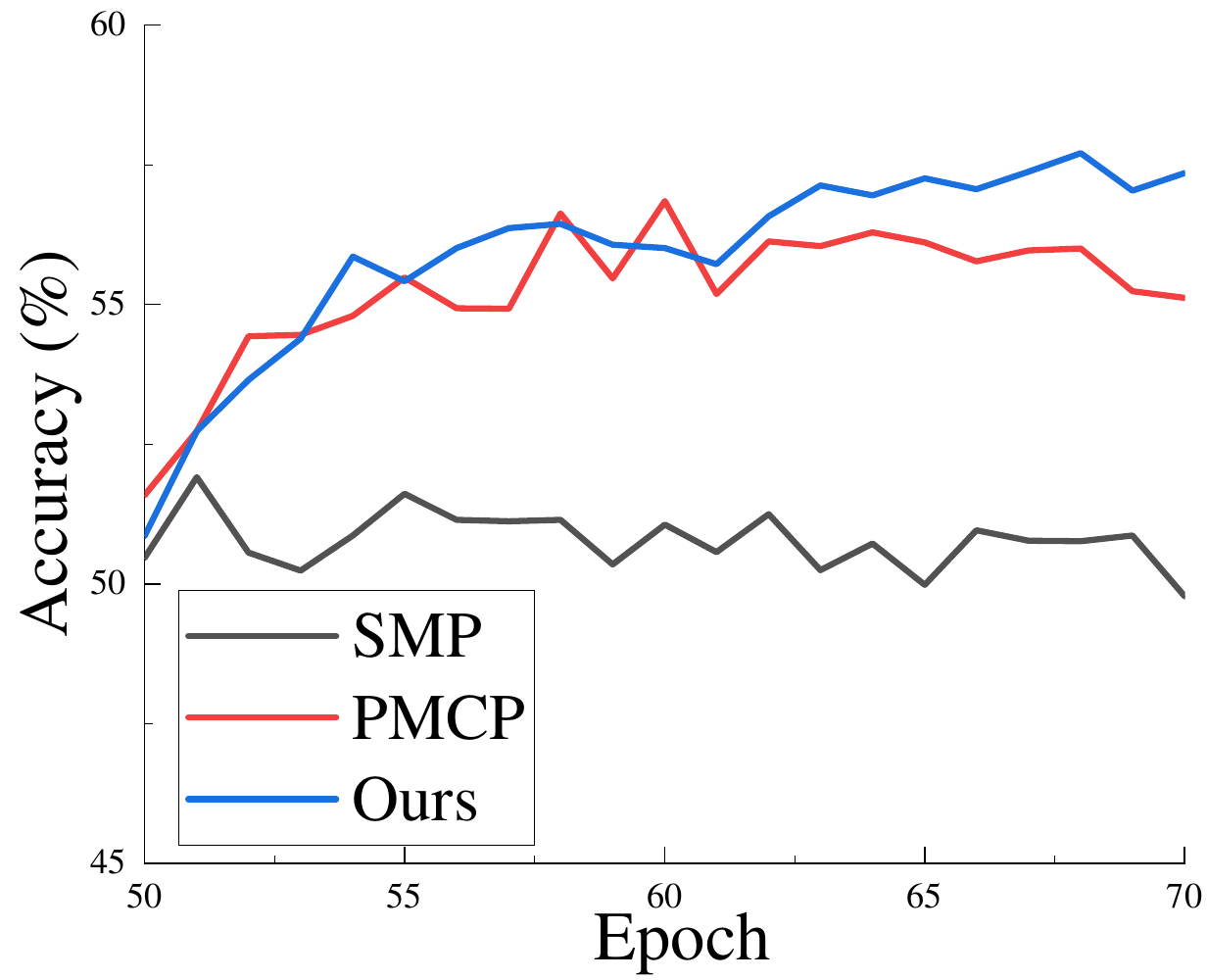}}\hspace{-3mm}
    \subfloat[ImageNet-Dogs.]{
            \centering 
            \label{fig:imagenet_dog_alignment}
            \includegraphics[width = .4\columnwidth]{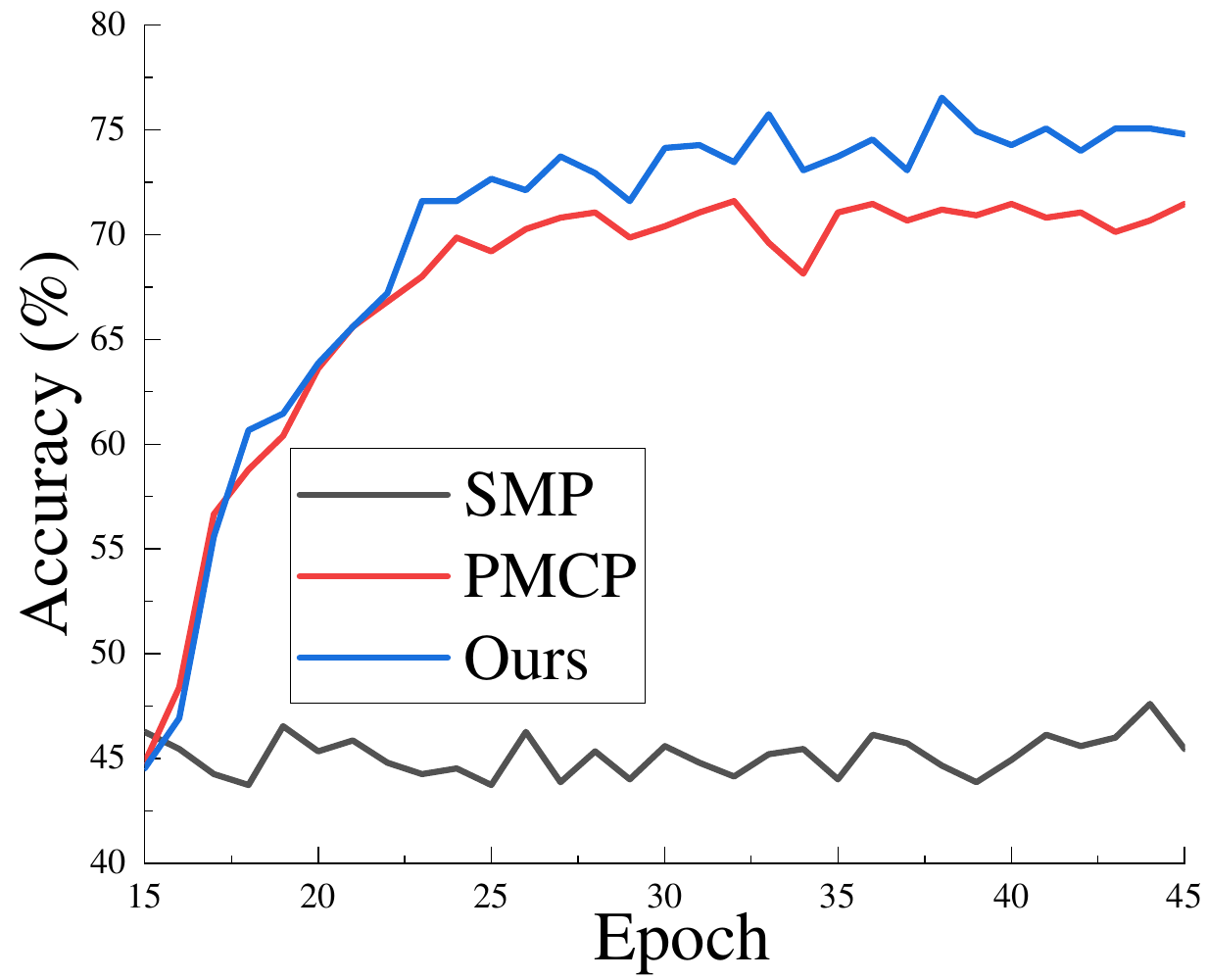}}
\caption{The average accuracy of 10 runs of pseudo-labels with epochs on Cifar100-20 and ImageNet-Dogs evolves.}
\label{fig:effciency_pesudo_label}
\end{figure}

\noindent\textbf{Comparison of three pseudo-label generation methods.} In our method, semantic-level cross-modal alignment can be considered self-training with cross-modal pseudo-labels. To verify the effectiveness of our method, we compared three pseudo-label generation methods implemented in our framework: 1) Single Modal Pseudo-labeling (\textbf{SMP}) that directly generates one-hot pseudo-labels via the argmax operation on the soft cluster assignments only from images, 2) Prototype Mapping based Cross-modal Pseudo-labeling (\textbf{PMCP}, is the adjusted center-based method in~\cite{cai2022semantic}) that generates pseudo-labels from the prototype level alignments in the original CLIP, and 3) \textbf{Ours} that generates pseudo-labels by simultaneously learning relationships among images and neighboring texts, while also updating the image and text embeddings. The comparison results on Cifar100-20 and ImageNet-Dogs are shown in Figure~\ref{fig:effciency_pesudo_label}, demonstrating that our method significantly outperforms the other two methods on both datasets. These results indicate that learning adaptive relationships at the semantic level substantially improves the quality of pseudo-labels.

\begin{figure}[!htb]
     \centering
     \includegraphics[width=0.8\columnwidth]{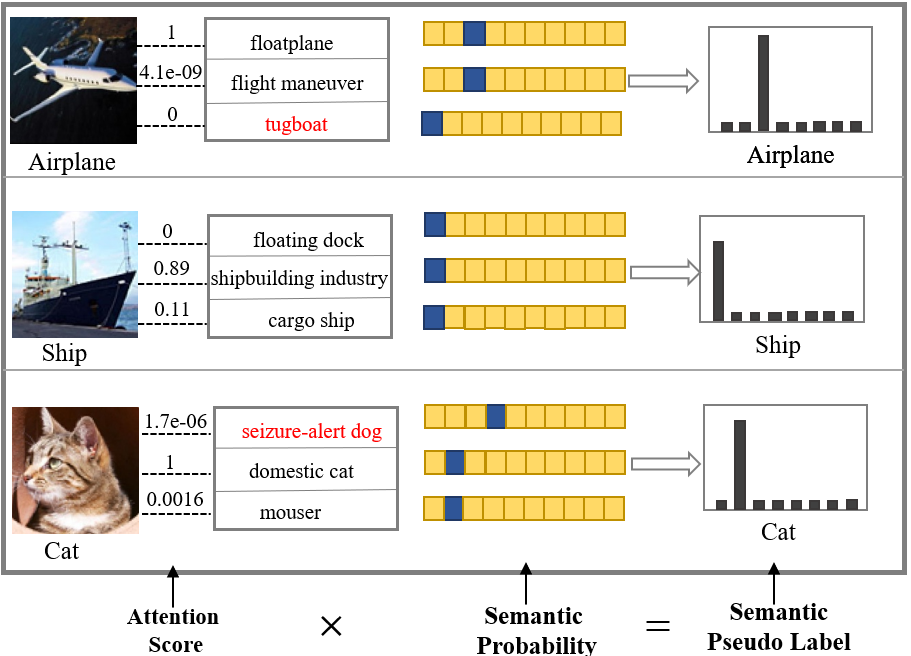}
     \caption{Example of pseudo-label generation in MCA. The words below (on the right side of) the images are ground-truth/ neighboring labels and the red color indicates irrelevant texts. The blue block in the semantic probability indicates the class the left word is assigned to (with the largest probability).}
     \label{fig:attention_pseudo_label}
 \end{figure}

Figure~\ref{fig:attention_pseudo_label} shows an example of our proposed pseudo-label generation method. In this example, we select an example from each of the three classes and three neighboring words for each image. Although these exist irrelevant neighboring words for an image, our method can identify irrelevant words and eliminate the affection of incorrect alignments in CLIP. 

\subsection{Sensitivity Analysis}

\begin{figure*}[t]
\centering
    \subfloat[\footnotesize $k_S$ on Cifar10.]{
           \centering 
           \label{fig:cifar10_k_t}  
           \includegraphics[width = .25\linewidth]{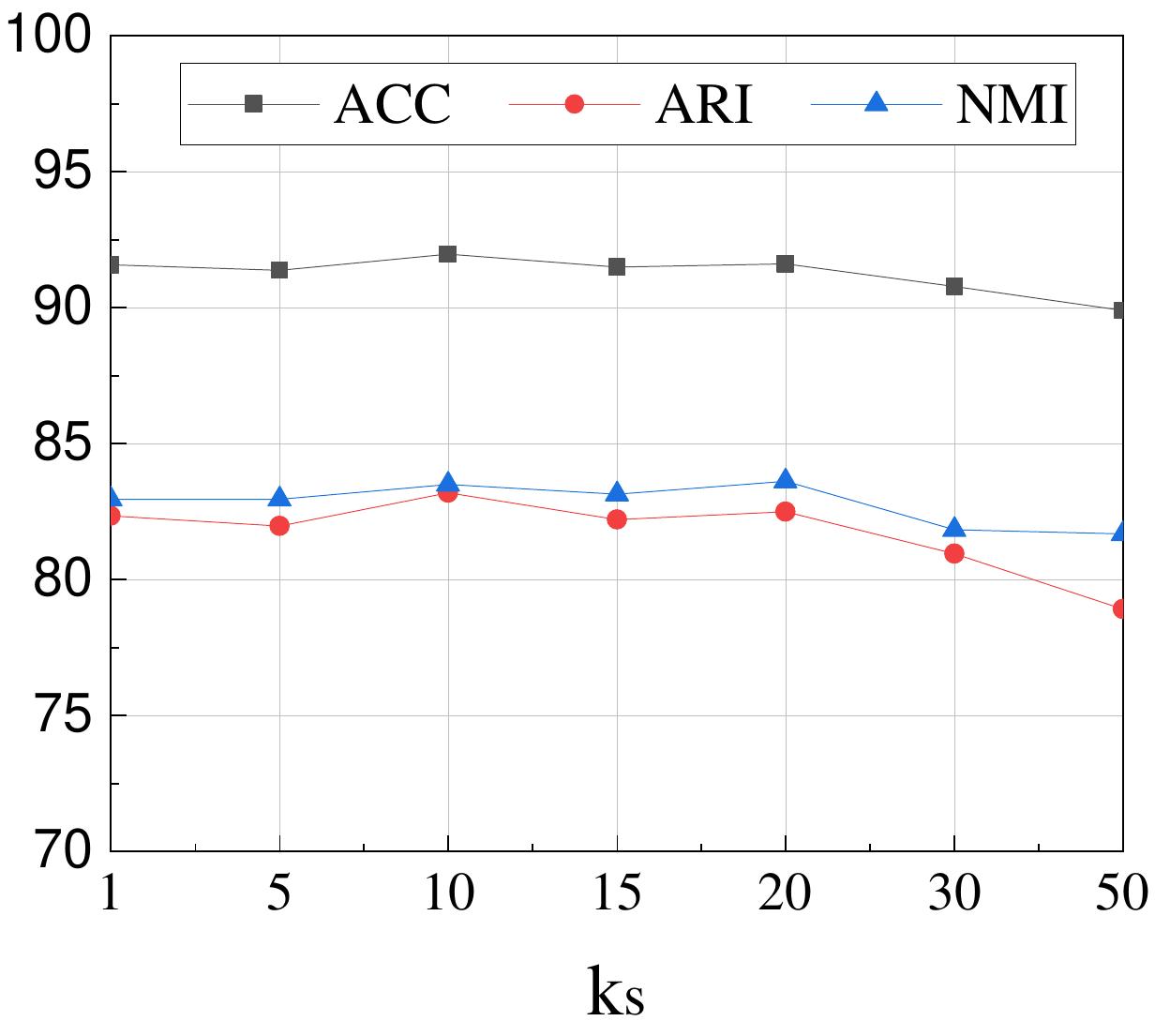}}
    \subfloat[\footnotesize $k_{S}$ on ImageNet-Dogs.]{
            \centering 
            \label{fig:imagenet_dog_k_t}
            \includegraphics[width = .24\linewidth]{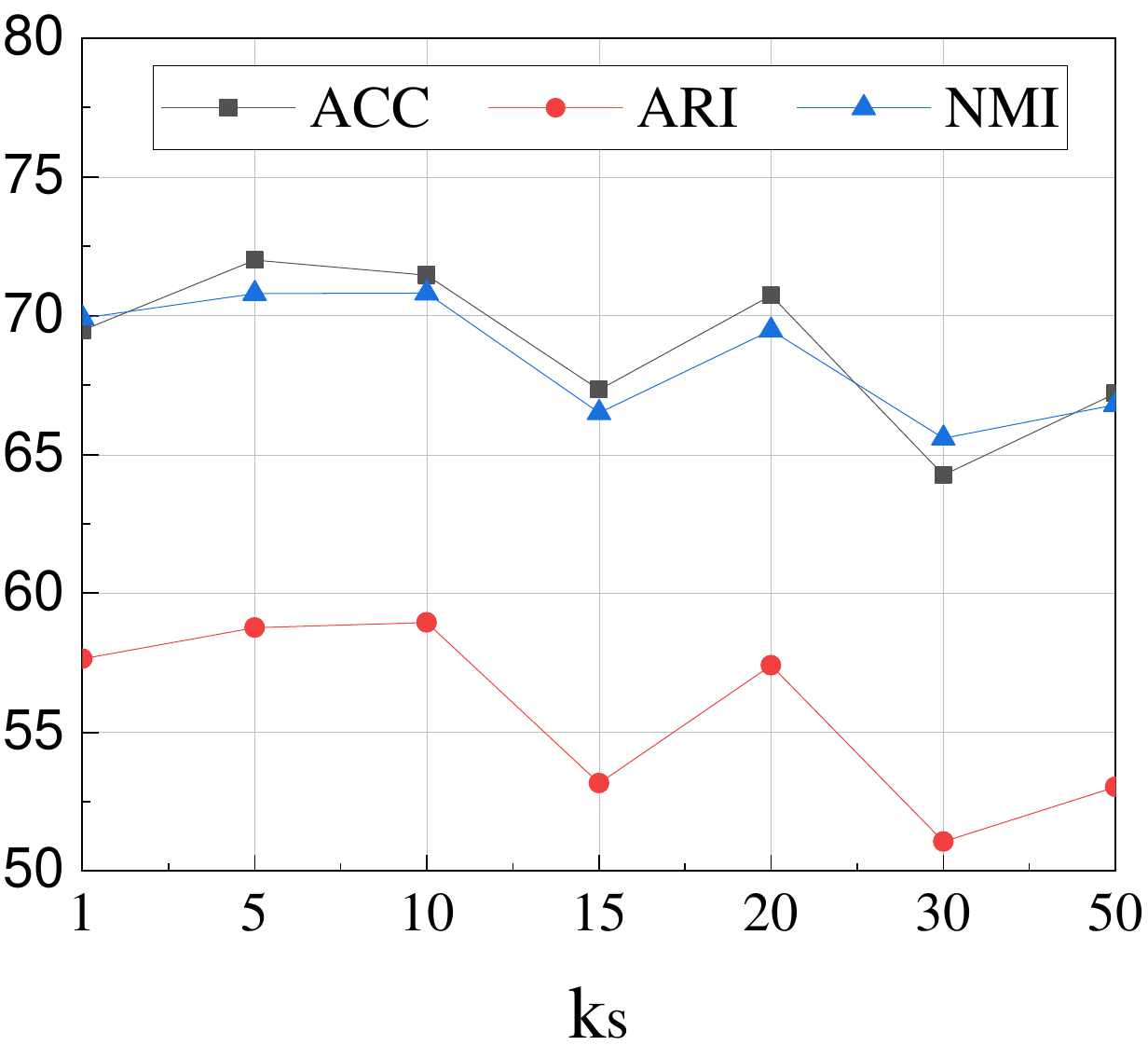}}
    \subfloat[\footnotesize $k_p$ on Cifar10.]{
           \centering 
           \label{fig:cifar10_ck}  
           \includegraphics[width = .25\linewidth]{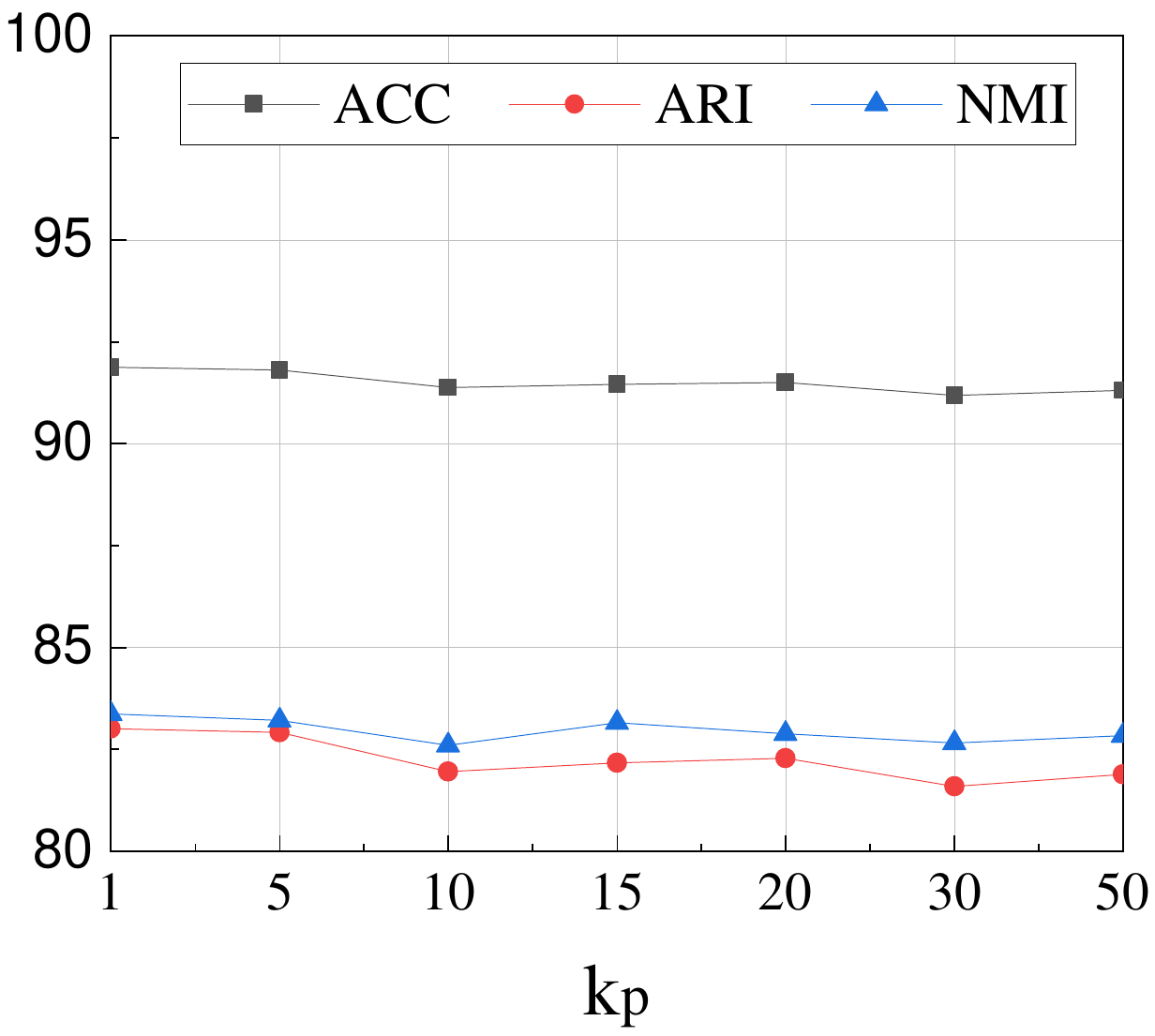}}
    \subfloat[\footnotesize $k_{p}$ on ImageNet-Dogs.]{
            \centering 
            \label{fig:imagenet_dog_k_p}
            \includegraphics[width = .24\linewidth]{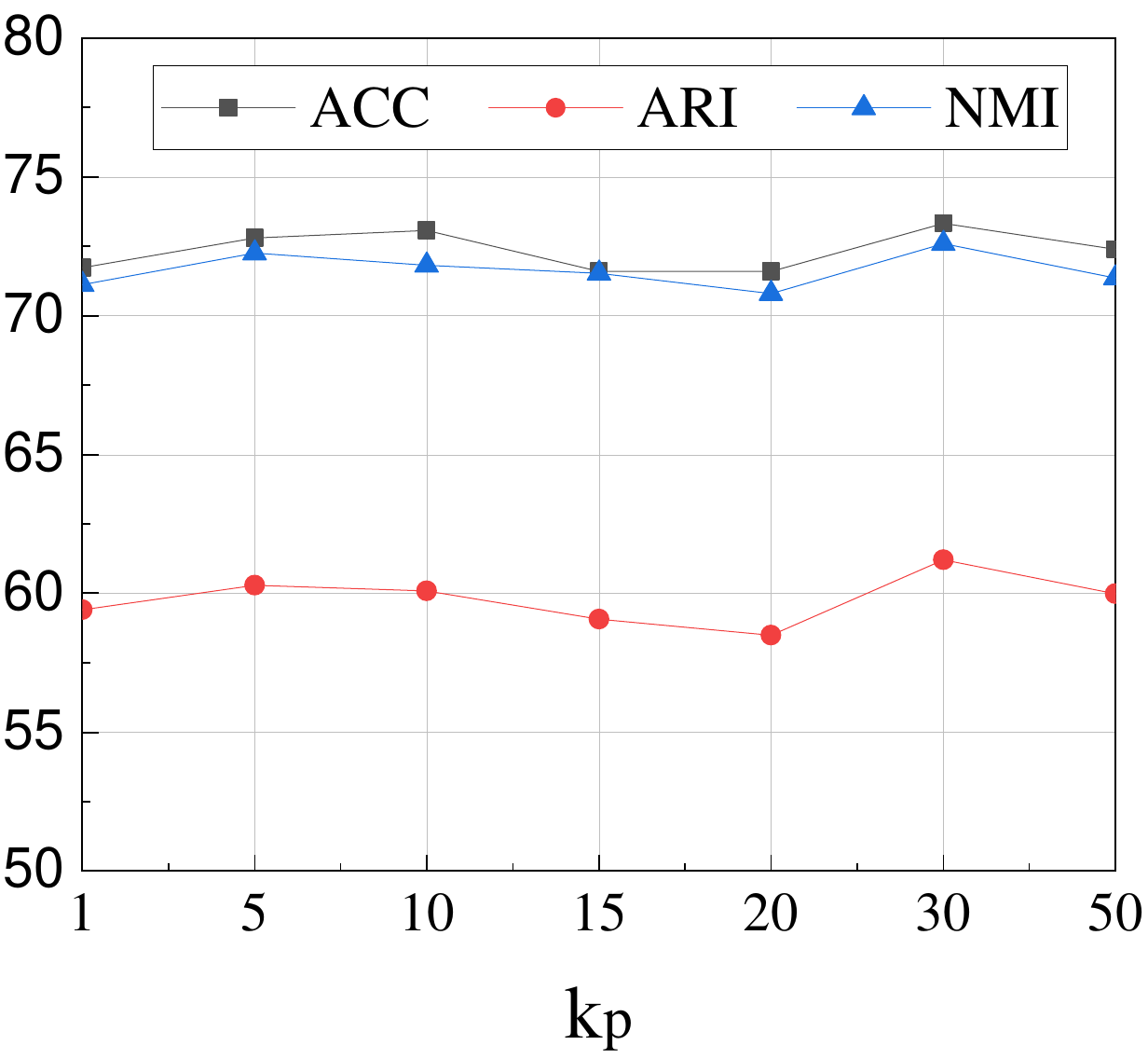}}             
\caption{Sensitivity analysis of $k_S$ and $k_p$. }
\label{fig:sens_k_t_p}
\end{figure*}

\noindent \textbf{Sensitivity on neighborhood parameters $k_{S}$ and $k_{p}$ in cross-modal alignment.} 
In our alignment method, $k_{S}$ controls the number of neighboring texts in the instance-level and semantic-level alignments, and $k_p$ controls the number of neighboring texts to recompute the semantic prototype in the prototype-level alignment. Figures~\ref{fig:cifar10_k_t} and~\ref{fig:imagenet_dog_k_t} show that too large $k_{S}$ causes performance degeneration due to the introduction of irrelevant texts. Figures~\ref{fig:cifar10_ck} and~\ref{fig:imagenet_dog_k_p} show that
$k_p$ does not change the performance too much.

\begin{figure*}[!htb]
\centering
    
\subfloat[\footnotesize ($ \eta$,$\lambda_a$) on STL10.]{
           \centering 
           \label{fig:stl10_eta_lambda}  
           \includegraphics[width = .23\linewidth]{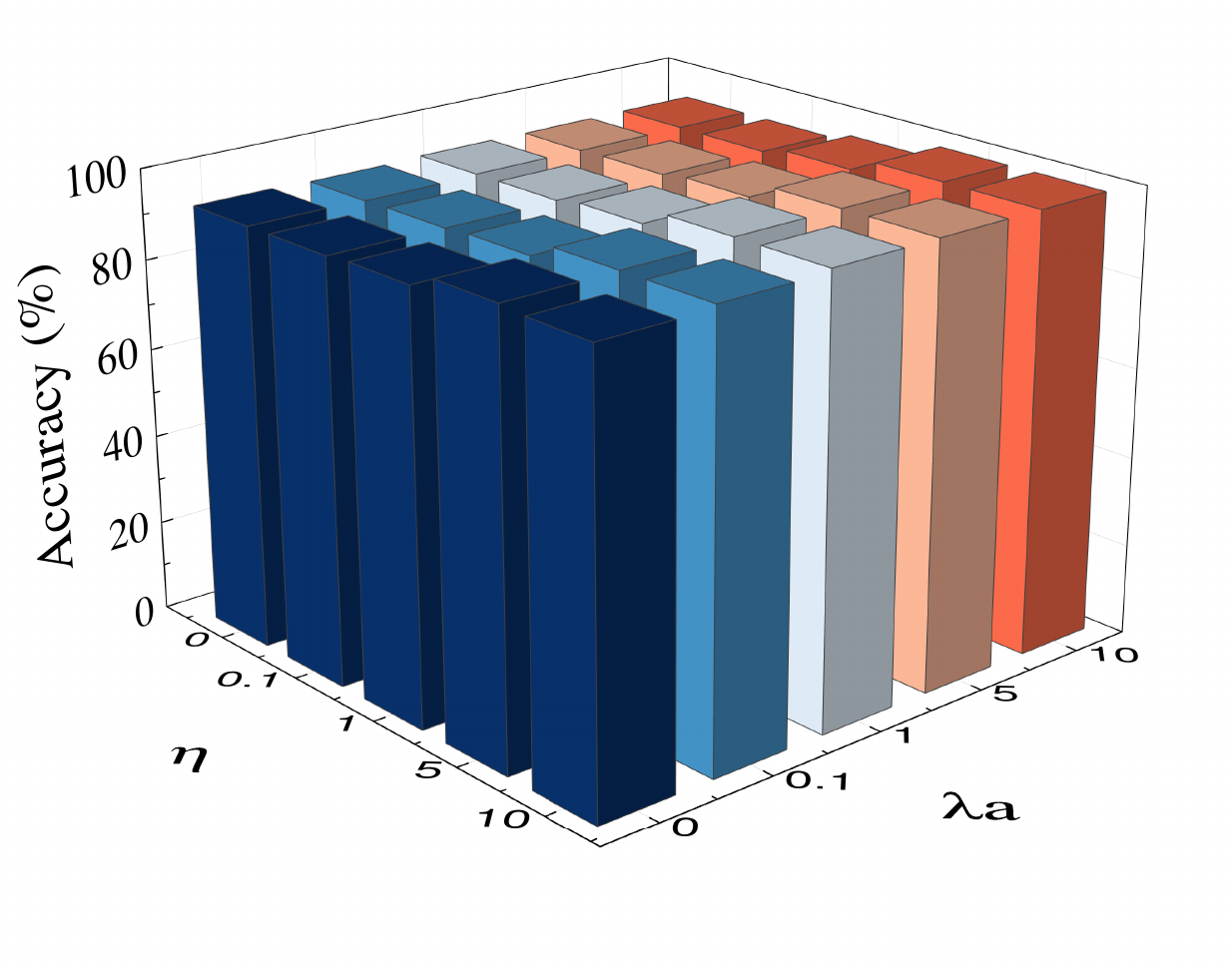}}\hspace{-3mm}
    \subfloat[\footnotesize ($ \eta$,$\lambda_a$) on ImageNet-Dogs.]{
            \centering 
            \label{fig:imagenet_dog_eta_lambda}
            \includegraphics[width = .24\linewidth]{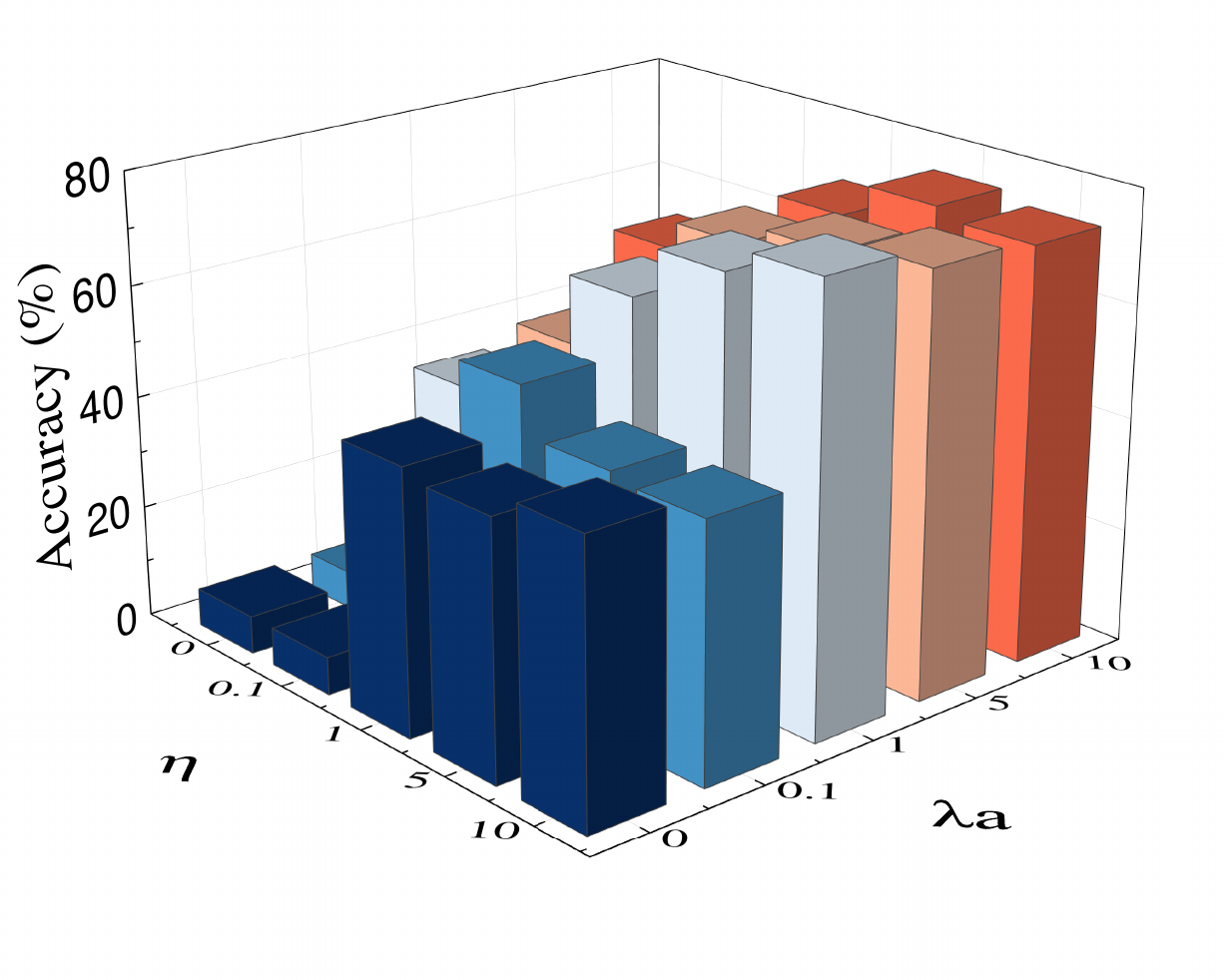}}
    \subfloat[\footnotesize ($\eta$,$\lambda_{pa}$) on STL10.]{
           \centering 
           \label{fig:stl10_eta_pa}  
           \includegraphics[width = .23\linewidth]{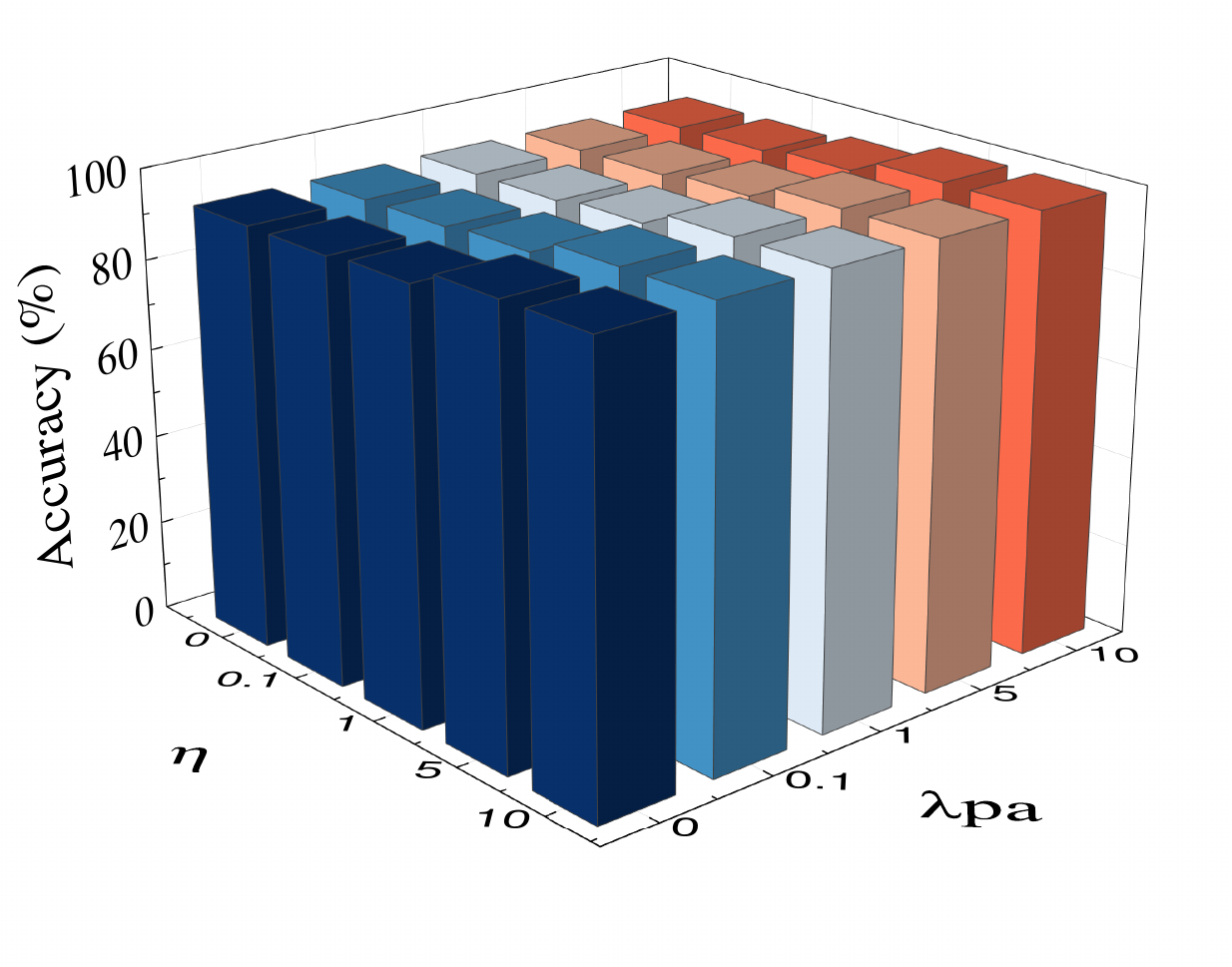}}\hspace{-3mm}
    \subfloat[\footnotesize ($\eta$,$\lambda_{pa}$) on ImageNet-Dogs.]{
            \centering 
            \label{fig:imagenet_dog_eta_pa}
            \includegraphics[width = .24\linewidth]{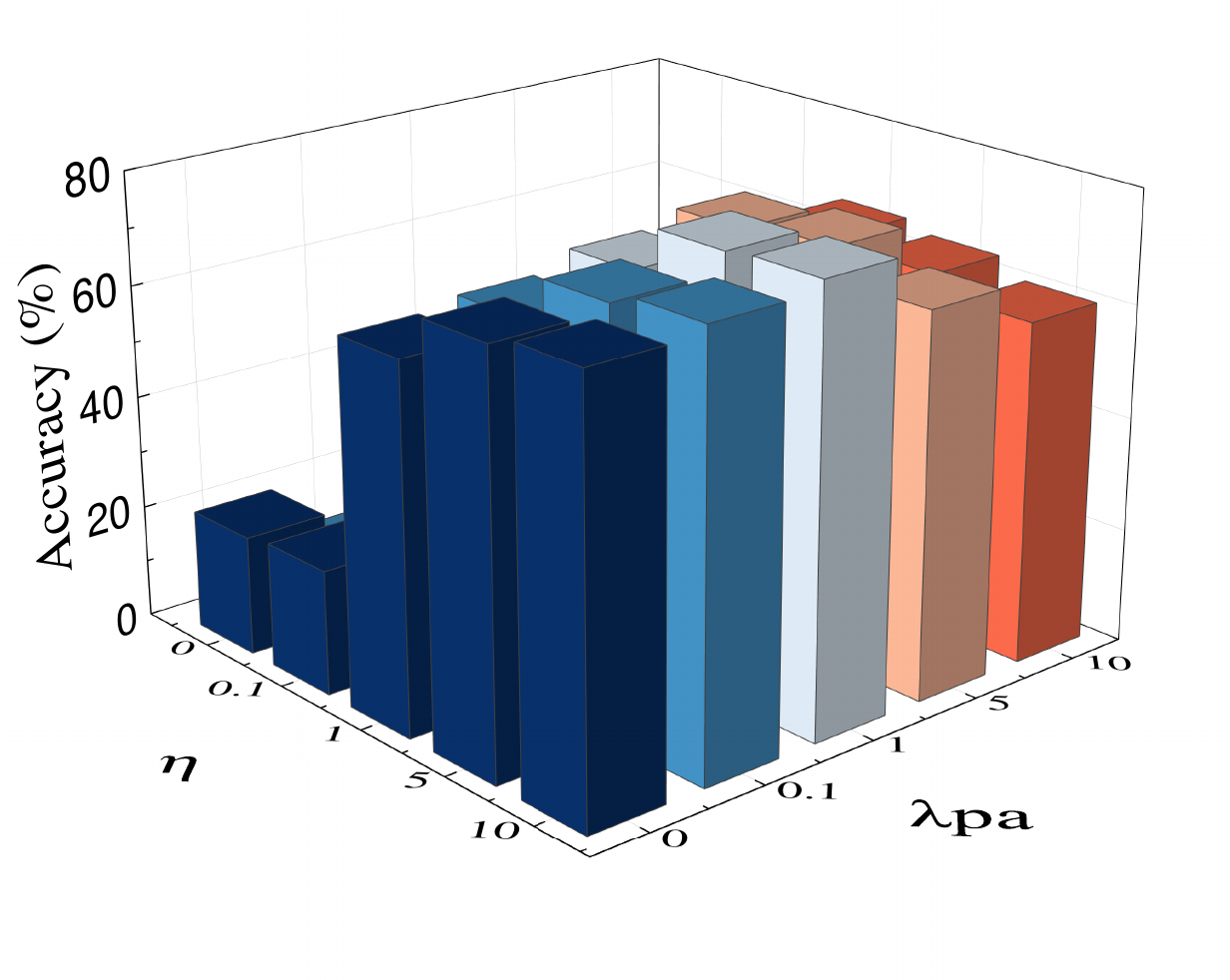}}
    \\
    \subfloat[\footnotesize ($\eta$,$\lambda_{sa}$) on STL10.]{
           \centering 
           \label{fig:stl10_eta_ai}  
           \includegraphics[width = .23\linewidth]{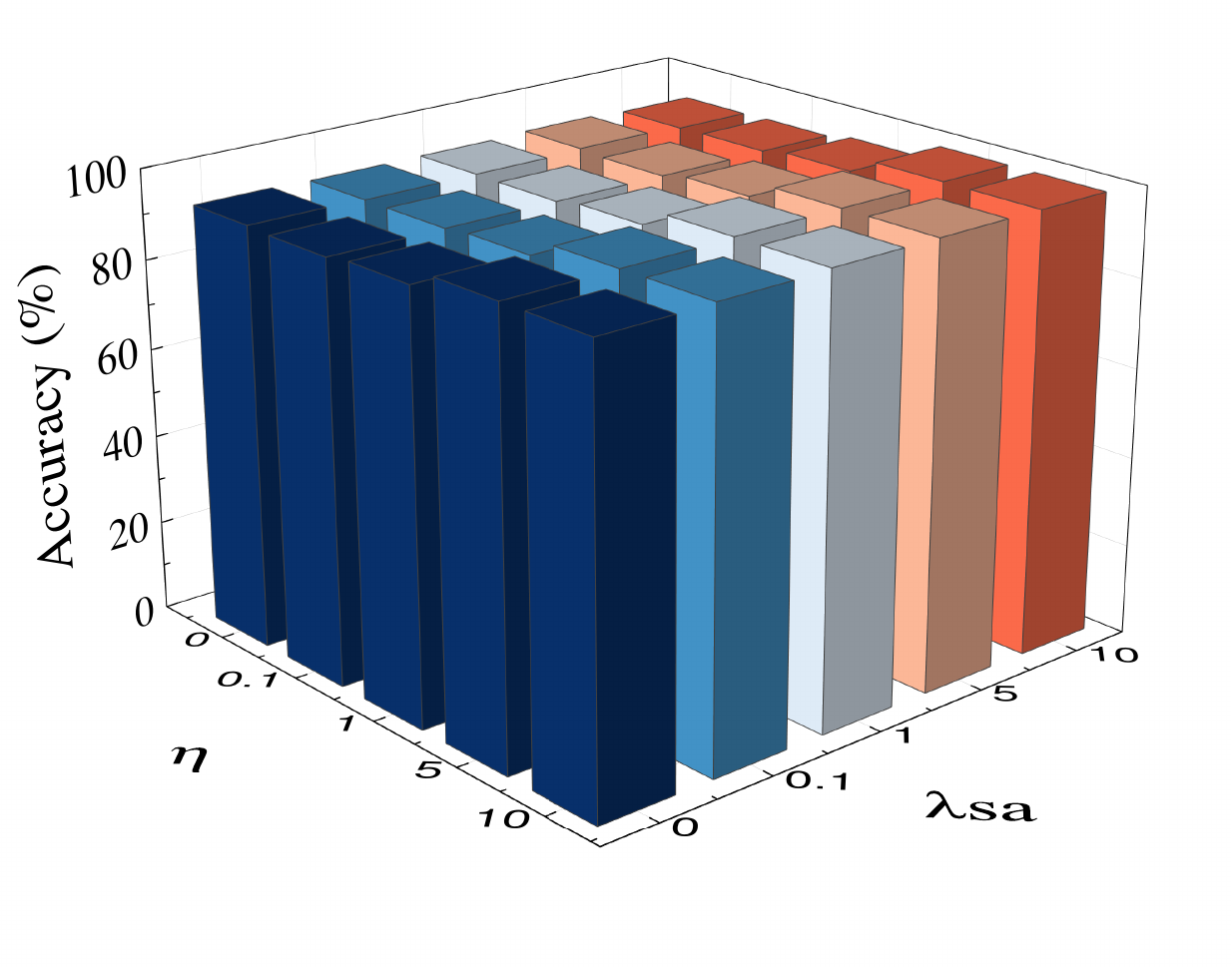}}\hspace{-3mm}
    \subfloat[\footnotesize ($\eta$,$\lambda_{sa}$) on ImageNet-Dogs.]{
            \centering 
            \label{fig:imagenet_dog_eta_ai}
            \includegraphics[width = .24\linewidth]{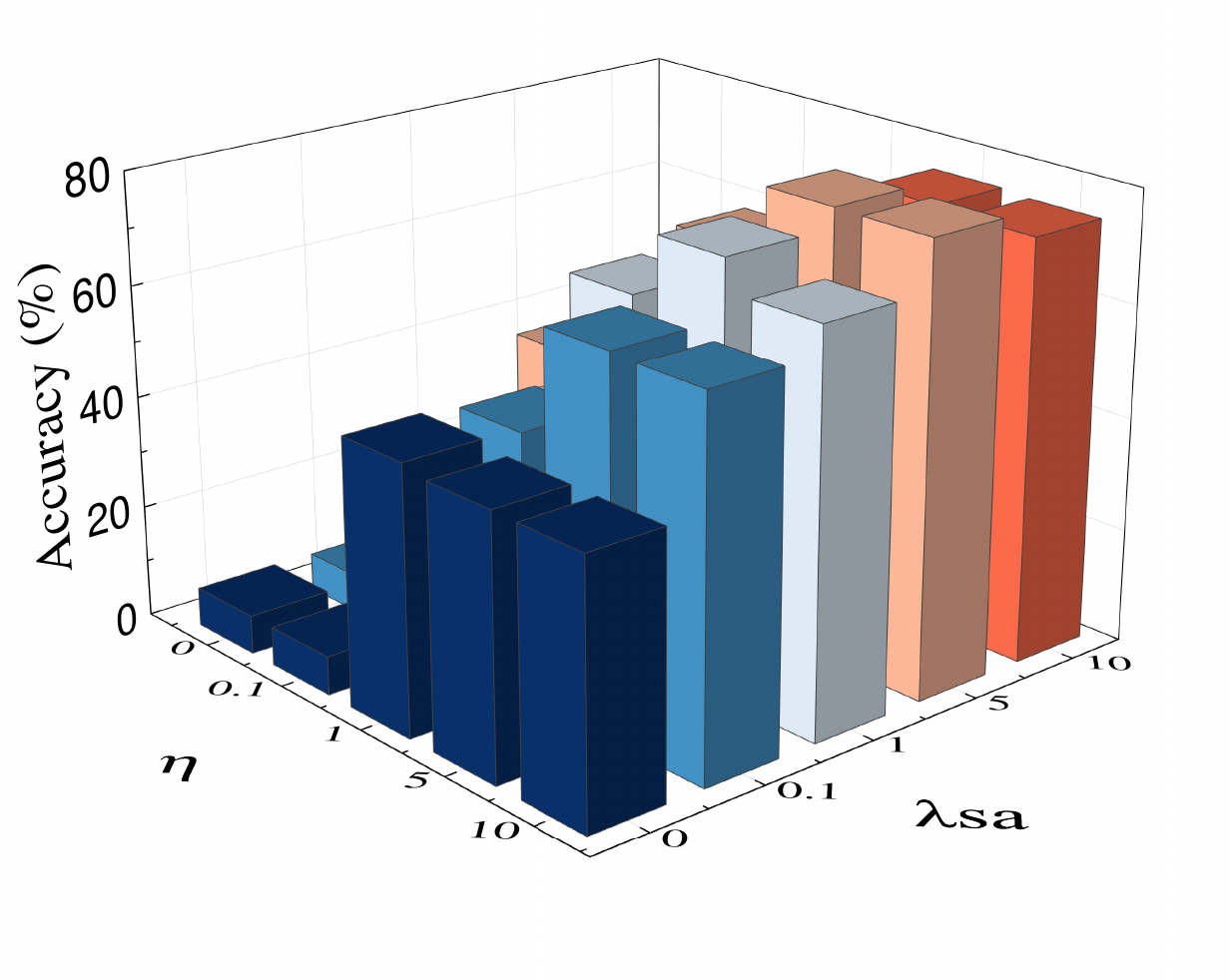}}
    \subfloat[\footnotesize ($ \lambda_{pa}$,$\lambda_{sa}$) on STL10.]{
           \centering 
           \label{fig:stl10_sp_ai}  
           \includegraphics[width = .23\linewidth]{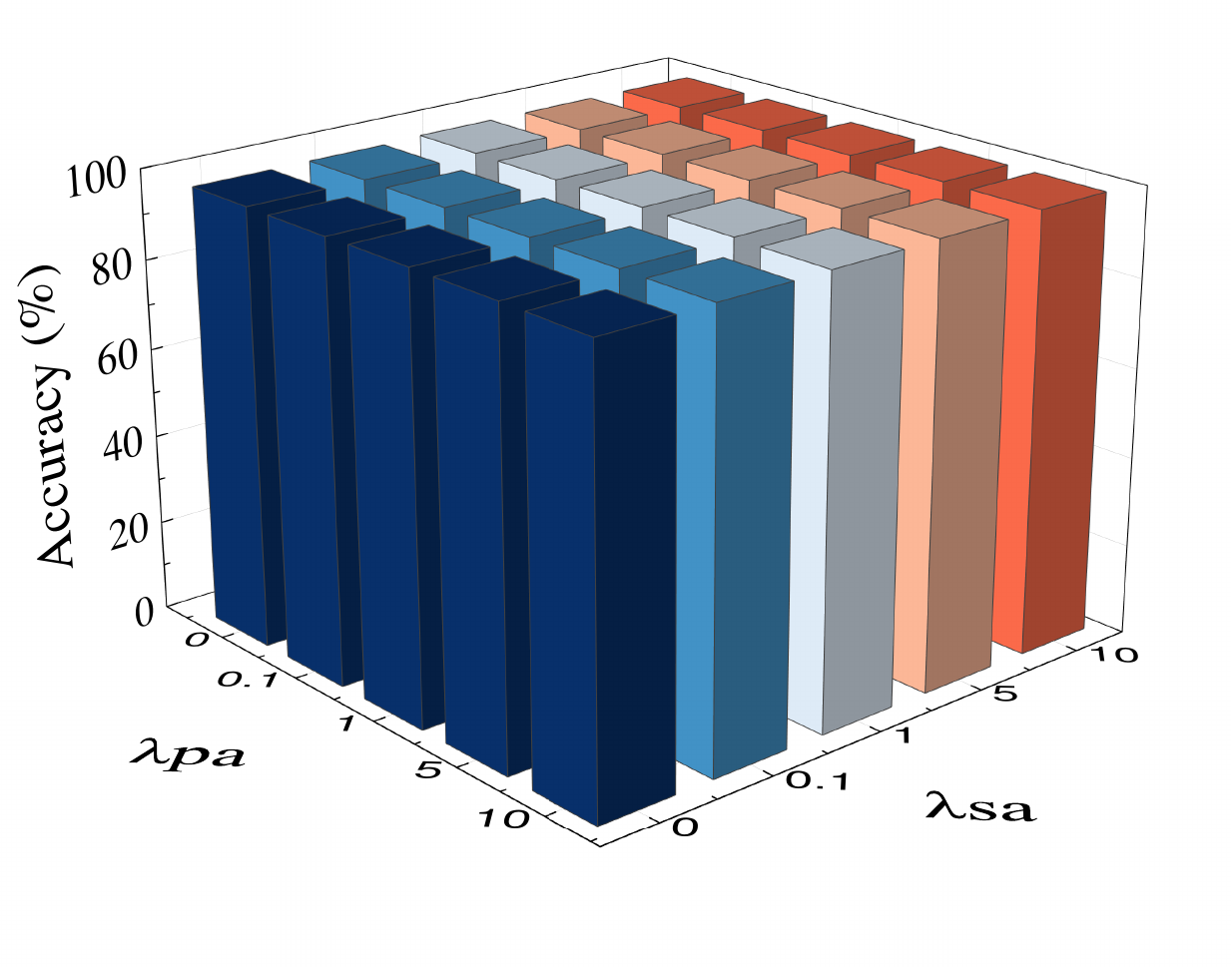}}\hspace{-3mm}
    \subfloat[\footnotesize ($ \lambda_{pa}$,$\lambda_{sa}$) on ImageNet-Dogs.]{
            \centering 
            \label{fig:imagenet_dog_sp_ai}
            \includegraphics[width = .24\linewidth]{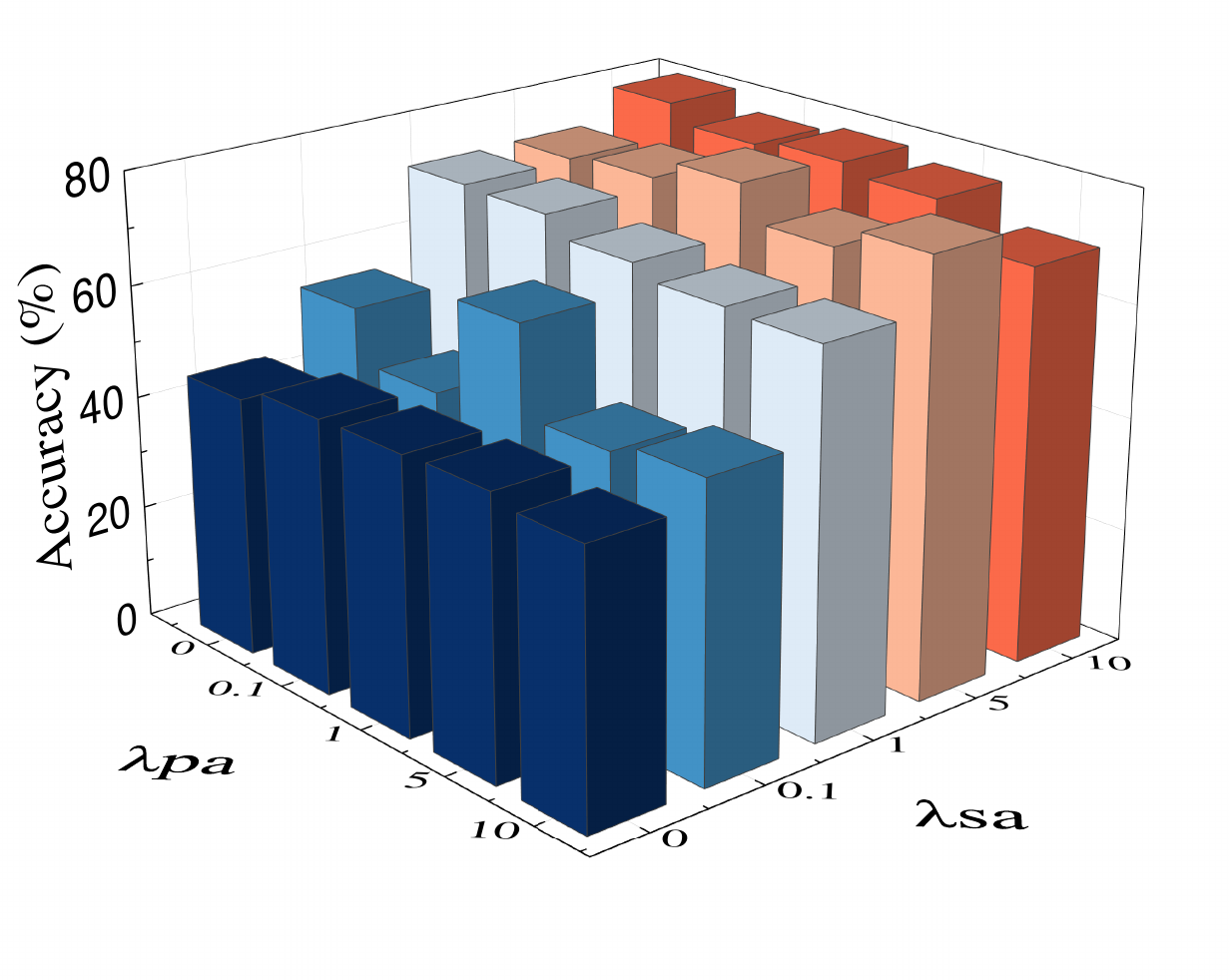}}            
            
\caption{Sensitivity analysis of trade-off parameters $\eta$, $\lambda_a$, $\lambda_{pa}$ and $\lambda_{sa}$. 
}
\label{fig:sens_trade_off}
\end{figure*}

\noindent \textbf{Sensitivity on trade-off parameters $\eta$, $\lambda_a$, $\lambda_{pa}$ and $\lambda_{sa}$.}
Figure~\ref{fig:sens_trade_off} shows the sensitivity analysis of trade-off parameters $\eta$, $\lambda_a$, $\lambda_{pa}$ and $\lambda_{sa}$. From these figures, we can observe that the performance of our method does not change too much with the change of four parameters on STL10. However, in ImageNet-Dogs, we can see that the performance of our method improves with increasing values of $\eta$, $\lambda_{pa}$, and $\lambda_{sa}$. In particular, our method appears to be more sensitive to changes in $\lambda_{sa}$ than $\lambda_{pa}$ on ImageNet-Dogs.

\subsection{Time Costs}
The training durations of SIC/MUST/MCA were approximately 0.5/2/1 hours on STL10, 1.5/6.5/3 hours on CIFAR10, 3/11.5/5 hours on CIFAR100-20, 0.5 / 1.7 / 1 hours on ImageNet-Dogs and 4/18/5 on Tiny-ImageNet. While it is true that our method generates a smaller semantic space, it does require the training of three networks, which can result in a longer training time compared to SIC. However, our method is much more efficient than MUST.

\section{Conclusion}
\label{sec:conclusion}
We have proposed a novel method to address the incorrect alignments in CLIP for image clustering. Our method includes the construction of a proper semantic space and a multi-level cross-modal alignment approach for aligning images and texts in downstream tasks at three levels. Theoretical results have shown interesting insights, and experimental results have demonstrated the superiority of our method. However, we acknowledge that our method may not be as cost-effective as SIC, as it involves three types of alignments. The proper setting of hierarchy levels also remains a challenge that needs further investigation. Additionally, our method exhibits lower performance compared to MUST, primarily due to the absence of class names. For future work, we will focus on enhancing the performance of our method and exploring new avenues for improvement. Leveraging the theoretical results to augment our method holds promise and will be a key research direction we pursue.

\section*{Acknowledgments}
This work is jointly supported by Major Project of the New Generation of Artificial Intelligence under Grant no.2018AAA0102900; in part by NSFC under Grant no. 92270122 and no.62206179; and in part by Guangdong Provincial Natural Science Foundation under Grant no. 2023A1515012584 and no.2022A1515010129; and in part by the Shenzhen Research Foundation for Basic Research, China, under Grant JCYJ20210324093000002; and in part by University stability Support program of Shenzhen under Grant no.20220811121315001.

%\bibliography{aaai24}
%\bibliographystyle{aaai24}

\nobibliography*

% Use \bibliography{yourbibfile} instead or the References section will not appear in your paper
\bibliography{aaai24}

\newpage

\appendix
\label{sec:appendix}
\onecolumn
\section{Appendix}
In this appendix, we provide detailed proof of the theoretical results.

\subsection{Proof of Theorem~\ref{thm:conv}}
\label{app:pr_thm_conv}

\begin{proof}

For simplicity, we denote $\mathcal{L}(g(\mathcal{S});\varphi)$ as $\mathcal{L}(\varphi)$. The update rule can be formulated as 
$$ \varphi^{(t+1)}=\varphi^{(t)}-\eta_{\varphi}(\nabla \mathcal{L}(\varphi^{(t)})+\zeta^{(t)}),$$
\noindent where $\zeta^{(t)}=\left.\nabla \mathcal{L}(\varphi^{(t)})\right|_{\mathcal{B}}-\nabla \mathcal{L}(\varphi^{(t)})$ and $\mathcal{B}$ consists of a mini-batch images sampled i.i.d from $\mathcal{X}$ and texts sampled i.i.d from the neighbors in $\mathcal{T}$ of these images. This indicates that $\mathbb{E}[\zeta^{(t)}]=0$ holds. Let $\|\zeta^{(t)}\|^2 \leq \sigma^2$. First, we have \\
$$
\begin{aligned}
& \mathcal{L}(\varphi^{(t+1)})-\mathcal{L}(\varphi^{(t)}) \\
\leq & \langle\nabla \mathcal{L}(\varphi^{(t)}), \varphi^{(t+1)}-\varphi^{(t)}\rangle+\frac{L}{2}\|\varphi^{(t+1)}-\varphi^{(t)}\|_{2}^{2}\\
=&\langle\nabla \mathcal{L}(\varphi^{(t)}),-\eta_{\varphi}[\nabla\mathcal{L}(\varphi^{(t)})+\zeta^{(t)}]\rangle+\frac{L \eta_{\varphi}^{2}}{2}\|\nabla \mathcal{L}(\varphi^{(t)})+\zeta^{(t)}\|_{2}^{2} \\
=&-(\eta_{\varphi}-\frac{L\eta_{\varphi}^{2}}{2})\|\nabla \mathcal{L}(\varphi^{(t)})\|_{2}^{2}+\frac{L \eta_{\varphi}^{2}}{2}\|\zeta^{(t)}\|_{2}^{2}-(\eta_{\varphi}-L\eta _{\varphi}^{2})\langle\nabla \mathcal{L}(\varphi^{(t)}), \zeta^{(t)}\rangle .
\end{aligned}
$$

\noindent Therefore, \\
$$
\begin{aligned}
&(\eta_{\varphi}-\frac{L\eta_{\varphi}^{2}}{2})\|\nabla \mathcal{L}(\varphi^{(t)})\|_{2}^{2} \\
\leq& \mathcal{L}(\varphi^{(t)})-\mathcal{L}(\varphi^{(t+1)}) + \frac{L \eta_{\varphi}^{2}}{2}\|\zeta^{(t)}\|_{2}^{2}-(\eta_{\varphi}-L\eta _{\varphi}^{2})\langle\nabla \mathcal{L}(\varphi^{(t)}), \zeta^{(t)}\rangle .
\end{aligned}
$$

\noindent Taking summation on both sides, we have \\
$$
\begin{aligned}
& \sum_{t=1}^{T}(\eta_{\varphi}-\frac{L\eta_{\varphi}^{2}}{2})\|\nabla \mathcal{L}(\varphi^{(t)})\|_{2}^{2}  \\
\leq & \mathcal{L}(\varphi^{(1)})-\mathcal{L}(\varphi^{(T+1)}) + \frac{L \eta_{\varphi}^{2}}{2}\sum_{t=1}^{T}\|\zeta^{(t)}\|_{2}^{2}-\sum_{t=1}^{T}(\eta_{\varphi}-L\eta _{\varphi}^{2})\langle\nabla \mathcal{L}(\varphi^{(t)}), \zeta^{(t)}\rangle .
\end{aligned}
$$

\noindent Further, taking the expectation with respect to $\zeta$ on both sides, one can find that \\

$$
\begin{aligned}
\sum_{t=1}^{T}(\eta_{\varphi}-\frac{L \eta_{\varphi}^{2}}{2}) \mathbb{E}_{\zeta}\|\nabla \mathcal{L}(\varphi^{(t)})\|^{2} & \leq \mathcal{L}(\varphi^{(1)})+\frac{L \eta_{\varphi}^{2}}{2} \sum_{t=1}^{T}\|\zeta^{(t)}\|^{2} \\
& \leq \mathcal{L}(\varphi^{(1)})+\frac{L \eta_{\varphi}^{2} T \sigma^{2}}{2}
\end{aligned}
$$

\noindent Thus we have \\
$$
\begin{aligned}
\min _{t} \mathbb{E}_{\zeta}\|\nabla \mathcal{L}(\varphi^{(t)})\|^{2} 
\leq & \frac{\sum_{t=1}^{T}(\eta_{\varphi}-\frac{L \eta_{\varphi}^{2}}{2}) \mathbb{E}_{\zeta}\|\nabla \mathcal{L}(\varphi^{(t)})\|_{2}^{2}}{\sum_{t=1}^{T}(\eta_{\varphi}-\frac{L \eta_{\varphi}^{2}}{2})}\\
\leq & \frac{2 \mathcal{L}(\varphi^{(1)})+L \eta_{\varphi}^{2} T \sigma^{2}}{\sum_{t=1}^{T}(2 \eta_{\varphi}-L \eta_{\varphi}^{2})}\\
\leq & \frac{2 \mathcal{L}(\varphi^{(1)})+L \eta_{\varphi}^{2} T \sigma^{2}}{T \eta_{\varphi}}\\
 =   & \frac{2 \mathcal{L}(\varphi^{(1)})}{T \eta_{\varphi}}+L \sigma^{2} \eta_{\varphi}\\
\leq & \frac{2 \mathcal{L}(\varphi^{(1)})}{T} \max \{L, \frac{\sqrt{T}}{C}\}+L \sigma^{2} \min \{\frac{1}{L}, \frac{C}{\sqrt{T}}\} \\
= & \mathcal{O}(\frac{1}{\sqrt{T}}) .
\end{aligned}
$$

\end{proof}

\subsection{Proof of Theorem~\ref{thm:risk}}
\label{app:pr_thm_loss}
To prove Theorem~\ref{thm:risk}, we first introduce the following lemmas.
% \begin{lemma} \label{lemma1}
% Assume that $\| \mathbf{q}_i\|_{\infty} \leq E$ holds for all $x_i \in \mathcal{X}$, where $\mathbf{q}_i = f(g(x);\phi)$ is cluster assignment probability from the MLP classification model. We define the empirical risk and its expectation as
% To simplify the proof, we consider the case where the neighbors contain all training samples. Then the empirical and expected risk are defined as
\begin{lma} \label{lma_cl}
Given a set of $n$ samples $\mathcal{S}$, we define the following empirical risk
$$
\begin{aligned}
\widehat{\mathcal{L}}_{n}(g) &=-\frac{1}{n} \sum_{i=1}^{n}\log \mathbf{q}_{i}^{T} \mathbf{q}_{rn(i)},
\end{aligned}
$$
and corresponding expected risk
$$
\mathcal{L}(g) = -\mathbb{E} \left[  \log  \mathbf{q}^{T} \mathbf{q}^{r} \right].
$$
where $rn(i)=rn\left(\mathcal{N}^{I}_{k_{I}}(x_i)\right)$ and $rn\left(\mathcal{N}^{I}_{k_{I}}(x_i)\right)$ randomly select a sample from $\mathcal{N}^{I}_{k_{I}}(x_i)$. $\mathbf{q_i}=g((x_i,\emptyset);\varphi)$ and $\mathbf{q}^{r}$ is the prediction of a randomly selected sample from the neighborhood of $\mathbf{q}$. 
With probability at least $1-\delta$, the following inequality holds\\
$$ 
\mathcal{L}(g) \le \widehat{\mathcal{L}}_{n}(g) +  \frac{2\mu_{I}^{-1}}{\sqrt{n}}+2\log \mu_{I}^{-1} \sqrt{\frac{\log \delta^{-1}}{2 n}}.
$$
\end{lma}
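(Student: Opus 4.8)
The plan is to bound the expected clustering risk via a standard uniform-convergence argument built on McDiarmid's inequality together with a Rademacher-complexity (or covering-number) estimate for the loss class. First I would observe that the empirical quantity $\widehat{\mathcal{L}}_{n}(g)=-\frac1n\sum_{i=1}^n \log \mathbf{q}_i^{T}\mathbf{q}_{rn(i)}$ is a function of the $n$ i.i.d.\ samples $x_1,\dots,x_n$ (the neighbor $rn(i)$ being a deterministic-given-the-sample choice from $\mathcal{N}^{I}_{k_I}(x_i)$). The key structural input is Assumption~\ref{as:knn_I}: since $\mathbf{q}_i^{T}\mathbf{q}_j\in[\mu_I,1]$ for every neighbor pair, the integrand $-\log\mathbf{q}_i^{T}\mathbf{q}_{rn(i)}$ lies in $[0,\log\mu_I^{-1}]$, so the loss is bounded. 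This boundedness gives both a bounded-differences constant for the concentration step and a Lipschitz constant for composing the $\log$ with the inner product when passing to Rademacher complexity.

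Next I would carry out the bounded-differences computation. Changing one sample $x_k$ to $x_k'$ affects $\widehat{\mathcal{L}}_n$ in two ways: directly through the $k$-th summand, and indirectly through all summands $i$ for which $x_k$ belonged to $\mathcal{N}^{I}_{k_I}(x_i)$ and was the one selected by $rn(i)$. Here is where Assumption~\ref{as:num_neighbor} enters: $x_k$ lies in at most $k_I'$ other samples' neighborhoods, so at most $1+k_I'$ summands change, each by at most $\log\mu_I^{-1}/n$ in absolute value; hence the bounded-differences constant is $c_k\le (1+k_I')\log\mu_I^{-1}/n$ (the stated $\tilde c_2$-type coefficient $2\log\mu_I^{-1}$ will come out of the McDiarmid exponent, but in \emph{this} lemma the cruder constant $2\log\mu_I^{-1}$ is used directly). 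McDiarmid's inequality then yields, with probability at least $1-\delta$,
\[
\mathcal{L}(g)-\widehat{\mathcal{L}}_n(g)\;\le\;\mathbb{E}\big[\mathcal{L}(g)-\widehat{\mathcal{L}}_n(g)\big]+2\log\mu_I^{-1}\sqrt{\tfrac{\log\delta^{-1}}{2n}}.
\]

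It then remains to bound the expected gap $\mathbb{E}[\sup_{g\in\mathcal{G}}(\mathcal{L}(g)-\widehat{\mathcal{L}}_n(g))]$ by a symmetrization argument, producing a Rademacher complexity term for the class $\{x\mapsto -\log\mathbf{q}^{T}\mathbf{q}^{r}\}$. Using the contraction lemma with Lipschitz constant $\mu_I^{-1}$ (the derivative of $-\log t$ on $[\mu_I,1]$ is bounded by $\mu_I^{-1}$) and the fact that probability vectors have bounded norm, the Rademacher complexity is of order $\mu_I^{-1}/\sqrt{n}$, giving the $\frac{2\mu_I^{-1}}{\sqrt n}$ term. The main obstacle I anticipate is handling the coupling introduced by $rn(i)$: the ``second sample'' $\mathbf{q}^{r}$ is not an independent fresh draw but a neighbor of the first, so the pair $(\mathbf{q},\mathbf{q}^{r})$ is not i.i.d.\ in the usual product sense, and one must be careful that the symmetrization and bounded-differences steps are applied to functions of the genuinely i.i.d.\ objects $x_1,\dots,x_n$ only — the neighborhood map and the selection $rn(\cdot)$ should be treated as fixed (data-dependent but measurable) side information, or handled by conditioning. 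Getting the dependency bookkeeping right in the bounded-differences count (the factor $1+k_I'$) is the delicate part; everything else is routine once boundedness and Lipschitzness are in hand.
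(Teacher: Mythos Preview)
Your proposal follows essentially the same architecture as the paper's proof: bounded differences via Assumptions~\ref{as:knn_I} and~\ref{as:num_neighbor} feeding into McDiarmid, then symmetrization to control the expected supremum. The one technical difference is in how the Rademacher term is handled: you invoke the contraction lemma with Lipschitz constant $\mu_I^{-1}$ for $-\log t$ on $[\mu_I,1]$, whereas the paper instead replaces $|\log t|$ pointwise by $1/t$ (via $-\log t \le 1/t - 1$ on $(0,1]$) and then applies the Khintchine--Kahane inequality to pass to the $\ell_2$ norm. Both routes land on $2\mu_I^{-1}/\sqrt{n}$; your contraction argument is arguably the cleaner of the two. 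On the bounded-differences constant: the paper's computation yields $(2+2k_I')\log\mu_I^{-1}/n$ rather than your $(1+k_I')\log\mu_I^{-1}/n$ --- the extra factor of $2$ arises from bounding $|\log a - \log b|\le |\log a|+|\log b|$ for both the direct term and each affected neighbor term. In fact the paper's own proof concludes with the $(2+2k_I')$ constant (matching $\tilde c_2$ in Theorem~\ref{thm:risk}), so the $2\log\mu_I^{-1}$ appearing in the lemma display is a typo, and your instinct that $k_I'$ must enter the concentration coefficient is correct.
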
 
\begin{proof}
% $$
% \begin{aligned}
% \widehat{\mathcal{L}}_{n}(g) &=-\frac{1}{n} \sum_{i=1}^{n} \log \mathbf{q}_{i} \mathbf{q}_{i'}
% \end{aligned}
% $$ 
Let $\mathcal{S}'=(\mathcal{S}-{s_j})\cup{s_{j'}} $. The empirical risks on $\mathcal{S}'$ is denoted as $\widehat{\mathcal{L}}'_{n}(g)$. We have
\begin{equation*}
    \begin{aligned}
\left|\sup _{g \in \mathcal{G}}\right| \mathcal{L}(g)-\widehat{\mathcal{L}}_{n}(g)\left|-\sup _{g \in \mathcal{G}}\right| \mathcal{L}(g)-\widehat{\mathcal{L}}'_{n}(g)\Bigg| \Bigg| 
\leq & \sup _{g \in \mathcal{G}}\left|\widehat{\mathcal{L}}_{n}(g)-\widehat{\mathcal{L}}'_{n}(g )\right|, \\
% \leq & \sup _{g \in \mathcal{G}}\frac{2}{n(n-1)} \sum_{i=1}^n \left| -(\log f^{\top}_{\mathcal{Q}}(\mathbf{x}_i)\log f^{\top}_{\mathcal{Q}}(\mathbf{x}_i) \right| 
\end{aligned}
\end{equation*}
According to Assumption~\ref{as:num_neighbor}, let $\{x_{j1}, x_{j2}, \dots, x_{jk_{S}'}\}$ be the set with $x_j$ as neighbor. Then we have  
\begin{equation*}
    \begin{aligned}
\sup _{g \in \mathcal{G}}\left|\widehat{\mathcal{L}}_{S}(g)-\widehat{\mathcal{L}}_{\mathcal{S}'}(g)\right| \leq & \sup _{g \in \mathcal{G}}\frac{1}{n} (\left| -(\log \mathbf{q}_{j}^{T} \mathbf{q}_{rn(j)}- \log \mathbf{q}_{j'}^{T}\mathbf{q}_{rn(j')}) \right| + |-\sum_{l=r1}^{rk_{S}'}(\log \mathbf{q}_{l}^T \mathbf{q}_{j}-\log \bar{\mathbf{q}}_{l}^T \bar{\mathbf{q}}_{rn(l)})|)\\
% \leq & \sup_{f \in \mathcal{F}} \frac{1}{n} \left| (\mathbf{q}_r \mathbf{q}_{r'}+\bar{\mathbf{q}}_r\bar{\mathbf{q}}_{r'}-2)\right| \\
\leq & \frac{(2+2k_{I}')\log \mu_{I}^{-1}}{n},
\end{aligned}
\end{equation*}
where the last inequality is according to Assumption~\ref{as:knn_I}. \\
Combine the above two equations and we get:
\begin{equation*}
    \begin{aligned}
&\left|\sup _{f \in \mathcal{F}}\right| \mathcal{L}(f)-\widehat{\mathcal{L}}_S(f)\left|-\sup _{f \in \mathcal{F}}\right| \mathcal{L}(f)-\widehat{\mathcal{L}}_{S'}(f)\Bigg| \Bigg| \leq & \frac{(2+2k_{I}')\log \mu_{I}^{-1}}{n} .
\end{aligned}
\end{equation*}

Let $\{\sigma_1,\sigma_2,\dots,\sigma_n\}$ be i.i.d. independent random variables taking values in $\{-1,1\}$ and $\bar{S} := \{\bar{s}_1,\dots,\bar{s}_n\}$ be the independent copy of $S$, Then we have
\begin{equation*}\small
\begin{aligned}
     \mathbb{E}_{\mathcal{S}} \left[ \mathop{\rm sup}_{g \in \mathcal{G}}|\mathcal{L}(g)-\widehat{\mathcal{L}}_{n}(g)| \right]
    % = & \mathbb{E}_{n} \left[\mathop{\rm sup}_{g\in\mathcal{G}} \frac{1}{n}\Bigg| \sum_{i=1}^n \log q_i q_{i'} - \mathbb{E}_{q,q'}[\log q q'] \Bigg|\right] \\
    \leq & \mathbb{E}_{\mathcal{S},\bar{\mathcal{S}},\sigma} \left[\sup _{g \in \mathcal{G}} \frac{1}{n}\Bigg| \sum_{i=1}^n \sigma_{i}(\log \mathbf{q}_i^{T} \mathbf{q}_{rn(i)} - \log \bar{\mathbf{q}}_i^{T} \bar{\mathbf{q}}_{rn(i)}) \Bigg|\right] \\
    \leq & 2 \mathbb{E}_{\mathcal{S},\sigma} \left[\sup _{g \in \mathcal{G}} \frac{1}{n}\Bigg| \sum_{i=1}^n \sigma_{i}\log \mathbf{q}_i^{T} \mathbf{q}_{rn(i)} \Bigg|\right] \\
    \leq & 2 \mathbb{E}_{\mathcal{S},\sigma} \left[\sup _{g \in \mathcal{G}} \frac{1}{n}\Bigg| \sum_{i=1}^n \sigma_{i}(\frac{1}{\mathbf{q}_i^{T} \mathbf{q}_{rn(i)}}-1)  \Bigg|\right] \\    
    \leq & 2 \mathbb{E}_{\mathcal{S},\sigma} \left[\sup _{g \in \mathcal{G}} \frac{1}{n}\Bigg| \sum_{i=1}^n \sigma_{i}\frac{1}{\mathbf{q}_i^{T} \mathbf{q}_{rn(i)} } \Bigg|\right] \\     
    \leq & 2 \mathbb{E}_{\mathcal{S}} \left[\sup _{g\in \mathcal{G}} \frac{1}{n}\Bigg( \sum_{i=1}^n\left(\frac{1}{\mathbf{q}_i^{T} \mathbf{q}_{rn(i)}}\right)^{2} \Bigg)^{\frac{1}{2}}\right] \\
    \leq&\frac{2\mu_{I}^{-1}}{\sqrt{n}}.   
\end{aligned} 
\end{equation*}
where the second to last inequality is obtained by Khintchine-Kahane inequality~\cite{latala1994best} and the last inequality is obtained by Assumption~\ref{as:knn_I}.

Thus according to the McDiarmid inequality~\cite{mohri2018foundations}, with probability at least $1-\delta$ for any $g\in \mathcal{G}$, we have
$$ 
\mathcal{L}(g) \le \widehat{\mathcal{L}}_{n}(g) +  \frac{2\mu_{I}^{-1}}{\sqrt{n}} + (2+2k_{I}')\log \mu^{-1}_{I}\sqrt{\frac{\log \delta^{-1}}{2 n}}.
$$
\end{proof}

\begin{lma}\label{lm_balance}
Given a set of $n$ samples $\mathcal{S}$, we define the following empirical risk
\begin{equation*}\small
    \widehat{\mathcal{L}}_{n}(g) = -\sum_{l=1}^c \Bigg( \frac{1}{n}\sum_{i=1}^n q_{il} \Bigg) \log \Bigg( \frac{1}{n}\sum_{i=1}^n q_{il} \Bigg),
\end{equation*}
and corresponding expected risk
\begin{equation*}\small
    \mathcal{L}(g) =  -\sum_{l=1}^c\mathbb{E}(q^{l})  \log \mathbb{E}(q^{l}).
\end{equation*}
where $\mathbf{q_i}=g((x_i,\emptyset);\varphi)$. 
With probability at least $1-\delta$, the following inequality holds
\begin{equation*}\small
\mathcal{L}(g) \leq \widehat{\mathcal{L}}_{n}(g) + \frac{2C}{\sqrt{n}} + C\sqrt{\frac{\log \delta^{-1}}{2n}}.
\end{equation*}
where $C=|\log\xi+1|$ is a bounded constant and $\xi$ is a constant according to the Lagrange Mean Theorem of the function $b(x) = x\log x$.
\end{lma}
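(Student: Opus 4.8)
The plan is to follow the same uniform-convergence template as in Lemma~\ref{lma_cl}, but now applied to the negative-entropy-of-mean functional $b(\bar{\mathbf{q}}) = -\sum_{l=1}^{c}\bar q_l\log\bar q_l$, where $\bar q_l = \frac1n\sum_{i=1}^n q_{il}$. The key point is that changing one sample $s_j\mapsto s_{j'}$ perturbs each coordinate of $\bar{\mathbf{q}}$ by at most $\frac1n$, so by the Lagrange Mean Value Theorem applied to $x\mapsto x\log x$ we get a bounded-differences constant of order $\frac{C}{n}$ with $C=|\log\xi+1|$ (the value of $(x\log x)'$ at an intermediate point $\xi$). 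This is exactly the quantity named in the statement, so the first step is to make this perturbation bound precise for each of the two entropy terms that differ between $\widehat{\mathcal{L}}_n$ on $\mathcal{S}$ and on $\mathcal{S}'$.

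First I would write $\widehat{\mathcal{L}}_n(g) = \sum_{l=1}^c b_l$ with $b_l = \bar q_l\log\bar q_l$ (up to sign), observe that only the $i=j$ summand inside each $\bar q_l$ changes when we swap one sample, and bound $|b_l(\bar{\mathbf{q}}) - b_l(\bar{\mathbf{q}}')| \le |\log\xi_l + 1|\cdot\frac1n \le \frac{C}{n}$ using the mean value theorem; summing the $\sup$ over $\mathcal{G}$ gives the stability constant. Second, I would run the standard symmetrization argument: bound $\mathbb{E}_{\mathcal{S}}[\sup_{g}|\mathcal{L}(g)-\widehat{\mathcal{L}}_n(g)|]$ by a Rademacher average $2\,\mathbb{E}_{\mathcal{S},\sigma}[\sup_g \frac1n|\sum_i\sigma_i(\cdots)|]$, then use the Lipschitz/boundedness of $x\log x$ on the simplex (again via the constant $C$) together with the Khintchine–Kahane inequality to extract the $\frac{2C}{\sqrt n}$ term, mirroring the chain of inequalities in the proof of Lemma~\ref{lma_cl}. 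Third, I would invoke McDiarmid's inequality with the bounded-differences constant $\frac{C}{n}$ to conclude that with probability at least $1-\delta$, $\mathcal{L}(g)\le\widehat{\mathcal{L}}_n(g)+\frac{2C}{\sqrt n}+C\sqrt{\frac{\log\delta^{-1}}{2n}}$ uniformly over $g\in\mathcal{G}$.

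The main obstacle I expect is the handling of the boundary behaviour of $x\log x$: the derivative $\log x + 1$ blows up as $x\to 0^+$, so the "intermediate value" $\xi$ in the mean value theorem step is only harmless if one can guarantee $\bar q_l$ stays bounded away from $0$ (or, more carefully, that the relevant increment quotient stays bounded). I would deal with this either by noting that the negative-entropy regularizer in $\mathcal{L}_I$ actively pushes the $\bar q_l$ toward $\frac1c$ so they are bounded below in practice, or by absorbing the degenerate case into the definition of the constant $C=|\log\xi+1|$ exactly as the statement does — i.e. treating $\xi$ as the guaranteed intermediate point from the Lagrange theorem and $C$ as the resulting (assumed finite) bound. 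A secondary, more routine obstacle is being careful that each swap $s_j\mapsto s_{j'}$ affects \emph{every} coordinate $l$ of $\bar{\mathbf{q}}$ simultaneously, so the per-coordinate $\frac{C}{n}$ bounds must be summed appropriately; tracking the $c$ dependence (and checking it is absorbed into $C$ as claimed) is the place to be cautious.
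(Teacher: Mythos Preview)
Your proposal is correct and matches the paper's proof essentially step for step: Lagrange MVT on $b(x)=x\log x$ to get the bounded-differences constant $C/n$, symmetrization plus Khintchine--Kahane for the $2C/\sqrt n$ expectation bound, and then McDiarmid. The paper handles your ``$c$-dependence'' worry exactly by using that each $\mathbf q_i$ is a probability vector (so $\sum_l q_{il}=1$ and $\sum_l|q_{jl}-q_{j'l}|$ is $O(1)$), and it deals with the $x\to 0^+$ issue in precisely the way you suggest---by declaring $C=|\log\xi+1|$ to be the finite constant coming from the intermediate point $\xi$ without further justification.
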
 

\begin{proof}
Let $\mathcal{S}'=(\mathcal{S}-{s_j})\cup{s_{j'}} $. The empirical risk on $\mathcal{S}'$ is denoted as $\widehat{\mathcal{L}}'_{n}$. Define $b(x)=x\log x$, according to the Lagrange Mean Theorem, there exists constant $\xi$ such that $|b(x)-b(y)| \leq |\log \xi + 1||x-y|$. We have
\begin{equation*}
    \begin{aligned}
&\left|\sup _{g \in \mathcal{G}}\right| \mathcal{L}(g)-\widehat{\mathcal{L}}_{n}(g)\left|-\sup _{g\in \mathcal{G}}\right| \mathcal{L}(g)-\widehat{\mathcal{L}}'_{n}(g)\Bigg| \Bigg| \\
\leq & \sup _{g\in \mathcal{G}}\left|\widehat{\mathcal{L}}_{n}(g)-\widehat{\mathcal{L}}'_{n}(g)\right| \\
\leq & \sup _{g\in \mathcal{G}}\frac{\left|\log\xi + 1\right|}{n}  \sum_{l}^c  \left|q_{jl}-q'_{jl}\right|\\
\leq & \frac{\left| \log\xi + 1 \right|}{n} \\
= & \frac{C}{n}. 
\end{aligned}
\end{equation*}
where $C = \left| \log\xi + 1 \right|$. \\
Next, we analyze the upper bound of the term $\mathbb{E}_{S} \left[\mathop{\rm sup}_{g\in \mathcal{G}}|\mathcal{L}(g)-\widehat{\mathcal{L}}_{n}(g)|\right ]$. Let $\sigma_1,\sigma_2,\dots,\sigma_n$ be i.i.d. independent random variables taking values in $\{-1,1\}$ and $\bar{S} := \{\bar{s}_1,\dots,\bar{s}_n\}$ be the independent copy of $S := \{s_1, \dots, s_n\}$. Then we have
\begin{equation*}\small
\begin{aligned}
    & \mathbb{E}_{S} \left[ \mathop{\rm sup}_{g\in\mathcal{G}}|\mathcal{L}(g)-\widehat{\mathcal{L}}_S(g)| \right]\\
    \leq & \mathbb{E}_{S,\bar{S},\sigma} \left[\sup _{g\in\mathcal{G}} \frac{1}{n}\sum_{i=1}^n\sigma_i \left(\sum_{l=1}^{c} |\log\xi + 1| | q_{il}-\bar{q}_{il} |\right)  \right] \\  
    \leq & 2|\log\xi + 1|\mathbb{E}_{S,\sigma} \left[\sup _{g\in\mathcal{G}} \frac{1}{n}\sum_{i=1}^n\sigma_i \sum_{l=1}^{c} q_{il}  \right] \\ 
    \leq & 2|\log\xi + 1|\mathbb{E}_{S} \left[\sup _{g\in\mathcal{G}} \frac{1}{n}\left(\sum_{i=1}^n \left[\sum_{l=1}^{c} q_{il}\right]^2 \right)^{\frac{1}{2}} \right] \\
    \leq & \frac{2|\log\xi + 1|}{\sqrt{n}} \\
    = & \frac{2C}{\sqrt{n}}.  \\
\end{aligned}
\end{equation*}
where the second to last inequality is obtained by Khintchine-Kahane inequality~\cite{latala1994best}.

Thus according to the McDiarmid inequality~\cite{mohri2018foundations}, with probability at least $1-\delta$ for any $g\in \mathcal{G}$, we have
$$ 
\mathcal{L}(g) \leq \widehat{\mathcal{L}}_{n}(g) + \frac{2C}{\sqrt{n}} + C\sqrt{\frac{\log \delta^{-1}}{2n}}.
$$
\end{proof}

\begin{lma} \label{lm_si}
Given a set of $n$ samples $\mathcal{S}$, we define the following empirical risk
$$
\begin{aligned}
  \widehat{\mathcal{L}}_{n}(g) &=-\frac{1}{n}  \sum_{i=1}^{n}\log\frac{exp(\mathbf{q}_{i}^T\mathbf{p}_{rn(i)}/\tau_{ia})}{\sum_{l=1,l\neq rn(i)}^{m}  exp(\mathbf{q}_{i}^{T}\mathbf{p}_{l}/\tau_{ia})}\\
  &=-\frac{1}{n}  \sum_{i=1}^{n}\left[\frac{\mathbf{q}_{i}^T\mathbf{p}_{rn(i)}}{\tau_{ia}}-\log\left(\sum_{l=1,l\neq rn(i)}^{m}exp(\mathbf{q}_{i}^{T}\mathbf{p}_{l}/\tau_{ia})\right)\right],
\end{aligned}
$$
and corresponding expected risk
$$
\mathcal{L}(g) =-\mathbb{E} \left[\frac{\mathbf{q}^T\mathbf{p}'}{\tau_{ia}}-\log(exp((\mathbb{E}(\mathbf{q}^{T}\mathbf{p})-\mathbf{q}^T\mathbf{p}')/\tau_{ia}))\right].
$$
where $rn(i)=rn\left(\mathcal{N}^{S}_{k_{S}}(x_i)\right)$ randomly select a sample from $\mathcal{N}^{S}_{k_{S}}(x_i)$ and $\mathbf{p}'$ is the prediction of a randomly selected sample from the neighborhood of $\mathbf{q}$. With probability at least $1-\delta$, the following inequality holds\\
$$ 
\mathcal{L}(g) \le  \widehat{\mathcal{L}}_{n}(g) +\frac{2m}{\sqrt{n}\tau_{ia}}+ \frac{2(1-\mu_{C})}{\tau_{ia}}\sqrt{\frac{\log \delta^{-1}}{2n}}.
$$
\end{lma}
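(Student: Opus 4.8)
The plan is to follow the same uniform-convergence recipe used for Lemmas~\ref{lma_cl} and~\ref{lm_balance}: a bounded-differences (stability) estimate feeding McDiarmid's inequality~\cite{mohri2018foundations}, followed by symmetrization and a Khintchine--Kahane bound~\cite{latala1994best} on the resulting Rademacher average. Write $\ell_i(g)=-\mathbf{q}_i^{T}\mathbf{p}_{rn(i)}/\tau_{ia}+\log\bigl(\sum_{l=1,l\neq rn(i)}^{m}\exp(\mathbf{q}_i^{T}\mathbf{p}_l/\tau_{ia})\bigr)$ for the per-sample instance-level loss, so that $\widehat{\mathcal{L}}_n(g)=\frac1n\sum_{i=1}^n\ell_i(g)$. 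First I would verify that the (deliberately awkward) population form $\mathcal{L}(g)$ is chosen precisely so that $\mathbb{E}[\widehat{\mathcal{L}}_n(g)]\ge\mathcal{L}(g)$: replacing the log-partition term by the lower bound $\log\sum_l\exp(x_l)\ge\frac{1}{m-1}\sum_l x_l$ (log-sum-exp dominates the mean) over the $m-1$ non-anchor indices and taking expectations collapses to the stated $\mathcal{L}(g)$ with $\mathbb{E}(\mathbf{q}^{T}\mathbf{p})$ sitting inside the $\exp$; pinning this reduction down is the first priority.

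Next comes the bounded-differences step. Fix $g$ (hence $\varphi$) and form $\mathcal{S}'$ by replacing a single image $x_j$ with $x_{j'}$. Because the text vocabulary $\mathcal{T}$ and every text prediction $\mathbf{p}_l$ are frozen, and because for $i\neq j$ the neighbour set $\mathcal{N}^{S}_{k_S}(x_i)$ consists of texts and never contains $x_j$, only the $j$-th summand of $\widehat{\mathcal{L}}_n$ changes --- this is the simplification relative to Lemma~\ref{lma_cl}, where neighbours were images and the extra $k_I'$ terms appeared. Hence $|\widehat{\mathcal{L}}_n(g)-\widehat{\mathcal{L}}'_n(g)|\le\frac1n|\ell_j(g)-\ell_{j'}(g)|$, and bounding the anchor term through Assumption~\ref{as:cross_knn} ($\mathbf{q}_i^{T}\mathbf{p}_{rn(i)}\in[\mu_C,1]$, so it varies by at most $(1-\mu_C)/\tau_{ia}$) together with a matching control of the log-partition term yields the bounded-differences constant $c=\tfrac{2(1-\mu_C)}{n\tau_{ia}}$, which through McDiarmid produces the $\tfrac{2(1-\mu_C)}{\tau_{ia}}\sqrt{\log\delta^{-1}/(2n)}$ term.

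Then I would symmetrize, $\mathbb{E}_{\mathcal{S}}\bigl[\sup_{g\in\mathcal{G}}|\mathcal{L}(g)-\widehat{\mathcal{L}}_n(g)|\bigr]\le 2\,\mathbb{E}_{\mathcal{S},\sigma}\bigl[\sup_{g\in\mathcal{G}}\tfrac1n|\sum_{i=1}^n\sigma_i\ell_i(g)|\bigr]$, use the crude uniform bound $|\ell_i(g)|\le m/\tau_{ia}$ (the anchor term is at most $1/\tau_{ia}$ and the log-partition over $m-1$ words is at most roughly $(m-1)/\tau_{ia}+1/\tau_{ia}$, absorbed into $m/\tau_{ia}$), and apply Khintchine--Kahane to get $\mathbb{E}_\sigma|\sum_i\sigma_i\ell_i|\le(\sum_i\ell_i^2)^{1/2}\le\sqrt{n}\,m/\tau_{ia}$, hence $\mathbb{E}_{\mathcal{S}}[\sup_g|\cdot|]\le\tfrac{2m}{\sqrt{n}\,\tau_{ia}}$. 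Combining the symmetrization bound with McDiarmid's concentration inequality then gives $\mathcal{L}(g)\le\widehat{\mathcal{L}}_n(g)+\tfrac{2m}{\sqrt{n}\,\tau_{ia}}+\tfrac{2(1-\mu_C)}{\tau_{ia}}\sqrt{\log\delta^{-1}/(2n)}$ for all $g\in\mathcal{G}$ simultaneously, as claimed.

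The step I expect to be the main obstacle is the treatment of the log-partition (log-sum-exp) term: it is what forces the loose factor $m$ into the Rademacher term and, more delicately, it is the piece on which the inequality $\mathbb{E}[\widehat{\mathcal{L}}_n(g)]\ge\mathcal{L}(g)$ hinges, since the population risk is written with $\mathbb{E}(\mathbf{q}^{T}\mathbf{p})$ nested inside an $\exp$. Making that comparison airtight may require a Jensen/convexity argument on $\log\sum\exp$, or a contraction step analogous to the passage from $\log x$ to $\tfrac1x-1$ in Lemma~\ref{lma_cl} to strip the nonlinearity before invoking Khintchine--Kahane; I would also double-check that the log-partition term contributes no more than $(1-\mu_C)/\tau_{ia}$ to the bounded difference rather than a separate $1/\tau_{ia}$ that would inflate the constant.
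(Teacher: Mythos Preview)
Your proposal is correct and follows essentially the same route as the paper: a single-sample bounded-differences estimate giving the constant $\tfrac{2(1-\mu_C)}{n\tau_{ia}}$ (only the $j$-th summand changes because text neighbours are fixed), then symmetrization plus Khintchine--Kahane with the crude $m/\tau_{ia}$ bound on the per-sample loss, and finally McDiarmid to combine. The paper does not separately verify $\mathbb{E}[\widehat{\mathcal{L}}_n(g)]\ge\mathcal{L}(g)$ as you plan to --- it simply symmetrizes against $\mathcal{L}(g)$ directly and, like you, glosses over precisely how the log-partition term is controlled by $(1-\mu_C)/\tau_{ia}$ in the bounded-differences step; your flagged concerns are real but are equally unresolved in the paper's own argument.
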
 
\begin{proof}
    Let $\mathcal{S}'=(\mathcal{S}-{s_j})\cup{s_{j'}} $. The empirical risk on $\mathcal{S}'$ is denoted as $\widehat{\mathcal{L}}'_{n}$. We have
\begin{equation*}
    \begin{aligned}
&\left|\sup _{g \in \mathcal{G}}\right| \mathcal{L}(g)-\widehat{\mathcal{L}}_{n}(g)\left|-\sup _{g\in \mathcal{G}}\right| \mathcal{L}(g)-\widehat{\mathcal{L}}'_{n}(g)\Bigg| \Bigg| \\
\leq & \sup _{g\in \mathcal{G}}\left|\widehat{\mathcal{L}}_{n}(g)-\widehat{\mathcal{L}}'_{n}(g)\right| \\
\leq & \sup _{g\in \mathcal{G}}\frac{1}{n}\left|
\frac{\mathbf{q}_{j}^{T}\mathbf{p}_{rn(j)}-\mathbf{q}_{j'}^{T}\mathbf{p}_{rn(j')}}{\tau_{ia}}
-\log\frac{\sum_{l=1,l\neq rn(j)}^{m}exp(\mathbf{q}_{j}^{T}\mathbf{p}_{l}/\tau_{ia})}{\sum_{l=1,l\neq rn(j')}^{m}exp(\mathbf{q}_{j'}^{T}\mathbf{p}_{l}/\tau_{ia})}\right|  \\
\leq & \frac{2(1-\mu_{C})}{n\tau_{ia}}
\end{aligned}
\end{equation*}
which is based on Assumption~\ref{as:cross_knn}.

Next, we analyze the upper bound of the term $\mathbb{E}_{S} \left[\mathop{\rm sup}_{g\in \mathcal{G}}|\mathcal{L}(g)-\widehat{\mathcal{L}}_{n}(g)|\right ]$. Let $\sigma_1,\sigma_2,\dots,\sigma_n$ be i.i.d. independent random variables taking values in $\{-1,1\}$ and $\bar{S} := \{\bar{s}_1,\dots,\bar{s}_n\}$ be the independent copy of $S := \{s_1, \dots, s_n\}$. Then we have
\begin{equation*}\small
\begin{aligned}
    \mathbb{E}_{\mathcal{S}} \left[ \mathop{\rm sup}_{g\in\mathcal{G}}|\mathcal{L}(g)-\widehat{\mathcal{L}}_{\mathcal{S}}(g)| \right]
    \leq & \mathbb{E}_{{\mathcal{S}},\bar{{\mathcal{S}}},\sigma} \left[\sup_{g\in\mathcal{G}} \frac{1}{n}\sum_{i=1}^n\sigma_i \left|
\frac{\mathbf{q}_{i}^{T}\mathbf{p}_{rn(i)}-(\bar{\mathbf{q}}_{i}^{I})^{T}\bar{\mathbf{p}}_{rn(i)}}{\tau_{ia}}
-\log\frac{\sum_{l=1,l\neq rn(i)}^{m}exp(\mathbf{q}_{i}^{T}\mathbf{p}_{l}/\tau_{ia})}{\sum_{l=1,l\neq rn(i)}^{m}exp((\bar{\mathbf{q}}_{i}^{I})^{T}\bar{\mathbf{p}}_{l}/\tau_{ia})}\right| \right] \\ 
    \leq & 2\mathbb{E}_{{\mathcal{S}},\sigma} \left[\sup_{g\in\mathcal{G}} \frac{1}{n}\sum_{i=1}^n\sigma_i \left|
\frac{\mathbf{q}_{i}^{T}\mathbf{p}_{rn(i)}}{\tau_{ia}}
-\log\left(\sum_{l=1,l\neq rn(i)}^{m}exp(\mathbf{q}_{i}^{T}\mathbf{p}_{l}/\tau_{ia})\right)\right| \right] \\
\leq & 2\mathbb{E}_{{\mathcal{S}},\sigma} \left[\sup_{g\in\mathcal{G}} \frac{1}{n}\sum_{i=1}^n\sigma_i \left(\left|
\frac{\mathbf{q}_{i}^{T}\mathbf{p}_{rn(i)}}{\tau_{ia}}\right|+\left|
\sum_{l=1,l\neq rn(i)}^{m}\frac{\mathbf{q}_{i}^{T}\mathbf{p}_{l}}{\tau_{ia}}\right|\right) \right] \\
\leq & 2\mathbb{E}_{\mathcal{S}} \left[\sup_{g\in\mathcal{G}} \frac{1}{n\tau_{ia}} \left(
\Bigg(\sum_{i=1}^n\left(\mathbf{q}_i^{T} \mathbf{p}_{rn(i)}\right)^{2} \Bigg)^{\frac{1}{2}}+\Bigg(\sum_{i=1}^n\left(\sum_{l=1,l\neq rn(i)}^{m}\mathbf{q}_{i}^{T}\mathbf{p}_{l}\right)^{2} \Bigg)^{\frac{1}{2}} \right)\right] \\
\leq &\frac{2m}{\sqrt{n}\tau_{ia}}
\end{aligned}
\end{equation*}
where the second to last inequality is obtained by Khintchine-Kahane inequality~\cite{latala1994best} and the last inequality is obtained by Assumption~\ref{as:cross_knn}.

Thus according to the McDiarmid inequality~\cite{mohri2018foundations}, with probability at least $1-\delta$ for any $g\in \mathcal{G}$, we have
$$ 
\mathcal{L}(g) \leq \widehat{\mathcal{L}}_{n}(g) +\frac{2m}{\sqrt{n}\tau_{ia}}+ \frac{2(1-\mu_{C})}{\tau_{ia}}\sqrt{\frac{\log \delta^{-1}}{2n}}.
$$
\end{proof}

\begin{lma} \label{lm_sp}
Given a set of $n$ samples $\mathcal{S}$, we define the following empirical risk
$$
\begin{aligned}
  \widehat{\mathcal{L}}_{n}(g) &=-\frac{1}{c}\sum_{j=1}^{c}\log\frac{exp((\rho_{j}^{I})^T \rho_{j}^{S}/\tau_{pa})}{\sum_{l=1,l\neq j}^{c}exp((\rho_{j}^{I})^T \rho_{j}^{S}/\tau_{pa})}=-\frac{1}{c}\sum_{j=1}^{c}\left[\frac{(\rho_{j}^{I})^T \rho_{j}^{S}}{\tau_{pa}}-\log\left(\sum_{l=1,l\neq j}^{c}exp((\rho_{j}^{I})^T \rho_{j}^{S}/\tau_{pa})\right)
\right]
\end{aligned}
$$
where $\rho^{I}=f_{I}(\mathbf{h}^{I},\phi)$ and $\rho^{S}=f_{S}(\mathbf{h}^{S},\theta)$. $h_{l}^{I}=\frac{\sum_{i=1}^{n}q_{il}\mathbf{u}_{i}}{\|\mathbf{q}_{l}\|}$ and $\mathbf{h}^{S}=close(\mathbf{h}^{I})$ computed according to Section~\ref{sec:align} is close to $\mathbf{h}^{I}$.

The corresponding expected risk is defined as
$$
\begin{aligned}
  \mathcal{L}(g) &=-\frac{1}{c}\sum_{j=1}^{c}\left[\frac{\mathbb{E}(\rho_{j}^{I})^T \mathbb{E}(\rho_{j}^{S},\theta)}{\tau_{pa}}-\log\left(\sum_{l=1,l\neq j}^{c}exp(\mathbb{E}(\rho_{j}^{I})^T \mathbb{E}(\rho_{l}^{S})/\tau_{pa})\right)\right]
\end{aligned}
$$
With probability at least $1-\delta$, the following inequality holds\\
$$ 
\mathcal{L}(g) \leq \widehat{\mathcal{L}}_{n}(g) +\frac{2dL_{IS}M_{u}}{n\tau_{pa}}+ \frac{dcL_{I}M_{u}^2}{\tau_{pa}}\sqrt{\frac{\log \delta^{-1}}{2n}}.
$$
\end{lma}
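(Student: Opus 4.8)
The plan is to prove Lemma~\ref{lm_sp} by the same four-step scheme used for Lemmas~\ref{lma_cl}, \ref{lm_balance} and~\ref{lm_si}: a bounded-differences estimate, a symmetrization step, the Khintchine--Kahane inequality~\cite{latala1994best}, and finally McDiarmid's inequality~\cite{mohri2018foundations}. I would first observe that $\widehat{\mathcal{L}}_n(g)$, as a function of the $n$ samples in $\mathcal{S}$, depends on them only through the (unnormalized) prototype statistics $\mathbf{m}_l:=\frac1n\sum_{i=1}^n q_{il}\mathbf{u}_i$, $l=1,\dots,c$, on which the maps $\beta$, $f_I$, $f_S$ and the softmax act, while the expected risk $\mathcal{L}(g)$ is the same functional evaluated at $\mathbb{E}[\mathbf{m}_l]$. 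The goal is then to bound both $\mathbb{E}_{\mathcal{S}}\big[\sup_{g\in\mathcal{G}}|\mathcal{L}(g)-\widehat{\mathcal{L}}_n(g)|\big]$ and the one-sample sensitivity of $\widehat{\mathcal{L}}_n$, and to combine them.

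\emph{Bounded differences.} Let $\mathcal{S}'$ be $\mathcal{S}$ with $s_j$ replaced by $s_{j'}$. Since $0\le q_{il}\le 1$ and $\|\mathbf{u}\|_\infty\le M_u$, each statistic $\mathbf{m}_l$ moves by at most $2M_u/n$ in every coordinate, hence by at most $2dM_u/n$ in $\ell_1$. Because $\beta(\mathbf{m}_l)=f_I(\mathbf{h}^I_l,\phi)^Tf_S(\mathbf{h}^S_l,\theta)$ is $L_{IS}$-Lipschitz --- and, alternatively, because $f_I(\cdot;\phi)$ is $L_I$-Lipschitz smooth with $\mathbf{h}^I_l$ a convex combination of the $\mathbf{u}_i$, so $\|\mathbf{h}^I_l\|\le M_u\sqrt d$ and Cauchy--Schwarz on the inner product gives a perturbation of order $dL_IM_u^2/n$ --- each inner product appearing in the loss changes by a constant times $1/n$. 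The $\log$-partition term is $1$-Lipschitz in these inner products in the $\ell_\infty$ sense, so it inherits the same modulus; dividing by $\tau_{pa}$ and averaging over the $c$ prototypes gives a bounded-differences constant of the form $dcL_IM_u^2/(n\tau_{pa})$.

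\emph{Symmetrization and Khintchine--Kahane.} Introduce an independent copy $\bar{\mathcal{S}}$ and Rademacher signs $\sigma_1,\dots,\sigma_n$; the ghost-sample argument together with Jensen gives $\mathbb{E}_{\mathcal{S}}[\sup_g|\mathcal{L}(g)-\widehat{\mathcal{L}}_n(g)|]\le 2\,\mathbb{E}_{\mathcal{S},\sigma}[\sup_g|\tfrac1n\sum_i\sigma_i(\cdots)|]$, where $(\cdots)$ denotes the single-sample contribution to $\widehat{\mathcal{L}}_n$ routed through the prototypes. Peeling off the $1/\tau_{pa}$ factor and the $1$-Lipschitz $\log$-partition term, and using $L_{IS}$-Lipschitzness of $\beta$ to replace the nonlinear prototype term by the linear statistic $\frac1n\sum_i\sigma_i q_{il}\mathbf{u}_i$ inside it, Khintchine--Kahane bounds the remaining Rademacher average by quantities of the form $\frac{2L_{IS}}{n\tau_{pa}}\,\mathbb{E}\big[(\sum_i\|q_{il}\mathbf{u}_i\|^2)^{1/2}\big]$, which are at most $\frac{2dL_{IS}M_u}{n\tau_{pa}}$ after using $\|q_{il}\mathbf{u}_i\|\le dM_u$ together with the $1/n$ normalization carried by the prototype mean. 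Feeding the bounded-differences constant of the previous step into McDiarmid's inequality then yields, with probability at least $1-\delta$, $\mathcal{L}(g)\le\widehat{\mathcal{L}}_n(g)+\frac{2dL_{IS}M_u}{n\tau_{pa}}+\frac{dcL_IM_u^2}{\tau_{pa}}\sqrt{\frac{\log\delta^{-1}}{2n}}$, which is the claim.

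\emph{Main obstacle.} Unlike in Lemmas~\ref{lma_cl} and~\ref{lm_si}, here $\widehat{\mathcal{L}}_n$ is not a plain average of (nearly) independent per-sample terms: each of the $c$ summands is a nonlinear function ($f_I$, $f_S$, an inner product, then a softmax) of a \emph{global} statistic of all $n$ samples. The delicate part is therefore to track how a one-sample perturbation propagates through this entire chain while keeping norms straight --- converting the $\ell_\infty$ bound $\|\mathbf{u}\|_\infty\le M_u$ into the $\ell_1/\ell_2$ norms demanded by the Lipschitz hypotheses on $\beta$ and $f_I$ is precisely what introduces the dimension factor $d$ (and the $M_u^2$) into the constants. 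Once this single-sample sensitivity is pinned down, the symmetrization and concentration steps are routine and mirror the earlier lemmas almost verbatim.
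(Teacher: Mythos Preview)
Your proposal follows the same four–step template as the paper's own proof (bounded differences, symmetrization with a ghost sample, Khintchine--Kahane, then McDiarmid), and at that level the two arguments are essentially identical.

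The one substantive difference is \emph{which Lipschitz constant you attach to which step}. The paper uses the $L_{IS}$--Lipschitz property of $\beta$ in the bounded-differences calculation, obtaining the single-sample sensitivity $\tfrac{2dL_{IS}M_u}{n\tau_{pa}}$, and then uses the $L_I$--Lipschitz smoothness of $f_I$ in the Rademacher/expectation step, arriving at $\tfrac{dcL_IM_u^2}{n\tau_{pa}}$. You do the opposite: $L_I$ (together with $\|\mathbf{h}^I_l\|\le M_u\sqrt d$ and Cauchy--Schwarz) drives your bounded-differences constant $\tfrac{dcL_IM_u^2}{n\tau_{pa}}$, while $L_{IS}$ controls your symmetrized expectation, giving $\tfrac{2dL_{IS}M_u}{n\tau_{pa}}$. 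Since McDiarmid feeds the bounded-differences constant into the $\sqrt{\log\delta^{-1}/(2n)}$ tail and the expectation bound into the additive bias term, \emph{your} allocation is the one that reproduces the constants exactly as they appear in the stated inequality; the paper's allocation, read literally, would produce the same two quantities but with their roles interchanged. In that sense your sketch is the more internally consistent derivation of the displayed bound, even though the mechanics of each step are the same as in the paper.
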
 
\begin{proof}
    Let $\mathcal{S}'=(\mathcal{S}-{s_r})\cup{s_{r'}} $. The empirical risk on $\mathcal{S}'$ is denoted as $\widehat{\mathcal{L}}'_{n}$. We have
\begin{equation*}
    \begin{aligned}
&\left|\sup _{g \in \mathcal{G}}\right| \mathcal{L}(g)-\widehat{\mathcal{L}}_{n}(g)\left|-\sup _{g\in \mathcal{G}}\right| \mathcal{L}(g)-\widehat{\mathcal{L}}'_{n}(g)\Bigg| \Bigg| \\
\leq & \sup _{g\in \mathcal{G}}\left|\widehat{\mathcal{L}}_{n}(g)-\widehat{\mathcal{L}}'_{n}(g)\right| \\
\leq & \sup _{g\in \mathcal{G}}\frac{1}{c}\sum_{j=1}^{c}\left|
\frac{(\rho_{j}^{I})^{T}\rho_{j}^{S}-((\rho')_{j}^{I})^{T}(\rho')_{j}^{S}}{\tau_{pa}}
-\log\frac{\sum_{l=1,l\neq j}^{c}exp((\rho_{j}^{I})^{T}\rho_{l}^{S}/\tau_{pa})}{\sum_{l=1,l\neq j}^{c}exp(((\rho')_{j}^{I})^{T}(\rho')_{l}^{S}/\tau_{pa})}\right|  \\
\leq & \sup _{g\in \mathcal{G}}\frac{2}{\tau_{pa}}\max_{j\in[c]}\left|
(\rho_{j}^{I})^{T}\rho_{j}^{S}-((\rho')_{j}^{I})^{T}(\rho')_{j}^{S}
\right|  \\
\leq & \sup _{g\in \mathcal{G}}\frac{2L_{IS}}{\tau_{pa}}\max_{j\in[c]}\left\|
\mathbf{h}_{j}^{S}-(\mathbf{h}'_{j})^{S}
\right\|  \\
\leq & \sup _{g\in \mathcal{G}}\frac{2L_{IS}}{\tau_{pa}}\max_{j\in[c]}\left\|\frac{1}{n}(q_{rl}\mathbf{u}_{r}-q_{r'l}\mathbf{u}_{r'})
\right\|  \\
\leq & \frac{2dL_{IS}M_{u}}{n\tau_{pa}}
\end{aligned}
\end{equation*}

Next, we analyze the upper bound of the term $\mathbb{E}_{S} \left[\mathop{\rm sup}_{g\in \mathcal{G}}|\mathcal{L}(g)-\widehat{\mathcal{L}}_{n}(g)|\right ]$. Let $\sigma_1,\sigma_2,\dots,\sigma_n$ be i.i.d. independent random variables taking values in $\{-1,1\}$ and $\bar{S} := \{\bar{s}_1,\dots,\bar{s}_n\}$ be the independent copy of $S := \{s_1, \dots, s_n\}$. Then we have
\begin{equation*}\small
\begin{aligned}
    \mathbb{E}_{\mathcal{S}} \left[ \mathop{\rm sup}_{g\in\mathcal{G}}|\mathcal{L}(g)-\widehat{\mathcal{L}}_{\mathcal{S}}(g)| \right]
    \leq & \mathbb{E}_{{\mathcal{S}},\bar{{\mathcal{S}}},\sigma} \left[\sup_{g\in\mathcal{G}} \frac{1}{c}\sum_{j=1}^c \left|
\frac{(\rho_{j}^{I})^{T}\rho_{j}^{S}-(\bar{\rho}_{j}^{I})^{T}\bar{\rho}_{j}^{S}}{\tau_{pa}}-
\log\frac{\sum_{l=1,l\neq rn(i)}^{m}exp((\rho_{j}^{I})^{T}\rho_{l}^{S}/\tau_{pa})}{\sum_{l=1,l\neq rn(i)}^{m}exp((\bar{\rho}_{j}^{I})^{T}\bar{\rho}_{l}^{S}/\tau_{ia})}\right| \right] \\ 
    \leq & 2\mathbb{E}_{{\mathcal{S}},\sigma} \left[\sup_{g\in\mathcal{G}} \frac{1}{c}\sum_{j=1}^{c}\left|
\frac{(\rho_{j}^{I})^{T}\rho_{j}^{S}}{\tau_{pa}}
-\log\left(\sum_{l=1,l\neq j}^{c}exp((\rho_{j}^{I})^{T}\rho_{l}^{S}/\tau_{pa})\right)\right| \right] \\
\leq & 2\mathbb{E}_{{\mathcal{S}},\sigma} \left[\sup_{g\in\mathcal{G}} \frac{1}{c\tau_{pa}}\sum_{j=1}^{c}\left(
(\rho_{j}^{I})^{T}\rho_{j}^{S}+
 \sum_{l=1,l\neq j}^{c}(\rho_{j}^{I})^{T}\rho_{l}^{S}\right) \right] \\
\leq & 2\mathbb{E}_{{\mathcal{S}},\sigma} \left[\sup_{g\in\mathcal{G}} \frac{c}{\tau_{pa}}\max_{j\in[c]}\left(
(\rho_{j}^{I})^{T}\rho_{j}^{S}\right) \right] \\
\leq & 2\mathbb{E}_{{\mathcal{S}},\sigma} \left[\sup_{g\in\mathcal{G}} \frac{cL_{I}}{\tau_{pa}}\max_{j\in[c]}\|f_{I}\left(\frac{\sum_{i=1}^{n}\sigma_{i}q_{ij }\mathbf{u}_{i}}{\|\mathbf{q}_{l}\|},\phi\right)_{j} \|^{2}\right] \\
\leq & 2\mathbb{E}_{{\mathcal{S}},\sigma} \left[\sup_{g\in\mathcal{G}} \frac{cL_{I}}{\tau_{pa}}\max_{j\in[c]}\|\left(\frac{\sum_{i=1}^{n}\sigma_{i}q_{ij}\mathbf{u}_{i}}{n}\right)_{j} \|^{2}\right] \\
\leq & 2\mathbb{E}_{{\mathcal{S}},\sigma} \left[\sup_{g\in\mathcal{G}} \frac{cL_{I}}{\tau_{pa}}\max_{j\in[c]}\frac{\sum_{i=1}^{n}\sum_{l=1}^{d}(\sigma_{i}q_{ij}u_{il})^{2}}{n^2} \right] \\
\leq &\frac{dcL_{I}M_{u}^2}{\tau_{pa}n}
\end{aligned}
\end{equation*}
where the last inequality is obtained by assumption in Theorem~\ref{thm:conv}.

Thus according to the McDiarmid inequality~\cite{mohri2018foundations}, with probability at least $1-\delta$ for any $g\in \mathcal{G}$, we have
$$ 
\mathcal{L}(g) \leq \widehat{\mathcal{L}}_{n}(g) +\frac{2dL_{IS}M_{u}}{n\tau_{pa}}+ \frac{dcL_{I}M_{u}^2}{\tau_{pa}}\sqrt{\frac{\log \delta^{-1}}{2n}}.
$$
\end{proof}

\begin{lma}\label{lma_sa}
Given a set of n samples $\mathcal{S}$, we define the following empirical risk
\begin{equation*}\small
    \widehat{\mathcal{L}}_{n}(g) = -\frac{1}{n}\sum_{i=1}^n\sum_{l=1}^c q_{il}\log q'_{il},
\end{equation*}
and
\begin{equation*}\small
    \mathcal{L}(g) = -\sum_{l=1}^c \mathbb{E}(\mathbf{q})^{l}\log \mathbb{E}(\mathbf{q}')^{l}) .
\end{equation*}
where $\mathbf{q}'$ contains one-hot pseudo-label. With probability at least $1-\delta$, the following inequality holds
$$ 
\mathcal{L}(f) \le \widehat{\mathcal{L}}_{n}(f) + \frac{2\log\mu_p^{-1}}{\sqrt{n}} + 2\log\mu_p^{-1}\sqrt{\frac{\log \delta^{-1}}{2 n}}.
$$
\end{lma}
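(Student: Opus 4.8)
The plan is to prove Lemma~\ref{lma_sa} along the same lines as Lemmas~\ref{lma_cl}--\ref{lm_sp}: a bounded-difference (stability) estimate, a symmetrization/Rademacher bound via the Khintchine--Kahane inequality, and then McDiarmid's inequality. Write $\Phi(\mathcal S):=\sup_{g\in\mathcal G}|\mathcal L(g)-\widehat{\mathcal L}_n(g)|$. The one fact that makes everything go through is that, since $\mathbf q'_i$ is one-hot with active coordinate $\hat l_i$, the $i$-th summand of $\widehat{\mathcal L}_n(g)$ equals $-\log q_{i,\hat l_i}$, which by the prediction-confidence Assumption~\ref{as:pc} is bounded uniformly: $0\le -\sum_{l}q_{il}\log q'_{il}\le \log\mu_p^{-1}$.

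\textbf{Stability.} Form $\mathcal S'=(\mathcal S\setminus\{s_j\})\cup\{s_{j'}\}$ with empirical risk $\widehat{\mathcal L}'_n$. In $\widehat{\mathcal L}_{sa}$ each sample contributes only through its own image assignment $\mathbf q_i$ and its own semantic pseudo-label $\mathbf q'_i$, and $\mathbf q'_i$ depends on $x_i$ and its neighbouring \emph{texts} only, not on other images; hence swapping $x_j$ alters exactly one summand. With the per-term bound above, $\sup_{g}|\widehat{\mathcal L}_n(g)-\widehat{\mathcal L}'_n(g)|\le 2\log\mu_p^{-1}/n$, and therefore $|\Phi(\mathcal S)-\Phi(\mathcal S')|\le 2\log\mu_p^{-1}/n$. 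Note there is no $k_I'$ factor here (in contrast to Lemma~\ref{lma_cl}) because $\mathcal L_{sa}$ involves no image--image neighbourhood.

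\textbf{Rademacher bound and concentration.} Introduce a ghost sample $\bar{\mathcal S}$ and i.i.d.\ signs $\sigma_1,\dots,\sigma_n\in\{-1,1\}$. Standard symmetrization gives
\[
\mathbb E_{\mathcal S}[\Phi(\mathcal S)]\le 2\,\mathbb E_{\mathcal S,\sigma}\Big[\sup_{g\in\mathcal G}\tfrac1n\Big|\textstyle\sum_{i=1}^n\sigma_i\sum_{l=1}^c q_{il}\log q'_{il}\Big|\Big]\le 2\,\mathbb E_{\mathcal S}\Big[\sup_{g\in\mathcal G}\tfrac1n\big(\textstyle\sum_{i=1}^n(\sum_{l=1}^c q_{il}\log q'_{il})^2\big)^{1/2}\Big],
\]
where the second inequality is Khintchine--Kahane~\cite{latala1994best}; using $|\sum_l q_{il}\log q'_{il}|\le\log\mu_p^{-1}$ the right-hand side is at most $2\log\mu_p^{-1}/\sqrt n$. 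Finally, McDiarmid's inequality~\cite{mohri2018foundations} with bounded-difference constants $2\log\mu_p^{-1}/n$ shows that with probability at least $1-\delta$, $\Phi(\mathcal S)\le \mathbb E_{\mathcal S}[\Phi(\mathcal S)]+2\log\mu_p^{-1}\sqrt{\log\delta^{-1}/(2n)}$. Combining the two displays with $\mathcal L(g)-\widehat{\mathcal L}_n(g)\le\Phi(\mathcal S)$ for all $g\in\mathcal G$ yields the stated bound.

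\textbf{Main obstacle.} The crux is the per-sample bound in the stability step: cross-entropy against a one-hot label is \emph{a priori} unbounded (it diverges as the predicted mass on the chosen class tends to $0$), so the whole argument depends on invoking Assumption~\ref{as:pc} to guarantee $-\log q_{i,\hat l_i}\le\log\mu_p^{-1}$ — which is exactly why $\log\mu_p^{-1}$ appears in every error term. A secondary point that must be argued carefully is that a single image swap perturbs only one summand of $\widehat{\mathcal L}_{sa}$, i.e.\ that $\mathbf q'_i$ for $i\ne j$ is formed without reference to $x_j$; this is what keeps the bounded-difference constant at order $1/n$ rather than order $1$.
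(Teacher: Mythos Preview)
Your proposal is correct and follows essentially the same three-step template as the paper's proof: a bounded-difference estimate (only the $j$-th summand changes under a one-sample swap, since $\mathbf q'_i$ depends on $x_i$ and its neighbouring \emph{texts} only), symmetrization with a ghost sample plus Khintchine--Kahane for the expected supremum, and then McDiarmid's inequality. The only difference is that by exploiting the one-hot structure of $\mathbf q'_i$ to collapse the $i$-th summand to the single term $-\log q_{i,\hat l_i}$, you obtain the constants $2\log\mu_p^{-1}$ exactly as stated in the lemma, whereas the paper's appendix bounds the $c$ terms separately and carries an extra factor $c$ through both the stability and expectation steps (this extra $c$ is what then reappears in the constants $\tilde c_1,\tilde c_2$ of Theorem~\ref{thm:risk}).
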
 
\emph{Proof.} Let $\mathcal{S}'=(\mathcal{S}-{s_j})\cup s_{j'}$. We have
\begin{equation*}
    \begin{aligned}
& \sup _{f \in \mathcal{F}}\left|\widehat{\mathcal{L}}_{n}(f)-\widehat{\mathcal{L}}_{n}'(f)\right| \\
\leq & \sup _{f \in \mathcal{F}}\frac{1}{n} \left| \sum_{l}^c (q_{jl}\log q'_{jl}- q_{j'l}\log q'_{j'l}) \right| \\
% \leq & \sup _{f \in \mathcal{F}}\frac{1}{n} \left| \sum_{l}^c (p_{rl}(\frac{1}{q_{rl}}-1) + \bar{p}_{rl}(\frac{1}{\bar{q}_{rl}}-1)) \right| \\
% \leq & \sup _{f \in \mathcal{F}}\frac{1}{n} \left| \sum_{l}^c (p_{rl}(q_{rl}-1) + \bar{p}_{rl}(\bar{q}_{rl}-1)) \right| \\
\leq & \frac{2c\log\mu_{p}^{-1}}{n}.
\end{aligned}
\end{equation*}
Next we analyze the upper bound of the expectation term, \emph{i.e.}, $\mathbb{E}_{S} \left[\mathop{\rm sup}_{f \in \mathcal{F}}|\mathcal{L}(f)-\widehat{\mathcal{L}}_{S}(f)|\right ]$. Let $\sigma_1,\sigma_2,\dots,\sigma_n$ be i.i.d. independent random variables taking values in $\{-1,1\}$ and $\bar{S} := \{\bar{x}_1,\dots,\bar{x}_n\}$ be the independent copy of $S := \{x_1, \dots, x_n\}$. Then we have
\begin{equation*}\small
\begin{aligned}
    & \mathbb{E}_{S} \left[ \mathop{\rm sup}_{f_{\mathcal{Q}} \in \mathcal{F}}|\mathcal{L}(f)-\widehat{\mathcal{L}}_{n}(f)| \right]\\
    = & \mathbb{E}_{S,\bar{S},\sigma} \left[\sup _{f_{\mathcal{Q}} \in \mathcal{F}} \frac{1}{n}\Bigg| \sum_{i=1}^n \sigma_{i}(\sum_{l=1}^{c}(p_{il}\log q_{il} - \bar{p}_{il}\log \bar{q}_{il}) ) \Bigg|\right] \\
    \leq & 2 \mathbb{E}_{S,\sigma} \left[\sup _{f_{\mathcal{Q}} \in \mathcal{F}} \frac{1}{n}\Bigg| \sum_{i=1}^n \sigma_{i} \sum_{l=1}^{c} p_{il}\log q_{il}  \Bigg|\right] \\
    % \leq & 2 \mathbb{E}_{S,\sigma} \left[\sup _{f_{\mathcal{Q}} \in \mathcal{F}} \frac{1}{n}\Bigg| \sum_{i=1}^n \sigma_{i}\sum_{l=1}^{c} p_{il} (\frac{1}{q_{il}}-1)  \Bigg|\right] \\ 
    \leq & 2 \mathbb{E}_{S,\sigma} \left[\sup _{f_{\mathcal{Q}} \in \mathcal{F}} \frac{1}{n}\left( \sum_{i=1}^n \left[\sum_{l=1}^{c}p_{il} \log q_{il}\right]^2 \right)^{\frac{1}{2}}\right] \\
    \leq & \frac{2c\log\mu_{p}^{-1}}{\sqrt{n}}.
\end{aligned}
\end{equation*}
Thus according to the McDiarmid inequality~\cite{mohri2018foundations}, with probability at least $1-\delta$ for any $f\in \mathcal{F}$, we have
$$ 
\mathcal{L}(f) \le \widehat{\mathcal{L}}_{n}(f) + \frac{2c\log\mu_p^{-1}}{\sqrt{n}} + 2c\log\mu_p^{-1}\sqrt{\frac{\log \delta^{-1}}{2 n}}.
$$

Now we give the proof of Theorem~\ref{thm:risk}. \\
\begin{proof}

According to Lemma \ref{lma_cl}, \ref{lm_balance}, \ref{lm_si}, \ref{lm_sp} and \ref{lma_sa}, with probability at least $1-\delta$ for any $g\in \mathcal{G}$, the empirical risk and the expected risk of the overall loss function in Eq. (\ref{loss_overall}) has the following relationship
\begin{equation*}
    \mathcal{L}(g) \leq \widehat{\mathcal{L}}_{n}(g) + \frac{\tilde{c}_1}{\sqrt{n}} + \tilde{c}_2\sqrt{\frac{1}{2n}\log \delta^{-1}}+\frac{2dL_{IS}M_{u}}{n\tau_{pa}}.
\end{equation*}
where $\tilde{c}_1=2\mu_{I}^{-1}+2\eta C+2\lambda_{a}m/\tau_{ia}+2\lambda_{a}\lambda_{ca}c\log\mu_p^{-1}$ and $\tilde{c}_2= (2+2k_{I}')\log\mu_{I}^{-1} + \eta C +\frac{2\lambda_{a}(1-\mu_{C})}{\tau_{ia}}+\lambda_{a}\lambda_{sp}\frac{dcL_{I}M_{u}^2}{\tau_{pa}}+2\lambda_{a}\lambda_{ca}c\log\mu_p^{-1}$. This finishes the proof.
\end{proof}

\subsection{Implementation Details}
We used ViT-32~\cite{dosovitskiy2020image} and Transformer~\cite{vaswani2017attention} in CLIP as the image and text encoders, respectively. The cluster heads $f_{I}$ and $f_{S}$ were implemented by two $d \times c$ fully connected layers, where $d=512$ is the embedding dimension and $c$ is the number of clusters, respectively. Before training, all datasets were preprocessed with the augmentation method used in CLIP~\cite{zhou2021learning}, i.e., random square crop from resized images. We used cosine similarity to measure the similarity between image and word embeddings and thus construct the top $k_I$ nearest image neighbors $\mathcal{N}_{k_I}^{I}(x_i)$ and the top $k_S$ nearest texts $\mathcal{N}_{k_S}^{S}(x_i)$ for each image $x_i$. Epoch numbers and batch sizes of all datasets were set to 100 and 128. The nearest neighbors used in image consistency learning were searched through Faiss Library~\cite{jeff2021billion}. All experiments were carried out on four NVIDIA GeForce RTX 2080 Tis.

\subsection{Benchmark Datasets}
The details of five benchmark datasets are shown in Figure \ref{tab:data}.
\begin{table}[htb]
  \begin{center}
    \normalsize
  \caption{Statistics of five benchmark datasets.}
  \label{tab:data}
  \vskip 0.15in
  \setlength{\tabcolsep}{0.5mm}{
  \begin{tabular}{c|cccc}\toprule
    Dataset & Image size & \#Classes & \#Training & \#Testing  \\
  \hline
  \textbf{STL10} &$96\times96$ & $10$ &  $5,000$ & $8,000$ \\
  \textbf{Cifar10} & $32\times32$ & $10$ & $50,000$ & $10,000$  \\
  \textbf{Cifar100-20} &$32\times32$ &  $20$ & $50,000$ & $10,000$\\
  \textbf{ImageNet-Dogs} & $224\times224$ & $15$ & $19,500$ & $750$  \\
  \textbf{Tiny-ImageNet} & $64\times64$ &  $200$ & $100,000$ & $10,000$   \\
  \bottomrule
  \end{tabular}}
  \end{center}
  \end{table}

\subsection{Semantic Space Construction}

\begin{figure}[!htb]
\vspace{-25px}
\centering
    \subfloat[\footnotesize Result on STL10.]{
            \centering 
            \label{fig:stl10_gamma_h}
            \includegraphics[width = .49\linewidth]{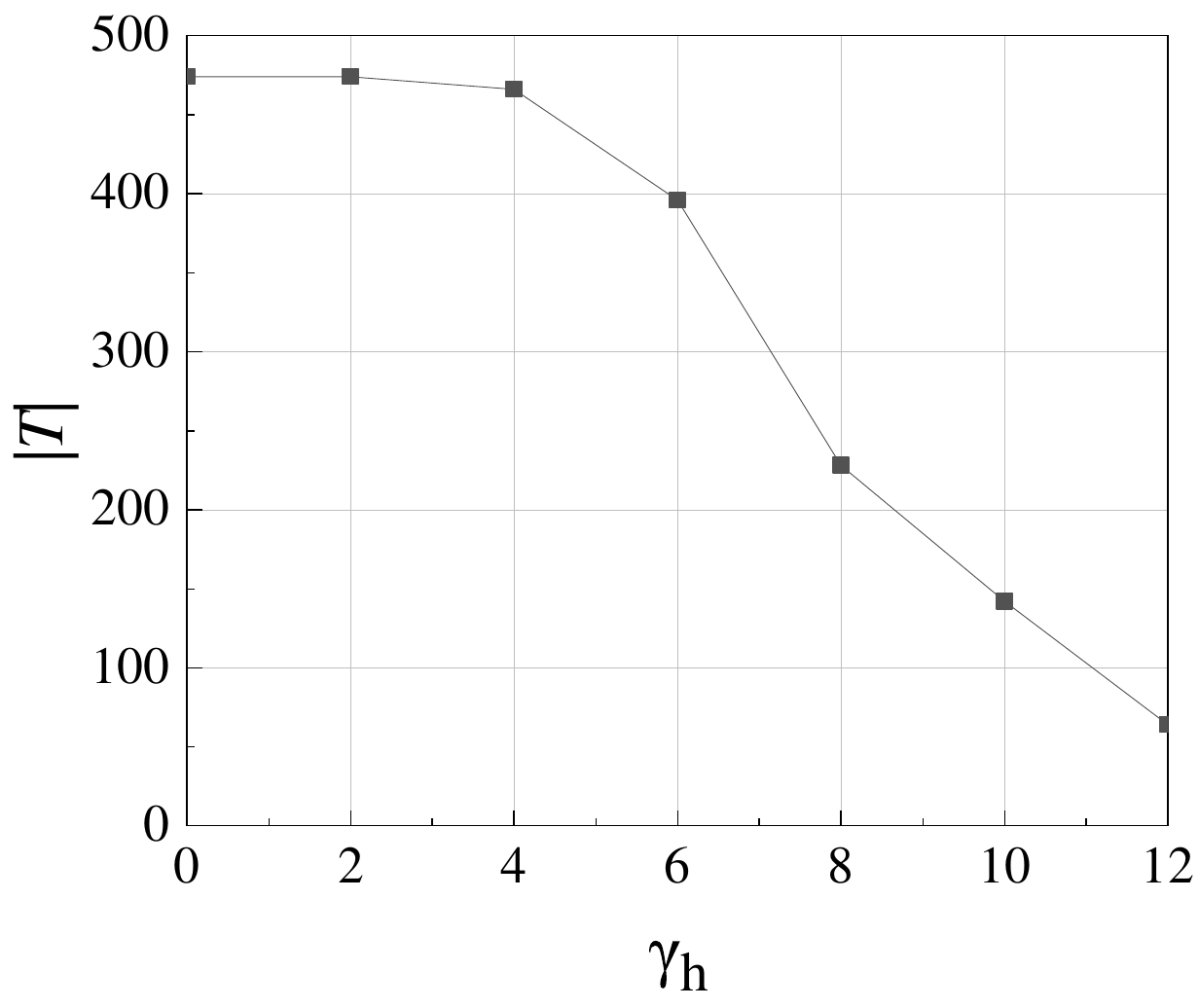}}
    \subfloat[\footnotesize Result on ImageNet-Dogs.]{
           \centering 
           \label{fig:imagenet_dog_gamma_h}  
           \includegraphics[width = .49\linewidth]{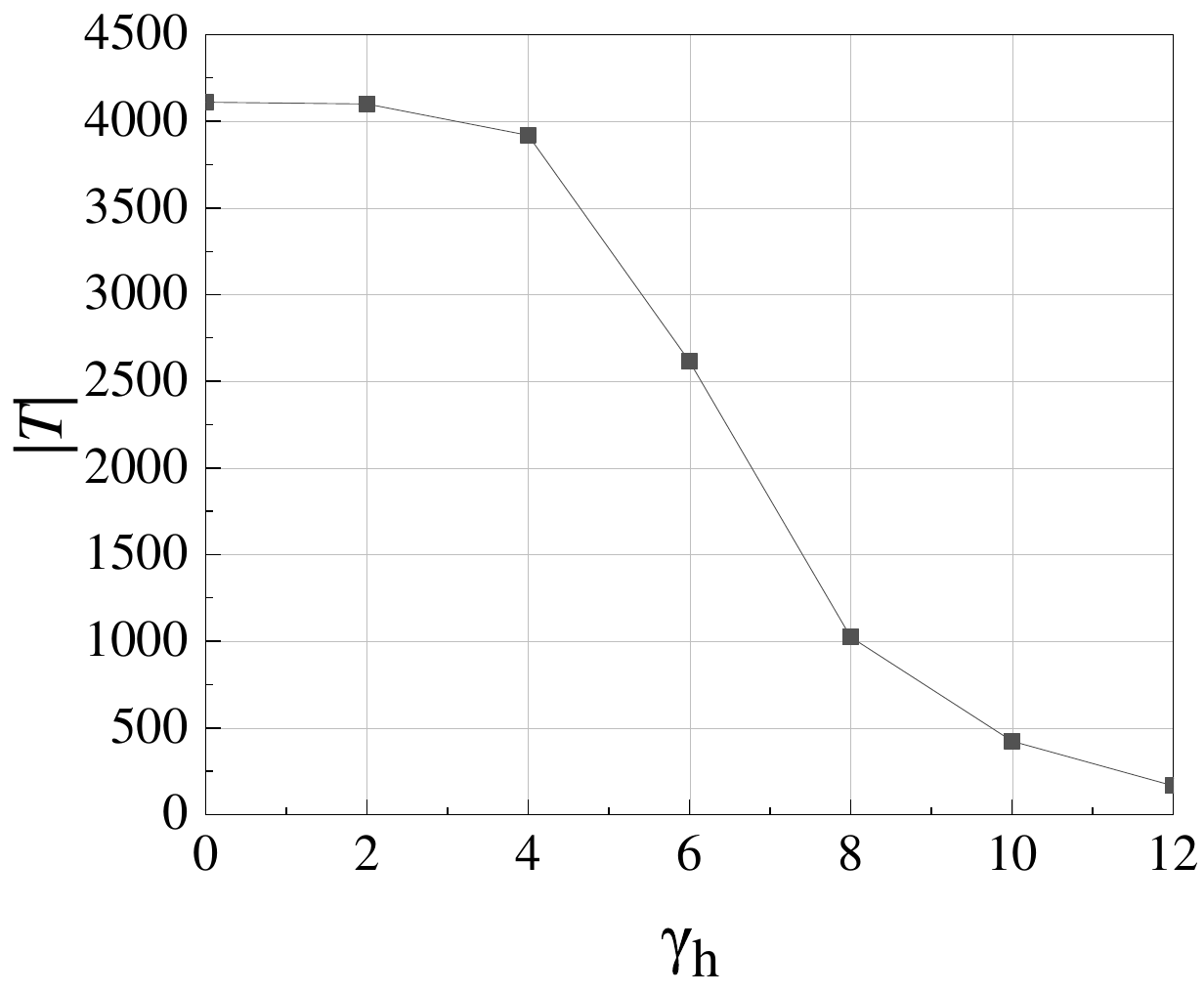}} 
\caption{$|\mathcal{T}|$ versus $\gamma_h$.}
\label{fig:vocab_gamma}
\end{figure}

\begin{figure}[!htb]
\centering
    
\subfloat[\footnotesize $\gamma_h$=0.]{
           \centering 
           \label{fig:stl10_gamma_h_0}  
           \includegraphics[width = .24\linewidth]{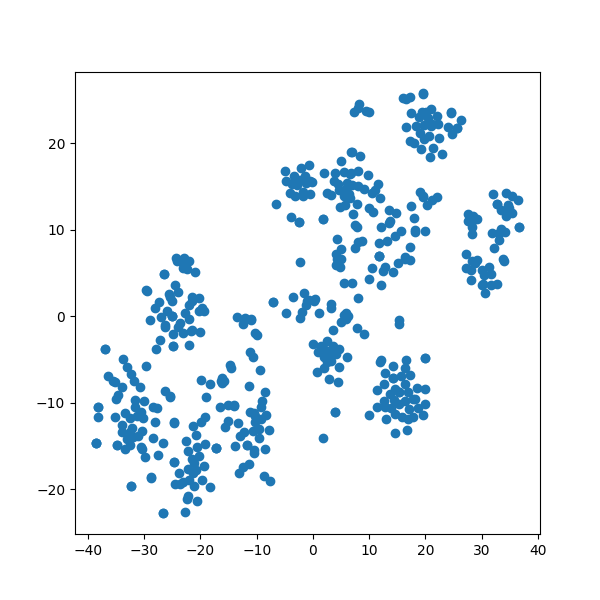}}\hspace{-3mm}
    \subfloat[\footnotesize $\gamma_h$=2.]{
            \centering 
            \label{fig:stl10_gamma_h_2}
            \includegraphics[width = .24\linewidth]{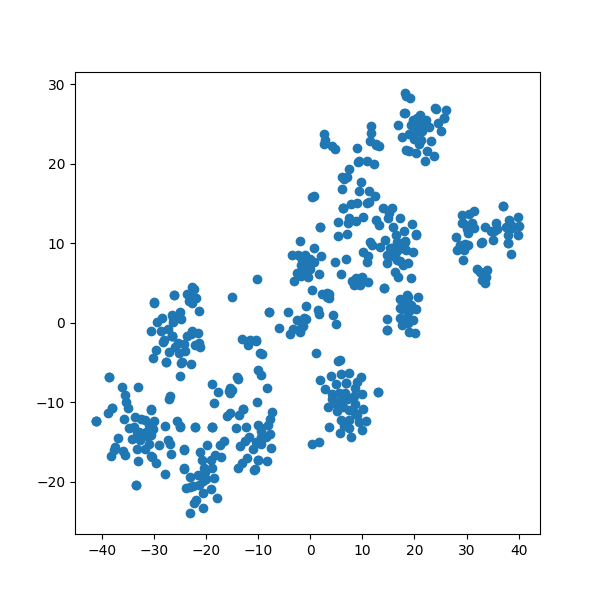}}\hspace{-3mm}
    \subfloat[\footnotesize $\gamma_h$=4.]{
           \centering 
           \label{fig:stl10_gamma_h_4}  
           \includegraphics[width = .24\linewidth]{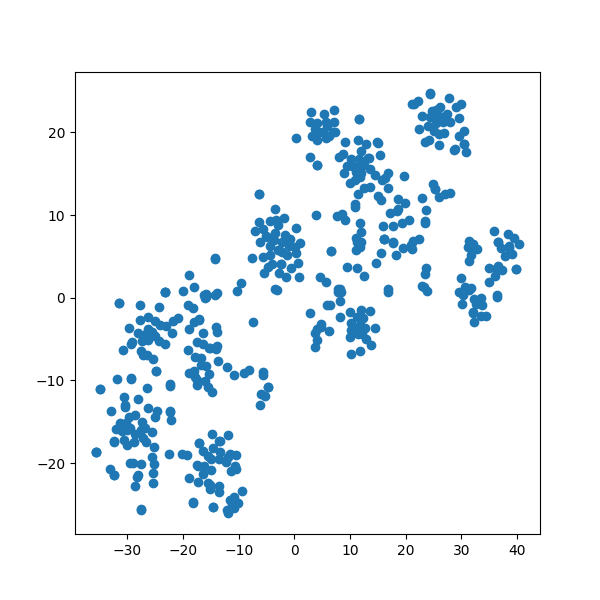}}\hspace{-3mm}
    \subfloat[\footnotesize $\gamma_h$=6.]{
           \centering 
           \label{fig:stl10_gamma_h_6}  
           \includegraphics[width = .24\linewidth]{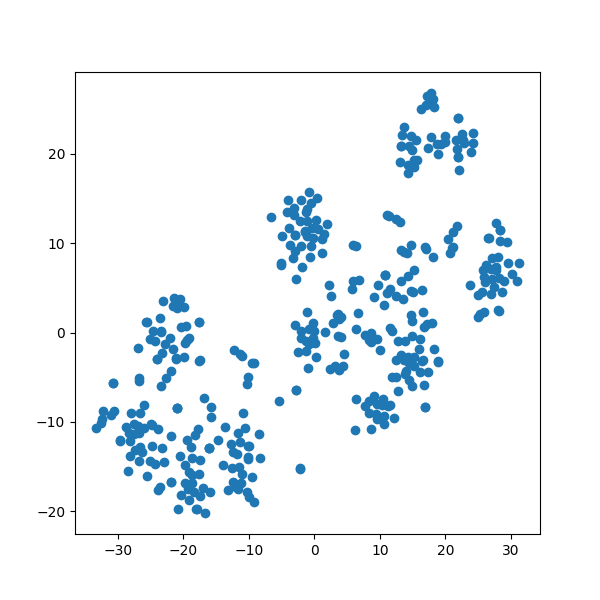}}\hspace{-3mm}
    \subfloat[\footnotesize $\gamma_h$=8.]{
            \centering 
            \label{fig:stl10_gamma_h_8}
            \includegraphics[width = .24\linewidth]{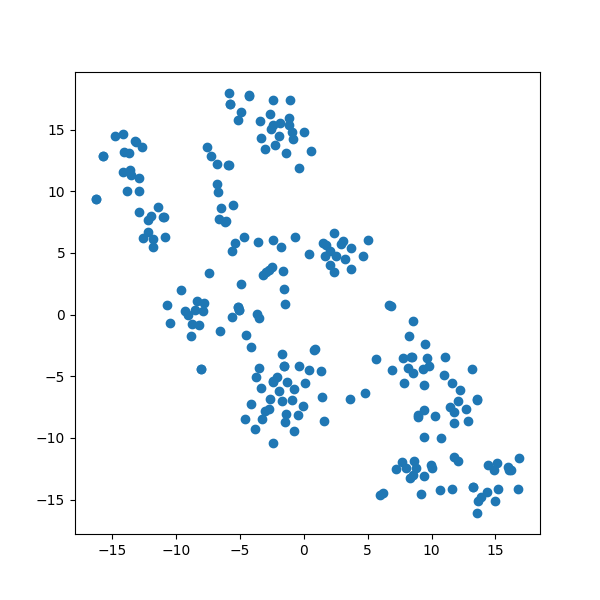}}\hspace{-3mm}
    \subfloat[\footnotesize $\gamma_h$=10.]{
           \centering 
           \label{fig:stl10_gamma_h_10}  
           \includegraphics[width = .24\linewidth]{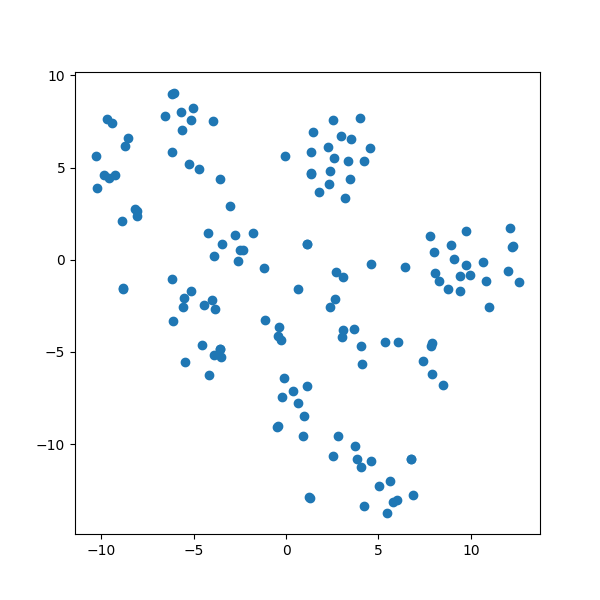}}\hspace{-3mm}
    \subfloat[\footnotesize  $\gamma_h$=12.]{
            \centering 
            \label{fig:stl10_gamma_h_12}
            \includegraphics[width = .24\linewidth]{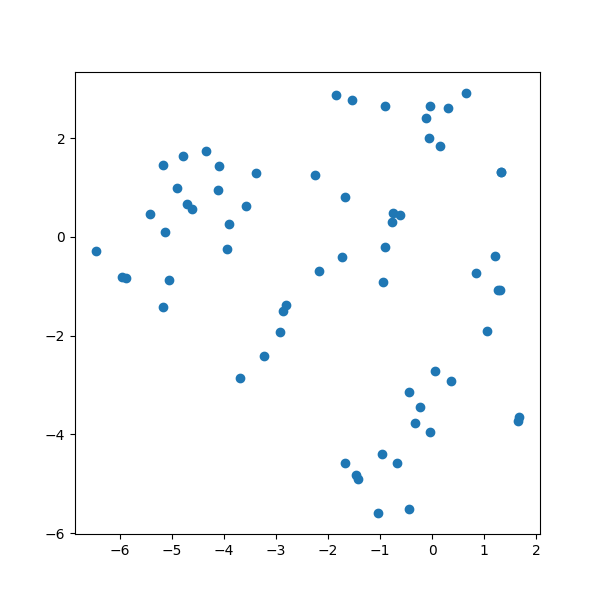}}\hspace{-3mm}            
            
\caption{ $t$-SNE visualization of the words in $\mathcal{T}$ on STL10, with the increasing of $\gamma_h$. 
}
\label{fig:t_visualization_stl10}
\end{figure}

\begin{figure}[!htb]
\vspace{-30px}
\centering
    
\subfloat[\footnotesize $\gamma_h$=0.]{
           \centering 
           \label{fig:imagenet_dog_gamma_h_0}  
           \includegraphics[width = .24\linewidth]{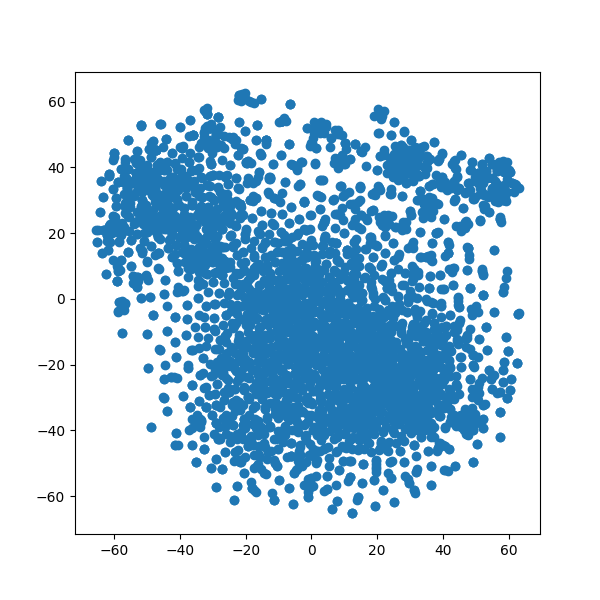}}\hspace{-3mm}
    \subfloat[\footnotesize $\gamma_h$=2.]{
            \centering 
            \label{fig:imagenet_dog_gamma_h_2}
            \includegraphics[width = .24\linewidth]{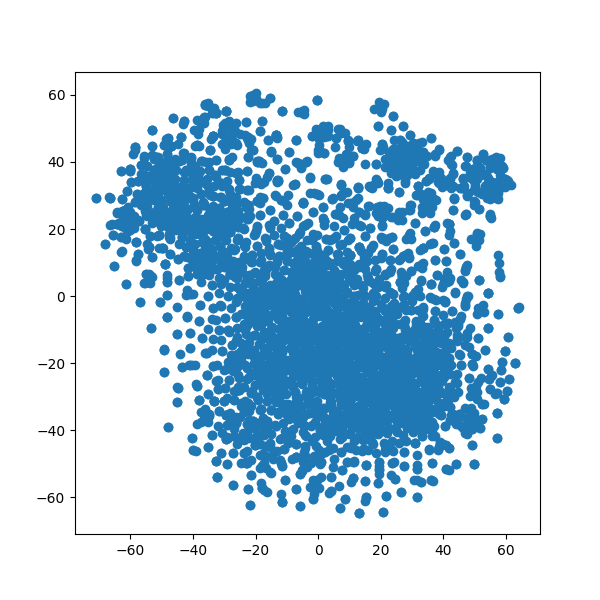}}\hspace{-3mm}
    \subfloat[\footnotesize $\gamma_h$=4.]{
           \centering 
           \label{fig:imagenet_dog_gamma_h_4}  
           \includegraphics[width = .24\linewidth]{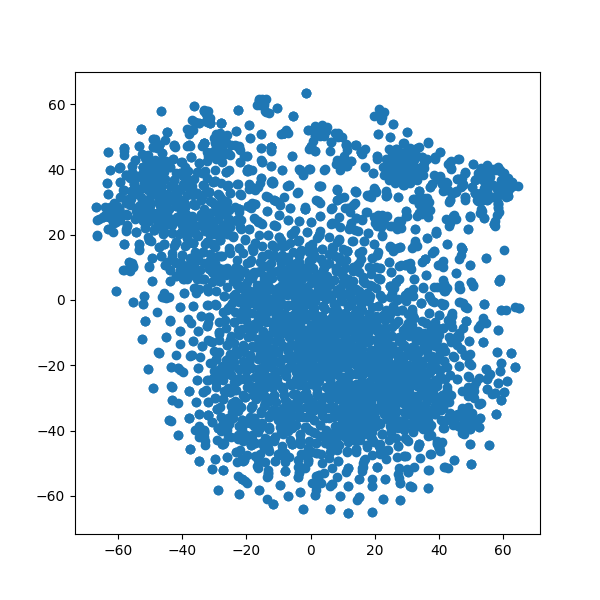}}\hspace{-3mm}
    \subfloat[\footnotesize $\gamma_h$=6.]{
           \centering 
           \label{fig:imagenet_dog_gamma_h_6}  
           \includegraphics[width = .24\linewidth]{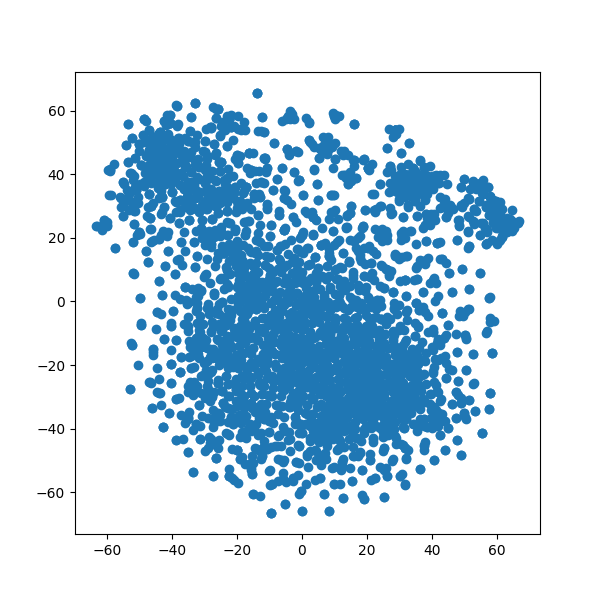}}\hspace{-3mm}
    \subfloat[\footnotesize $\gamma_h$=8.]{
            \centering 
            \label{fig:imagenet_dog_gamma_h_8}
            \includegraphics[width = .24\linewidth]{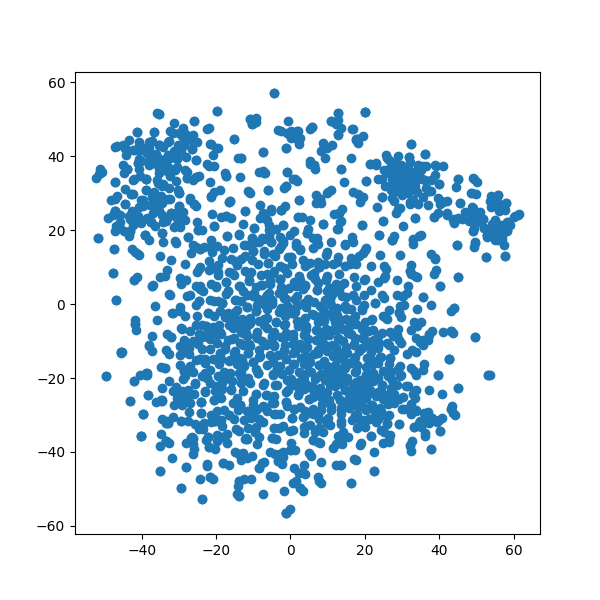}}\hspace{-3mm}
    \subfloat[\footnotesize $\gamma_h$=10.]{
           \centering 
           \label{fig:imagenet_dog_gamma_h_10}  
           \includegraphics[width = .24\linewidth]{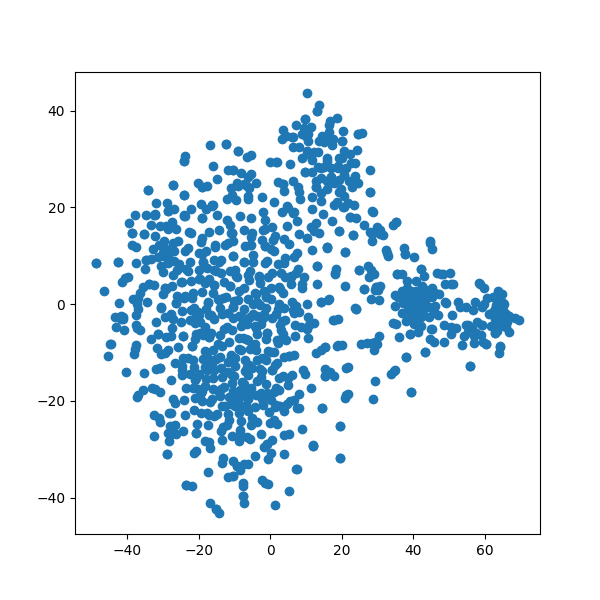}}\hspace{-3mm}
    \subfloat[\footnotesize  $\gamma_h$=12.]{
            \centering 
            \label{fig:imagenet_dog_gamma_h_12}
            \includegraphics[width = .24\linewidth]{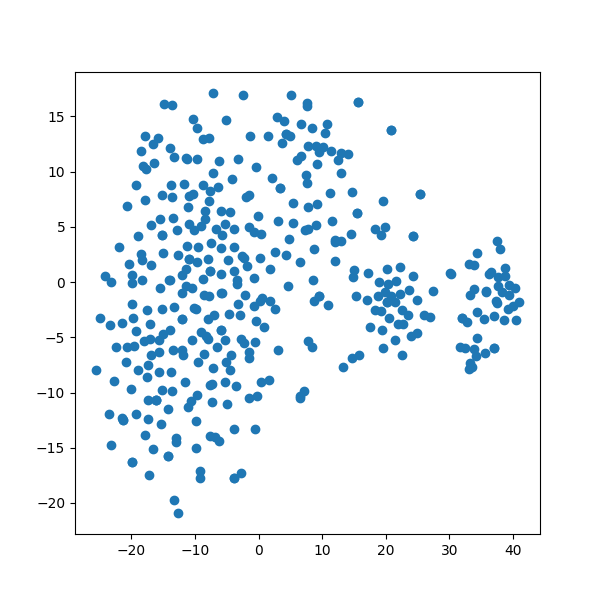}}\hspace{-3mm}            
            
\caption{$t$-SNE visualization of the words in $\mathcal{T}$ on ImageNet-Dogs, with the increase of $\gamma_h$. 
}
\label{fig:t_visualization_imagedogs}
\end{figure}

Figures \ref{fig:hierarchy_structure_stl10} and \ref{fig:hierarchy_structure_imagenet_dog} show two examples of the hierarchy-based filtering results on STL-10 and ImageNet-Dogs, which are parts of the hierarchical semantic tree built from WordNet~\cite{miller1995wordnet}. From these figures, we can observe the two hierarchical semantic trees contain more than 10 levels of words. However, we also observe that the top-level words, such as ``group'', ``unit'' in Figures \ref{fig:hierarchy_structure_stl10}, are useless for distinguishing the 10 classes in STL-10. Therefore, we removed top-6 levels of words on STL-10 and top-10 levels of words on ImageNet-Dogs in our experiments. We also show the changes of $|\mathcal{T}|$ versus $\gamma$ in Figure~\ref{fig:vocab_gamma}, indicating that the number of words in $\mathcal{T}$ decreases quickly with the increase of $\gamma_h$. Figures~\ref{fig:t_visualization_stl10} and~\ref{fig:t_visualization_imagedogs} display the $t$-SNE visualization of the words in $\mathcal{T}$ for two datasets. It is evident that, as the value of $\gamma_h$ increases, the remaining words tend to preserve the structural information. We assess the effectiveness of a hierarchy-based filtering technique.

\subsection{Hyper-parameters}
\label{app:hp}

The hyper-parameters used on five benchmark datasets are listed in Table~\ref{tab:best_hyper}.

\begin{table}[htb]
\caption{The hyper-parameters used on five benchmark datasets. $\eta_{\phi}$: learning rate, $\gamma_{r}$: the number of nearest nouns for each image center, $\gamma_{u}$: the number of most unique nouns, $\gamma_{h}$: the number of hierarchy levels in the hierarchical semantic tree (the root is level 0) to filter, $k_{I}$: the number of nearest images, $k_S$: the number of nearest texts, $k_{p}$: the number of nearest texts to an image center, $\tau_{ia}$: the temperature to control the softness of instance-level alignment, $\tau_{pa}$: the temperature to control the softness of prototype-level alignment, $\eta$, $\lambda_{a}$, $\lambda_{pa}$ and $\lambda_{ca}$: the trade-off parameters.}
\centering
\resizebox{\linewidth}{!}{
\begin{tabular}{lcccccccccccccc}
\toprule
\textbf{Dataset} & $\eta_{\phi}$ & $\gamma_{r}$ & $\gamma_{u}$ & $\gamma_{h}$ & $k_{I}$ & $k_S$ & $k_p$ & $\tau_{ia}$ & $\tau_{pa}$ & $\eta$ & $\lambda_{a}$ & $\lambda_{pa}$ & $\lambda_{ca}$  \\
\bottomrule
STL10& 1e-4& 50 & 0.05 & 6 &  20  & 5 & 20 & 1 & 1  & 10 & 10 & 0.1 & 5\\
Cifar10& 1e-3  & 500  & 0.03 & 12 & 5 & 5 & 1 & 0.2 & 0.05 & 1 & 1 & 1 & 1   \\
 Cifar100-20& 1e-2 & 100 & 0.05 &5 & 5 &20 & 20 & 1 & 1 & 1 & 1 & 1 & 1   \\
ImageNet-Dogs & 1e-5 & 1000 & 0.05 &10 & 5 & 20 & 30 & 0.05 & 0.6 & 10 & 1 & 1 & 5    \\
Tiny-ImageNet & 1e-3 & 100  &0.03 &5 & 10 & 5& 20 & 1 & 1 & 1 & 1 & 1 & 1   \\
\bottomrule
\end{tabular}
}

\label{tab:best_hyper}
\end{table}

\subsection{Sensitivity Analysis}

\noindent \textbf{Sensitivity on hyperparameters $\gamma_r$ and $\gamma_h$.} 

In our method, $\gamma_r$ and $\gamma_h$ are used to construct the semantic space, where $\gamma_r$ controls the number of central nearest neighbors selected in WordNet, and $\gamma_h$ allows further filtering of low-level semantic words. Figure~\ref{fig:sens_gamma_r} shows the sensitivity results of our method with respect to $\gamma_r$, indicating that a too-small $\gamma_r$ leads to a loss of semantic information while a too-large $\gamma_r$ leads to noise in the semantic space, resulting in a risk of cluster degradation. Therefore, we set $\gamma_r=50$ for STL10 and $\gamma_r=1000$ for ImageNet-Dogs.

As for $\gamma_h$, a larger $\gamma_h$ removes useless words in higher levels such that the remaining words in $\mathcal{T}$ can better distinguish the images, while a small $\gamma_h$ retains enough words for clustering. Figure~\ref{fig:sens_gamma_h} shows the sensitivity results of our method with respect to $\gamma_h$, which indicates that too small or too large $\gamma_h$ causes performance degeneration. Relatively speaking, the performance changes caused by the variation of $\gamma_h$ on ImageNet-Dogs are larger than those on STL10. Finally, we set $\gamma_h=6$ for STL10 and $\gamma_h=10$ for ImageNet-Dogs.

Compared to Figure~\ref{fig:vocab_gamma}, we can observe that our method produces the best results when retaining about $[400,500]$ words in $\mathcal{T}$ on both datasets.

\begin{figure}[!htb]
\vspace{-10px}
\centering
 \subfloat[\footnotesize $\gamma_r$ on STL10.]{
            \centering 
            \label{fig:stl10_gamma_r}
            \includegraphics[width = .49\linewidth]{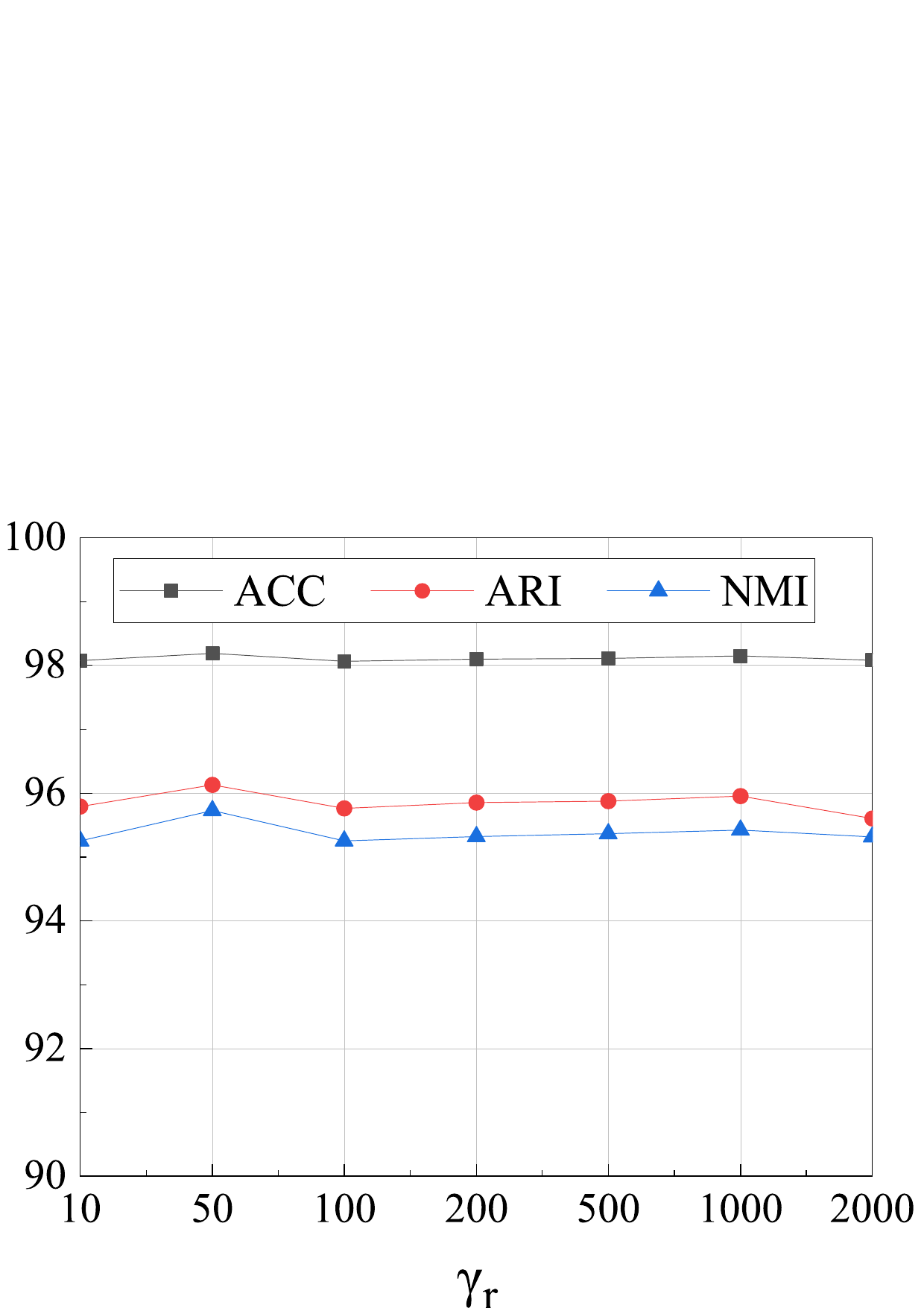}}
    \subfloat[\footnotesize $\gamma_r$ on ImageNet-Dogs.]{
            \centering 
            \label{fig:dog_gamma_r}
            \includegraphics[width = .48\linewidth]{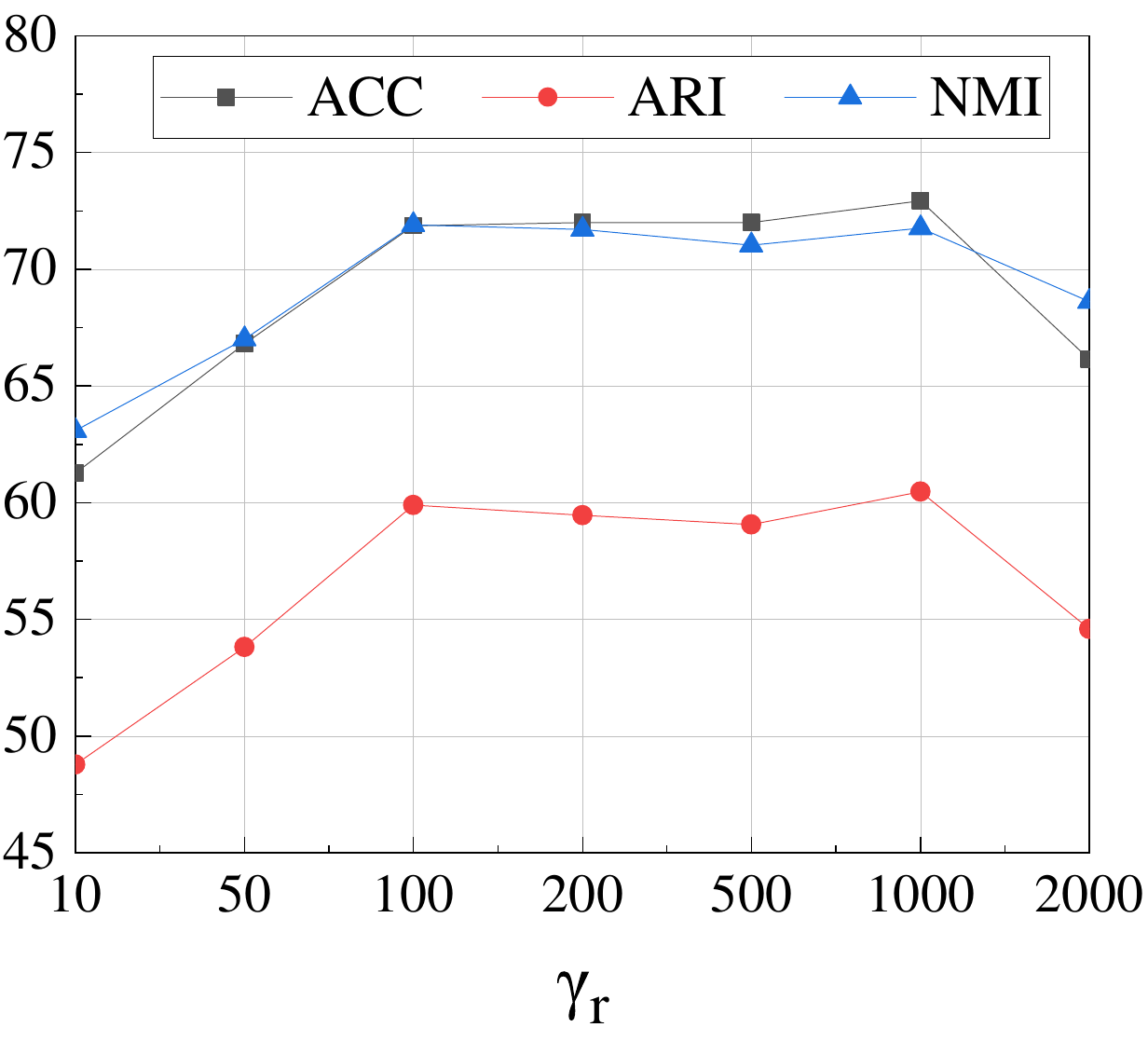}}

\caption{Sensitivity analysis of $\gamma_r$ .}
\label{fig:sens_gamma_r}
\end{figure}

\begin{figure}[!htb]
\vspace{-20px}
\centering
   
    \subfloat[\footnotesize $\gamma_h$ on STL10.]{
           \centering 
           \label{fig:stl10_gamma_h}  
           \includegraphics[width = .49\linewidth]{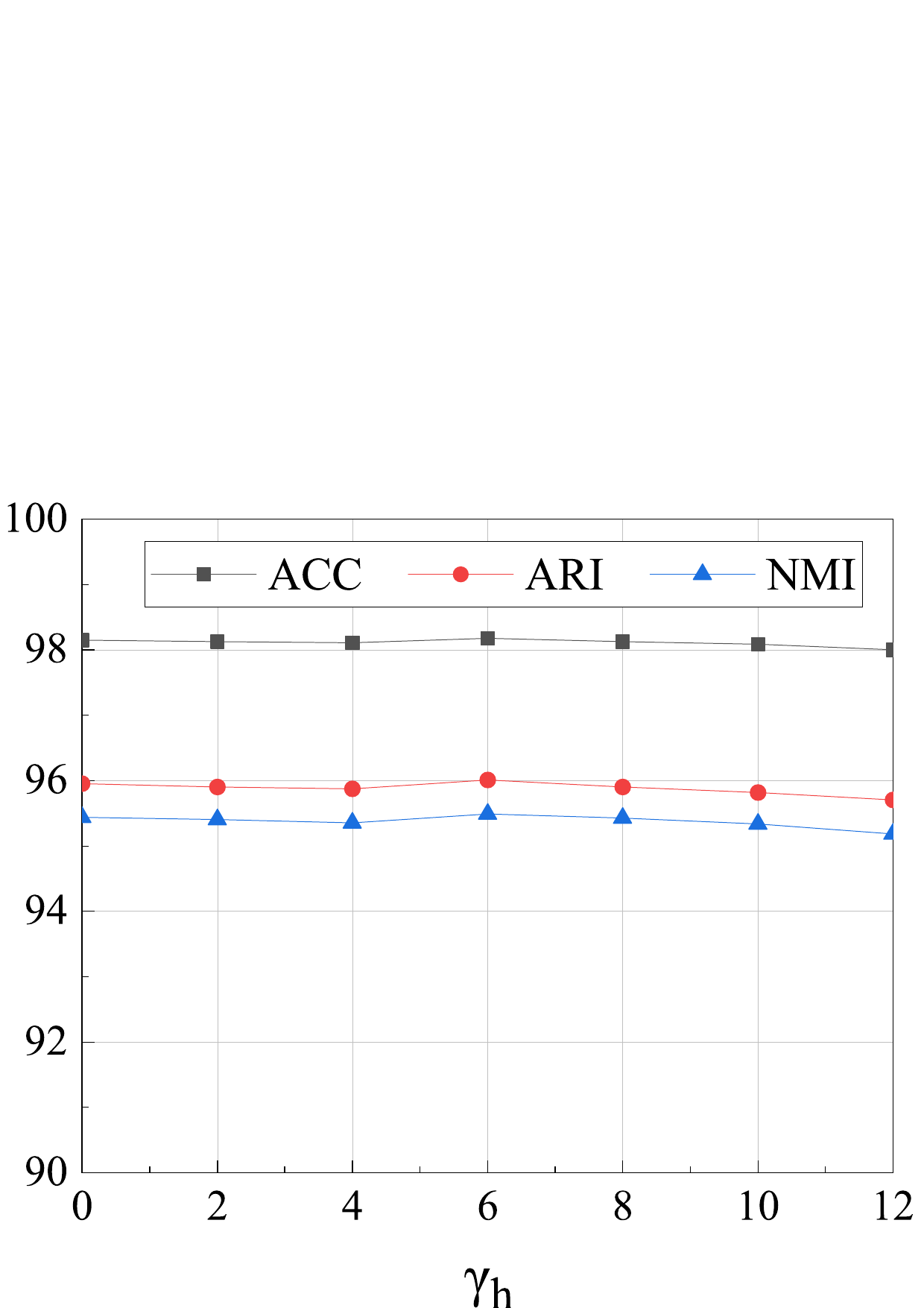}} 
    \subfloat[\footnotesize $\gamma_h$ on Imaganet-Dog.]{
           \centering 
           \label{fig:dog_gamma_h}  
           \includegraphics[width = .48\linewidth]{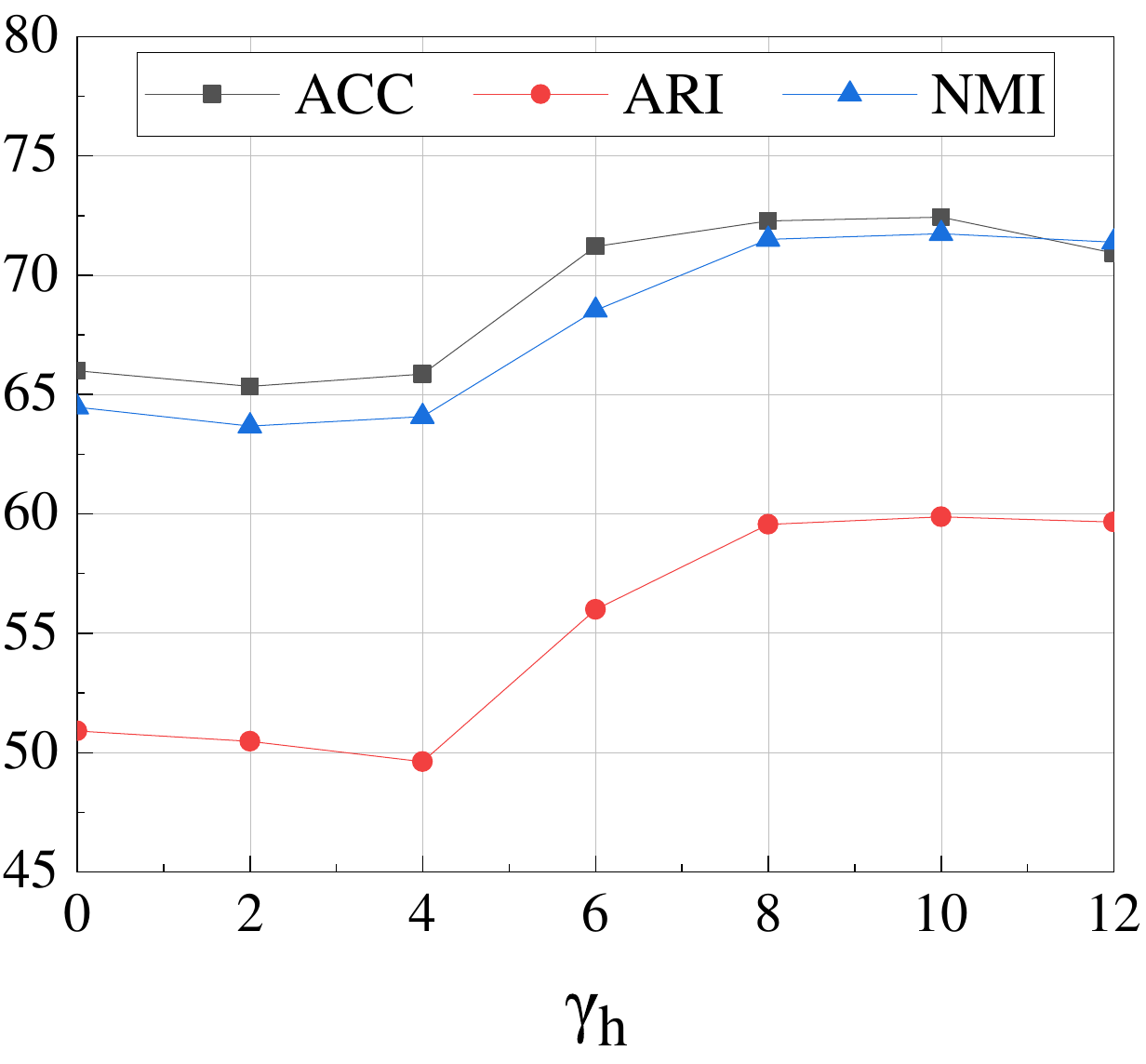}}

\caption{Sensitivity analysis of $\gamma_h$ .}
\label{fig:sens_gamma_h}
\end{figure}

\noindent \textbf{Sensitivity on hyperparameters $\tau_{ia}$ and $\tau_{pa}$.} 
$\tau_{ia}$ and $\tau_{pa}$ are the alignment temperature parameters to control the softness of the instance-level alignment and prototype-level alignment, separately. As shown in Figures~\ref{fig:sens_tau_si} and~\ref{fig:sens_tau_pa}, we can observe that the performances of our method remain relatively stable with changes to $\tau_{ia}$, but exhibit significant changes with variations in $\tau_{pa}$.

\begin{figure}[!htb]
\vspace{-20px}
\centering
    \subfloat[\footnotesize $\tau_{ia}$ on Cifar10.]{
           \centering 
           \label{fig:cifar10_alignment_si}  
           \includegraphics[width = .49\linewidth]{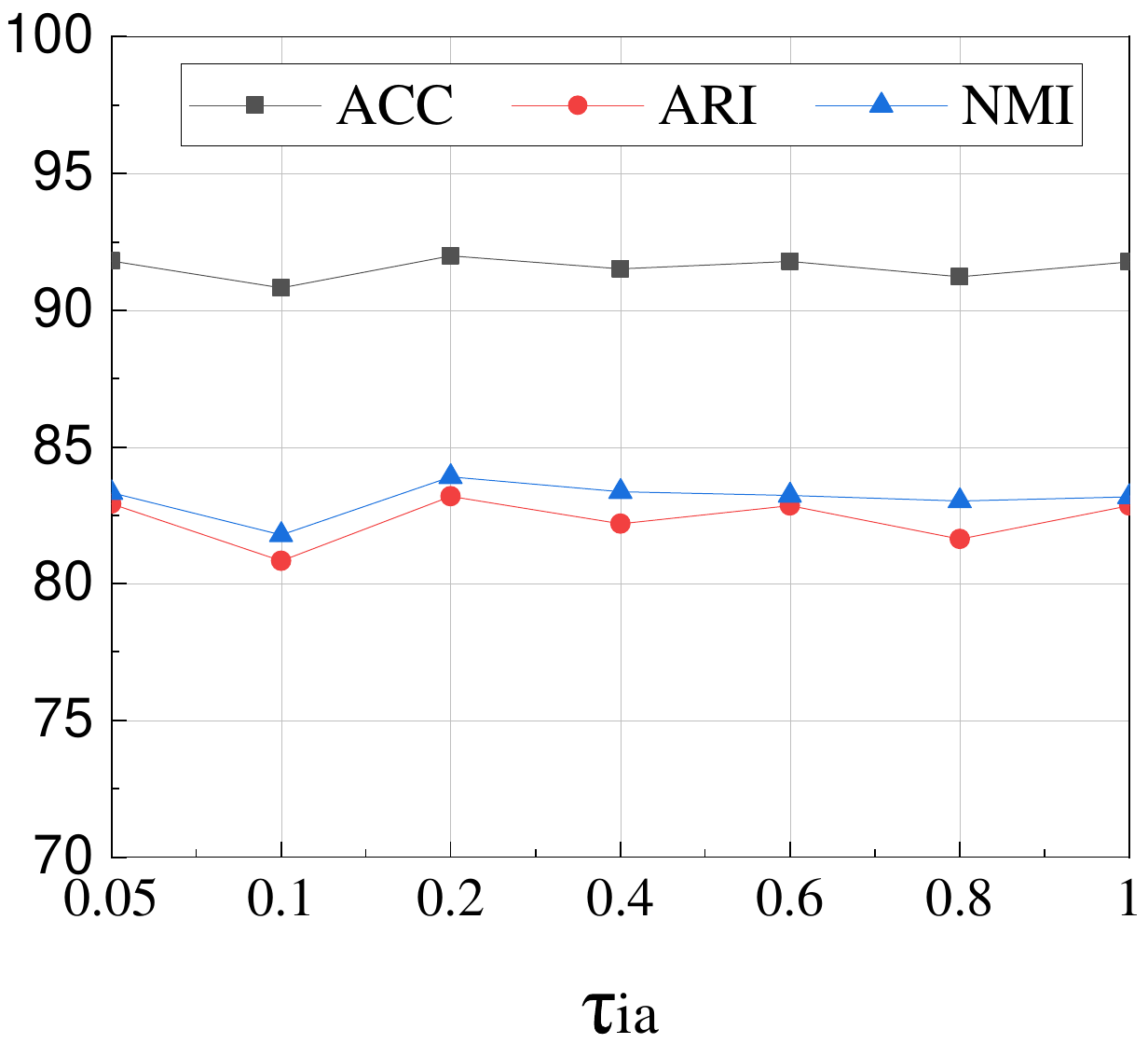}}
    \subfloat[\footnotesize $\tau_{ia}$ on ImageNet-Dogs.]{
            \centering 
            \label{fig:imagenet_dog_alignment_si}
            \includegraphics[width = .48\linewidth]{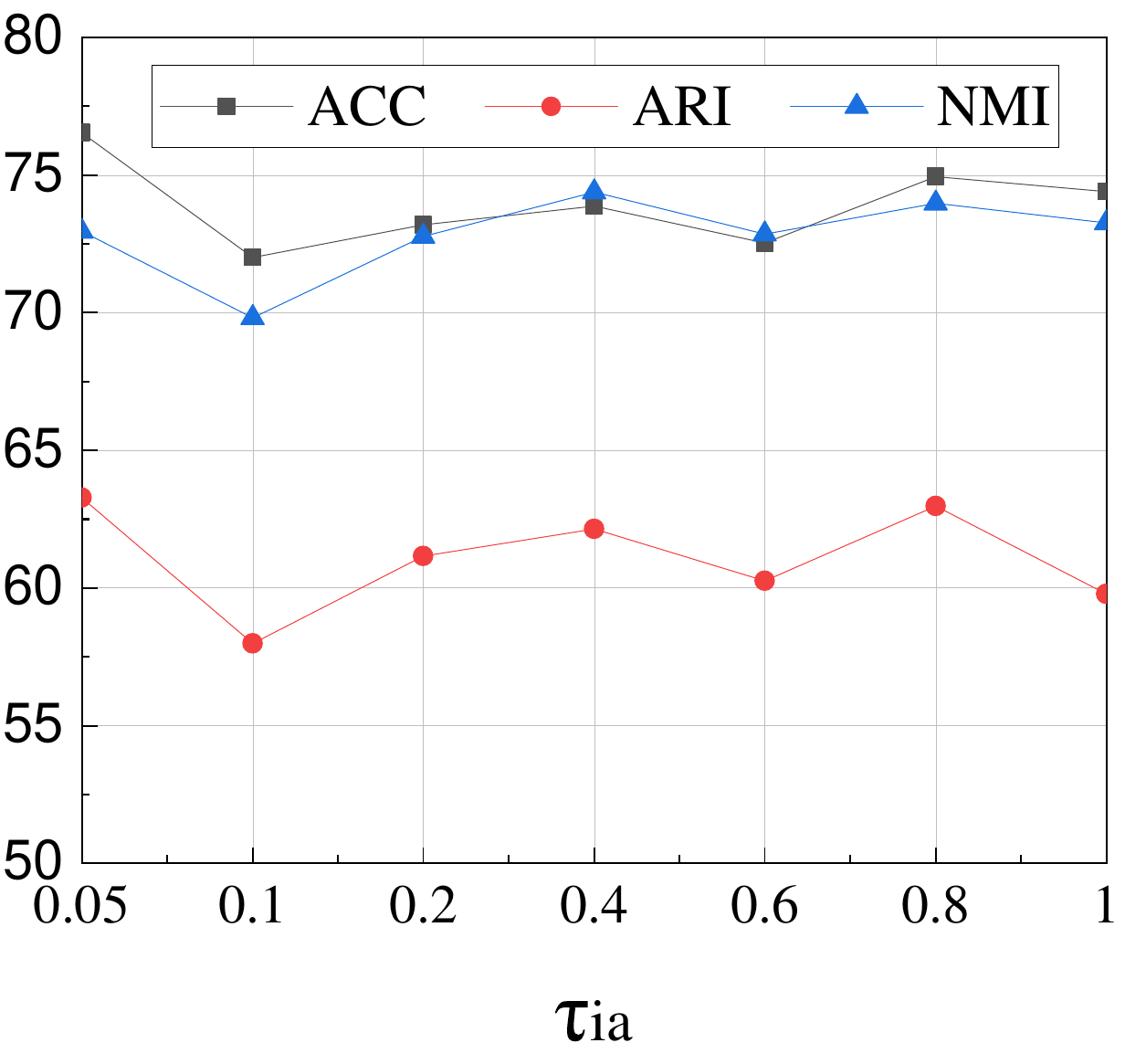}}
            
\caption{Sensitivity analysis of $\tau_{ia}$.}
\label{fig:sens_tau_si}
\end{figure}

\begin{figure}[!htb]
\vspace{-20px}
\centering
    \subfloat[\footnotesize $\tau_{pa}$ on Cifar10.]{
           \centering 
           \label{fig:cifar10_alignment_pa}  
           \includegraphics[width = .49\linewidth]{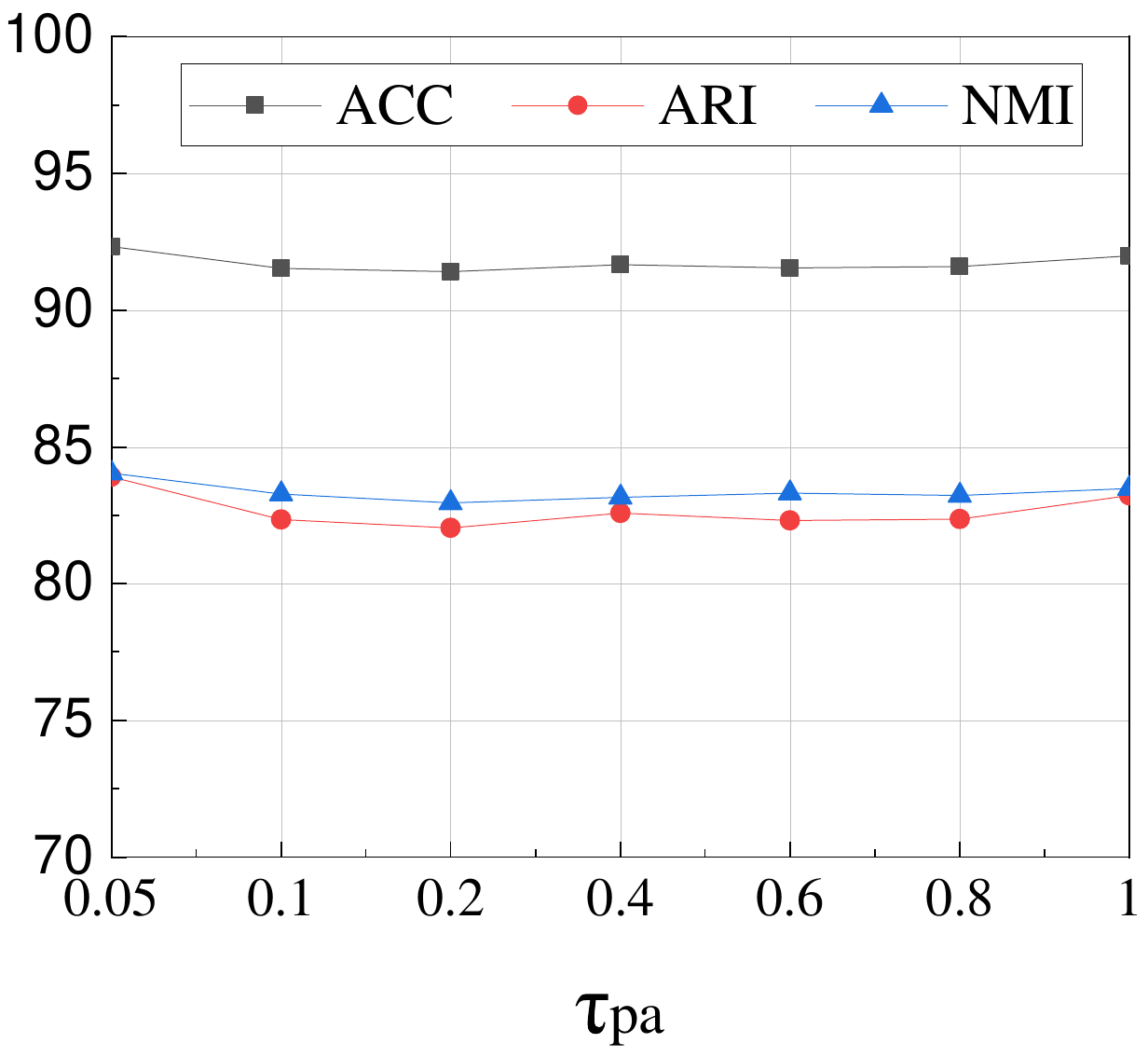}}
    \subfloat[\footnotesize $\tau_{pa}$ on ImageNet-Dogs.]{
            \centering 
            \label{fig:imagenet_dog_alignment_pa}
            \includegraphics[width = .48\linewidth]{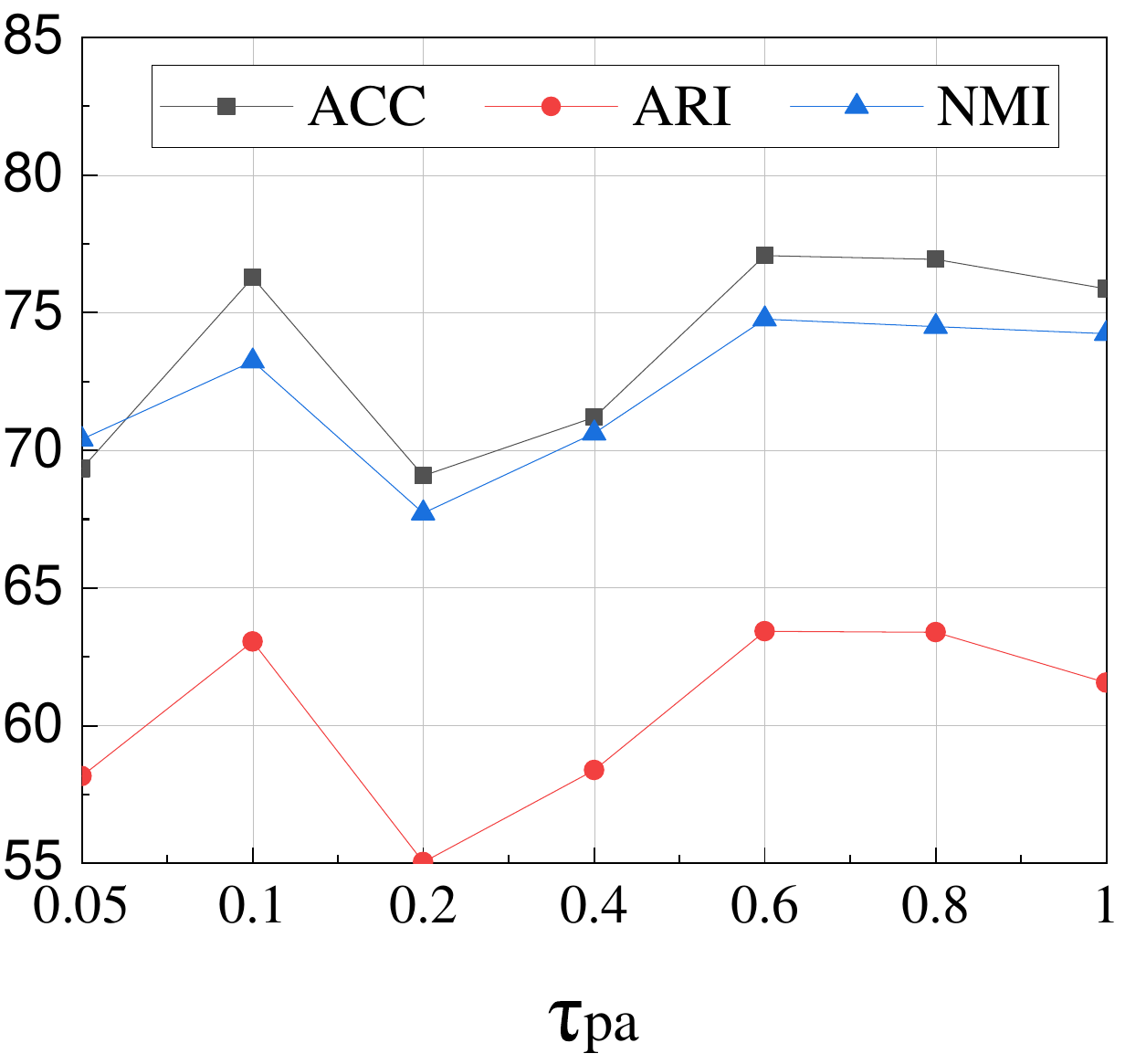}}
            
\caption{Sensitivity analysis of $\tau_{pa}$.}
\label{fig:sens_tau_pa}
\end{figure}

\subsection{Compare with Zero-shot Learning}
We compared MCA with two zero-shot learning methods, i.e., CLIP and MUST. Compared to CLIP, MUST improve CLIP by 3.3\% on Caltech101 and $18\%$ on UCF101. When compared to MUST, our method deteriorates MUST by -8.2\% on Caltech101 and -8.5\% on UCF101. Although the results show that our method performs worse than CLIP and MUST, our method has the advantage of not requiring class names as input, which is necessary for zero-shot learning. This makes our approach more versatile and applicable to a broader range of real-world scenarios.

\begin{figure}[!htb]
    \vspace{-30px}
     \centering
     \includegraphics[width=0.6\linewidth]{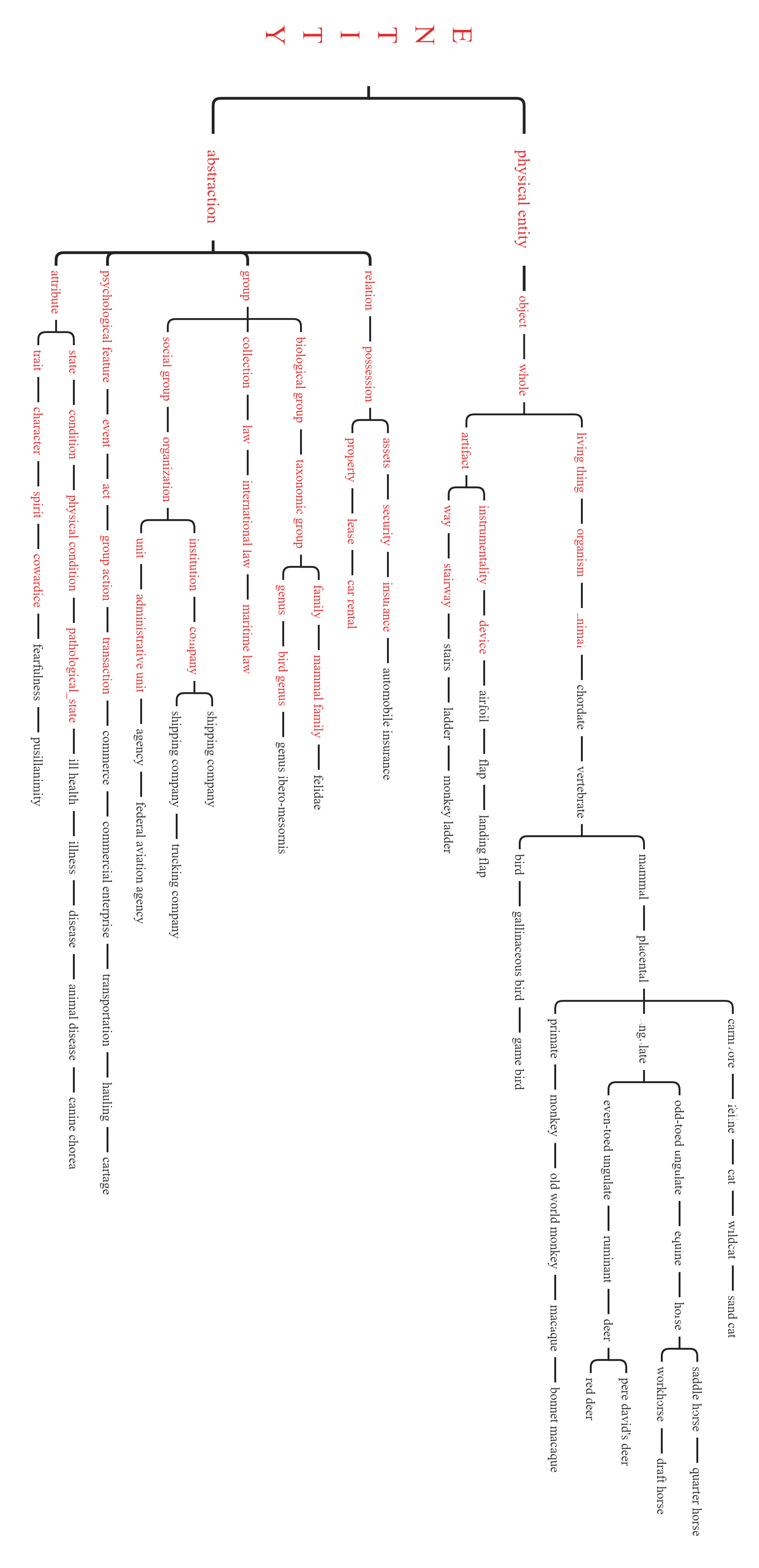}
     \caption{Example of hierarchy-based filtering result on STL10. The hierarchical semantic tree was built from WordNet~\cite{miller1995wordnet} and the top-6 levels of words in the tree (highlighted in red) were filtered.}    
     \label{fig:hierarchy_structure_stl10}
 \end{figure}

\begin{figure}[!htb]
    \vspace{-30px}
     \centering
     \includegraphics[width=0.4\textwidth]{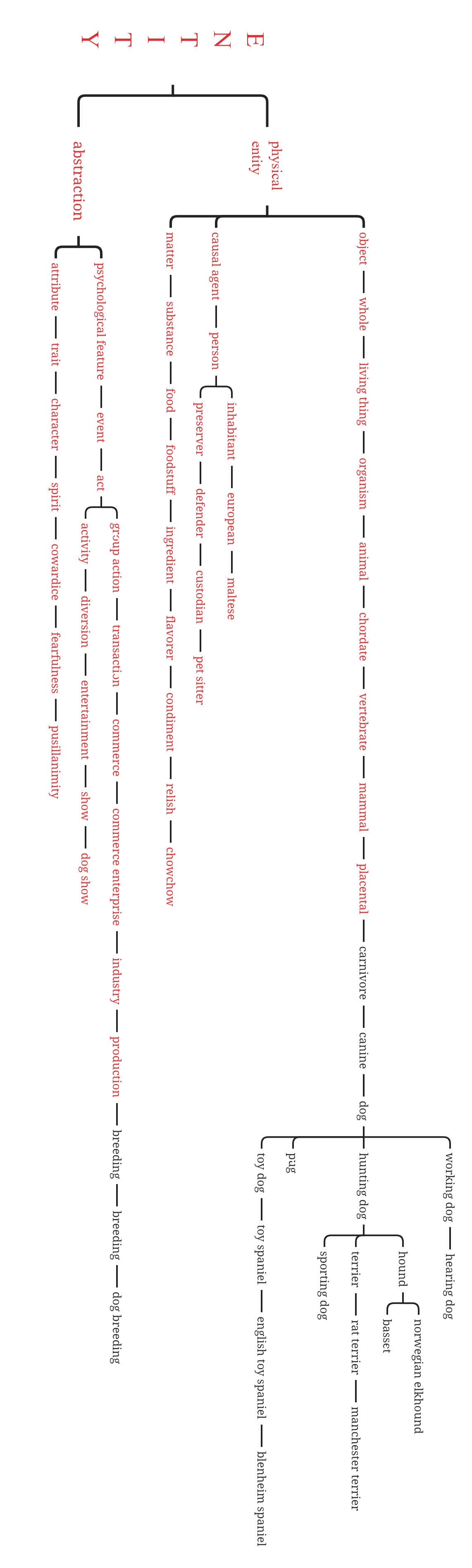}
     \caption{Example of hierarchy-based filtering result on ImageNet-Dogs. The hierarchical semantic tree was built from WordNet~\cite{miller1995wordnet} and the top-10 levels of words in the tree (highlighted in red) were filtered.}    
     \label{fig:hierarchy_structure_imagenet_dog}
\end{figure}

\end{document}